\icmltitlerunning{Achieving Instance-Optimality and Minimax-Optimality in Stochastic and Adversarial Linear Bandits}
\newcommand{\pref}[1]{\prettyref{#1}}
\newcommand{\savehyperref}[2]{\texorpdfstring{\hyperref[#1]{#2}}{#2}}
\definecolor{Green}{rgb}{0.13, 0.65, 0.3}
\DeclareMathOperator*{\argmin}{argmin} 
\newcommand{\Reg}{\text{\rm Reg}}
\newcommand{\ghp}{\textsc{GeometricHedge.P}\xspace}
\newcommand{\calA}{\mathcal{A}}
\newcommand{\calB}{\mathcal{B}}
\newcommand{\calS}{\mathcal{S}}
\newcommand{\calI}{\mathcal{I}}
\newcommand{\E}{\mathbb{E}}
\newcommand{\order}{\mathcal{O}}
\newcommand{\hatell}{\widehat{\ell}}
\newcommand{\Otil}{\widetilde{\order}}
\newcommand{\calP}{\mathcal{P}}
\newcommand{\calX}{\mathcal{X}}
\newcommand{\one}{\mathbbm{1}}
\newcommand{\inner}[1]{\langle#1\rangle}
\newcommand{\smallgap}{\vspace*{3pt}}
\newcommand{\OP}{\textbf{OP}\xspace}
\newcommand{\clip}{\text{Clip}}
\newcommand{\delt}{\kappa}
\newcommand{\tildeS}{\widetilde{S}}
\newcommand{\tildep}{\widetilde{p}}
\newcommand{\Catoni}{\textbf{Catoni}}
\newcommand{\Rob}{\text{Rob}}
\newcommand{\ellhat}{\widehat{\ell}}
\DeclareMathOperator{\Tr}{Tr}
\newcommand{\alg}{\textsc{BOTW}\xspace}
\newcommand{\algse}{\textsc{BOTW-SE}\xspace}
\newcommand{\term}[1]{\textsc{Term #1}\xspace}
\newcommand{\hatDelta}{\widehat{\Delta}}
\newcommand{\hatx}{\widehat{x}}
\newcommand{\dev}{\textsc{Dev}}
\newcommand{\logdt}{\lg}
\newtheorem*{theorem*}{Theorem}
\newtheorem{theorem}{Theorem}
\newtheorem{assumption}{Assumption}
\newtheorem{lemma}[theorem]{Lemma}
\newtheorem*{lemma*}{Lemma}
\newtheorem{definition}{Definition}
\theoremstyle{definition}
\definecolor{green}{rgb}{0.13, 0.8, 0.3}
\definecolor{purple}{rgb}{0.8, 0, 0.8}
\definecolor{lightblue}{rgb}{0.3, 0.6, 1.0}
\newcommand{\fT}{f_T}
\newcommand{\hmm}{\gamma_m}  
\newcommand{\htt}{\beta_t}
\newcommand{\htT}{\beta_T}
\newcommand{\gmm}{\gamma}
\newcommand{\gtt}{\beta}
\newcommand{\optconst}{2^{15}}
\begin{document}

\twocolumn[
\icmltitle{Achieving Near Instance-Optimality and Minimax-Optimality \\ in Stochastic and Adversarial Linear Bandits Simultaneously}



\icmlsetsymbol{equal}{*}

\begin{icmlauthorlist}
\icmlauthor{Chung-Wei Lee}{*}
\icmlauthor{Haipeng Luo}{*}
\icmlauthor{Chen-Yu Wei}{*}
\icmlauthor{Mengxiao Zhang}{*}
\icmlauthor{Xiaojin Zhang}{ii}
\end{icmlauthorlist}

\icmlaffiliation{*}{University of Southern California}
\icmlaffiliation{ii}{The Chinese University of Hong Kong}
\icmlcorrespondingauthor{Mengxiao Zhang}{mengxiao.zhang@usc.edu}
\icmlkeywords{Machine Learning, ICML}

\vskip 0.3in
]



\printAffiliationsAndNotice{* Authors are listed in alphabetical order.} 

\begin{abstract}
In this work, we develop linear bandit algorithms that automatically adapt to different environments.
By plugging a novel loss estimator into the optimization problem that characterizes the instance-optimal strategy, our first algorithm not only achieves nearly instance-optimal regret in stochastic environments, but also works in corrupted environments with additional regret being the amount of corruption,
while the state-of-the-art~\citep{li2019stochastic} achieves neither instance-optimality nor the optimal dependence on the corruption amount.
Moreover, by equipping this algorithm with an adversarial component and carefully-designed testings, our second algorithm {\it additionally} enjoys minimax-optimal regret in completely adversarial environments, which is the first of this kind to our knowledge.
Finally, all our guarantees hold with high probability, while existing instance-optimal guarantees only hold in expectation. 
\end{abstract}


\section{Introduction}
We consider the linear bandit problem with a finite and fixed action set.
In this problem, the learner repeatedly selects an action from the action set and observes her loss whose mean is the inner product between the chosen action and an unknown loss vector determined by the environment. 
The goal is to minimize the \emph{regret}, which is the difference between the learner's total loss and the total loss of the best action in hindsight.
Two standard environments are heavily-studied in the literature: the stochastic environment and the adversarial environment.
In the stochastic environment, the loss vector is fixed over time,
and we are interested in instance-optimal regret bounds of order $o(T^\epsilon)$ for any $\epsilon>0$, where $T$ is the number of rounds and $o(\cdot)$ hides some instance-dependent constants. 
On the other hand, in the adversarial environment, the loss vector can be arbitrary in each round, and we are interested in minimax-optimal regret bound  $\Otil(\sqrt{T})$, where $\Otil(\cdot)$ hides the problem dimension and logarithmic factors in $T$.

While there are many algorithms obtaining such optimal bounds in either environment (e.g., \citep{lattimore2017end} in the stochastic setting and \citep{bubeck2012towards} in the adversarial setting), a natural question is whether there exists an algorithm achieving both guarantees simultaneously without knowing the type of the environment.
Indeed, the same question has been studied extensively in recent years for the special case of multi-armed bandits where the action set is the standard basis~\citep{bubeck2012best,seldin2014one,auer2016algorithm,seldin2017improved,wei2018more,zimmert2019optimal}.
Notably, \citet{zimmert2019optimal} developed an algorithm that is optimal up to universal constants for both stochastic and adversarial environments,
and the techniques have been extended to combinatorial semi-bandits \citep{zimmert2019beating} and finite-horizon tabular Markov decision processes \citep{jin2020simultaneously}.  
Despite all these advances, however, it is still open whether similar results can be achieved for general linear bandits.

On the other hand, another line of recent works study the robustness of stochastic linear bandit algorithms from a different perspective and consider a corrupted setting where 
an adversary can corrupt the stochastic losses up to some limited amount $C$.
This was first considered in multi-armed bandits \citep{lykouris2018stochastic,gupta2019better,zimmert2019optimal,zimmert2021tsallis} and later extended to linear bandits \citep{li2019stochastic} and Markov decision processes \citep{lykouris2019corruption, jin2020simultaneously}.
Ideally, the regret of a robust stochastic algorithm should degrade with an additive term $\order(C)$ in this setting,
which is indeed the case in~\citep{gupta2019better,zimmert2019optimal, zimmert2021tsallis, jin2020simultaneously} for multi-armed bandits or Markov decision processes, but is not achieved yet for general linear bandits.


In this paper, we make significant progress in this direction and develop
algorithms with near-optimal regret simultaneously for different environments.
Our main contributions are as follows.
\begin{itemize}
    \item In \pref{sec: warmup}, we first introduce \pref{alg:REOLB}, a simple algorithm that achieves $\order(c(\calX, \theta)\log^2 T+C)$ regret with high probability in the corrupted setting,\footnote{In the texts, $\order(\cdot)$ often hides lower-order terms (in terms of $T$ dependence) for simplicity. However, in all formal theorem/lemma statements, we use $\order(\cdot)$ to hide universal constants only.} where $c(\calX,\theta)$ is an instance-dependent quantity such that the instance-optimal bound for the stochastic setting (i.e. $C=0$) is $\Theta(c(\calX, \theta)\log T)$. This result significantly improves~\citep{li2019stochastic} which only achieves
    $\order\big( \frac{d^6\log^2 T}{\Delta_{\min}^2} + \frac{d^{2.5}C\log T}{\Delta_{\min}}\big)$
    where $d$ is the dimension of the actions and $\Delta_{\min}$ is the minimum sub-optimality gap satisfying
    $c(\calX, \theta)\leq \order\big(\frac{d}{\Delta_{\min}}\big)$.
    Moreover, \pref{alg:REOLB} also ensures an instance-independent bound $\Otil(d\sqrt{T}+C)$ that some existing instance-optimal algorithms fail to achieve even when $C=0$ (e.g.,~\citep{jun2020crush}).
    
    \item In \pref{sec: alg}, based on \pref{alg:REOLB}, we further propose \pref{alg: BOBW} which not only achieves nearly instance-optimal regret $\order(c(\calX, \theta)\log^2 T)$ in the stochastic setting, but also achieves the minimax optimal regret 
  $\Otil(\sqrt{T})$ in the adversarial setting (both with high probability).
    To the best of our knowledge, this is the first algorithm that enjoys the best of both worlds for linear bandits.
    Additionally, the same algorithm also guarantees $\Otil\big(\frac{d\log^2T}{\Delta_{\min}}+C\big)$ in the corrupted setting, which is slightly worse than  \pref{alg:REOLB} but still significantly better than~\citep{li2019stochastic}. 
    
    \item Finally, noticing the extra $\log T$ factor in our bound for the stochastic setting, in \pref{app:lower_bound} we also prove that this is in fact inevitable if the same algorithm simultaneously achieves sublinear regret in the adversarial setting with high probability (which is the case for \pref{alg: BOBW}). This generalizes the result of \citep{auer2016algorithm} for two-armed bandits.  
    
\end{itemize}

At a high level, \pref{alg:REOLB} utilizes a well-known optimization problem (that characterizes the lower bound in the stochastic setting) along with a robust estimator to determine a randomized strategy for each round.
This ensures the near instance-optimality of the algorithm in the stochastic setting, and also the robustness to corruption when combined with a doubling trick. 
To handle the adversarial setting as well, \pref{alg: BOBW} switches between an adversarial linear bandit algorithm with high-probability regret guarantees and a variant of \pref{alg:REOLB}, depending on the results of some carefully-designed statistical tests on the stochasticity of the environment.


\section{Related Work}
\paragraph{Linear Bandits.}
Linear bandits is a classic model to study sequential decision problems. 
The stochastic setting dates back to~\citep{abe1999associative}.
\citet{auer2002using} first used the optimism principle to solve this problem.
Later, several algorithms were proposed based on confidence ellipsoids, further improving the regret bounds \citep{dani2008stochastic,rusmevichientong2010linearly,abbasi2011improved,chu2011contextual}.  

On the other hand, the adversarial setting was introduced by~\citet{awerbuch2004adaptive}.
\citet{dani2008price} achieved the first $\order(\sqrt{T})$ expected regret bound using the Geometric Hedge algorithm (also called Exp2) with uniform exploration over a barycentric spanner. \citet{abernethy2008competing} proposed the first computational efficient algorithm that achieves $\widetilde{\order}(\sqrt{T})$ regret using the Following-the-Regularized-Leader framework. 
\citet{bubeck2012towards} further tightened the bound by improving Exp2 with John's exploration.
Our \pref{alg: BOBW} makes use of any adversarial linear bandit algorithm with high-probability guarantees (e.g.,~\citep{bartlett2008high,lee2020bias}) in a black-box manner.

\paragraph{Instance Optimality for Bandit Problems.}
In the stochastic setting,
\citet{lattimore2017end} showed that, unlike multi-armed bandits, 
optimism-based algorithms or Thompson sampling can be arbitrarily far from optimal in some simple instances.
They proposed an algorithm also based on the lower bound optimization problem to achieve instance-optimality,
but their algorithm is deterministic and cannot be robust to an adversary.
Instance-optimality was also considered in other related problems lately such as linear contextual bandits \citep{hao2020adaptive,Tirinzoni2020}, partial monitoring \citep{komiyama2015regret}, and structured bandits \citep{combes2017minimal,jun2020crush}. 
Most of these works only consider expected regret, while our guarantees all hold with high probability. 

\paragraph{Best-of-Both-Worlds.}
Algorithms that are optimal for both stochastic and adversarial settings were studied in multi-armed bandits \citep{bubeck2012best,seldin2014one,auer2016algorithm,seldin2017improved,wei2018more,zimmert2019optimal}, semi-bandits \citep{zimmert2019beating}, and Markov Decision Processes \citep{jin2020simultaneously}.
On the other hand, linear bandits, a generalization of multi-armed bandits and semi-bandits, is much more challenging and currently underexplored in this direction. 
To the best of our knowledge, our algorithm is the first that guarantees near-optimal regret bounds in both stochastic and adversarial settings simultaneously.

\paragraph{Stochastic Bandits with Corruption.}
\citet{lykouris2018stochastic} first considered the corrupted setting for multi-armed bandits. 
Their results were improved by \citep{gupta2019better, zimmert2019optimal, zimmert2021tsallis} and extended to linear bandits \citep{li2019stochastic,bogunovic2020stochastic} and reinforcement learning \citep{lykouris2019corruption}.
As mentioned, our results significantly improve those of~\citep{li2019stochastic} (although their corruption model is slightly more general than ours; see~\pref{sec:prelim}).
On the other hand, the results of \citep{bogunovic2020stochastic} are incomparable to ours, because they consider a setting where the adversary has even more power and can decide the corruption after seeing the chosen action.
Finally, we note that \citep[Theorem 3.2]{lykouris2019corruption} considers episodic linear Markov decision processes in the corrupted setting, which can be seen as a generalization of linear bandits. However, this result is highly suboptimal when specified to linear bandits ($\Omega(C^2\sqrt{T})$ ignoring other parameters).


\section{Preliminaries}\label{sec:prelim}

Let $\calX\subset \mathbb{R}^d$ be a finite set that spans $\mathbb{R}^d$.
Each element in $\calX$ is called an \emph{arm} or an \emph{action}.
We assume that $\|x\|_2\le 1$ for all $x\in\calX$.
A linear bandit problem proceeds in $T$ rounds.
In each round $t=1, \ldots, T$, the learner selects an action $x_t \in \calX$.
Simultaneously, the environment decides a hidden loss vector $\ell_t\in\mathbb{R}^d$
and generates some independent zero-mean noise $\epsilon_t(x)$ for each action $x$.
Afterwards, the learner observes her loss $y_t =  \inner{x_t, \ell_t}+\epsilon_t(x_t)$.
We consider three different types of settings: stochastic, corrupted, and adversarial,
explained in detail below.

In the stochastic setting, $\ell_t$ is fixed to some unknown vector $\theta\in\mathbb{R}^d$.
We assume that there exists a unique optimal arm $x^*\in\calX$ such that $\inner{x^*,\theta}< \min_{x^* \neq x\in\calX} \inner{x,\theta}$,
and define for each $x\in\calX$,  its \emph{sub-optimality gap} as $\Delta_x = \inner{x-x^*,\theta}$.
Also denote the minimum gap $\min_{x\neq x^*}\Delta_x$ by $\Delta_{\min}$. 

The corrupted setting is a generalization of the stochastic setting, where in addition to a fixed vector $\theta$, the environment also decides a corruption vector $c_t \in \mathbb{R}^d$ for each round (before seeing $x_t$) so that $\ell_t = \theta + c_t$.\footnote{%
In other words, the environment corrupts the observation $y_t$ by adding $\inner{x_t, c_t}$. The setting of~\citep{li2019stochastic} is slightly more general with the corruption on $y_t$ being $c_t(x_t)$ for some function $c_t$ that is not necessarily linear.\label{fn:corruption_model}} 
We define the total amount of corruption as $C=\sum_t\max_{x\in\calX}|\inner{x, c_t}|$.
The stochastic setting is clearly a special case with $C=0$.
In both of these settings, we define the regret as
$
\Reg(T)=\max_{x\in\calX}\sum_{t=1}^T\inner{x_t-x,\theta} = \sum_{t=1}^T \Delta_{x_t}
$.

Finally, in the adversarial setting, $\ell_t$ can be chosen arbitrarily (possibly dependent on the learner's algorithm and her previously chosen actions). 
The difference compared to the corrupted setting (which also has potentially arbitrary loss vectors) is that the regret is now defined in terms of $\ell_t$: $\Reg(T)=\max_{x\in\calX}\sum_{t=1}^T\inner{x_t-x,\ell_t}$.

In all settings, we assume $\inner{x,\theta}, \inner{x,c_t}, \inner{x, \ell_t}$ and $y_t$ are all in $[-1,1]$ for all $t$ and $x\in\calX$.  We also denote $\inner{x,\ell_t}$ by $\ell_{t,x}$ and similarly $\inner{x,c_t}$ by $c_{t,x}$.




It is known that the minimax optimal regret in the adversarial setting is $\Theta(d\sqrt{T})$~\citep{dani2008price,bubeck2012towards}.
The instance-optimality in the stochastic case, on the other hand, is slightly more complicated.
Specifically, an algorithm is called \emph{consistent} if it guarantees $\E[\Reg(T)]=o(T^\epsilon)$ for any $\theta$, $\calX$, and $\epsilon>0$.
Then, a classic lower bound result (see e.g.,~\citep{lattimore2017end}) states that:
for a particular instance $(\calX, \theta)$, all consistent algorithms satisfy:\footnote{The original proof is under the Gaussian noise assumption. To meet our boundedness assumption on $y_t$, it suffices to consider the case when $y_t$ is a Bernoulli random variable, which only affects the constant of the lower bound.}
\begin{align*}
    \liminf_{T\to\infty}\frac{\E[\Reg(T)]}{\log T}\ge \Omega(c(\calX,\theta)),
\end{align*}
where $c(\calX,\theta)$ is the objective value of the following optimization problem:
\begin{align}
    &\inf_{N\in[0, \infty)^\calX} \sum_{x\in \calX\backslash\{x^*\}}N_x\Delta_x \label{eqn: opt-c-obj-main}\\
    \text{subject to\ \ } & \|x\|^2_{H^{-1}(N)}\leq \frac{\Delta_x^2}{2},\quad\forall {x\in \calX\backslash\{x^*\}} \label{eqn: opt-c-con-main}
\end{align}
and $H(N)=\sum_{x\in\calX}N_x xx^\top$ (the notation $\|x\|_{M}$ denotes the quadratic norm $\sqrt{x^\top M x}$ with respect to a matrix $M$).  
This implies that the best instance-dependent bound for $\Reg(T)$ one can hope for is $\order(c(\calX, \theta)\log T)$
(and more generally $\order(c(\calX, \theta)\log T + C)$ for the corrupted setting).
It can be shown that $c(\calX, \theta)\leq \order\left(\frac{d}{\Delta_{\min}}\right)$ (see \pref{lem: opt-constant}), but this upper bound can be arbitrarily loose as shown in~\citep{lattimore2017end}.

The solution $N_x$ in the optimization problem above specifies the least number of times action $x$ should be drawn in order to distinguish between the present environment and any other alternative environment with a different optimal action. Many previous instance-optimal algorithms try to match their number of pulls for $x$ to the solution $N_x$ under some estimated gap $\hatDelta_x$~\citep{lattimore2017end, hao2020adaptive, jun2020crush}. While these algorithms are asymptotically optimal, their regret usually grows linearly when $T$ is small \citep{jun2020crush}. Furthermore, they are all deterministic algorithms and by design cannot tolerate corruptions. We will show how these issues can be addressed in the next section. 

\paragraph{Notations. } We use $\calP_{\calS}$ to denote the probability simplex over $\calS$: $\left\{p\in\mathbb{R}_{\geq 0}^{|\calS|}: \sum_{s\in\calS} p_s=1\right\}$, and define the clipping operator $\clip_{[a,b]}(v)$ as $\min(\max(v,a), b)$ for $a\leq b$. 


\section{A New Algorithm for the Corrupted Setting}\label{sec: warmup}

In this section, we focus on the corrupted setting (hence covering the stochastic setting as well). We introduce a new algorithm that achieves with high probability an instance-dependent regret bound of $\order(c(\calX, \theta)\log^2 T+C)$ for large $T$ and also an instance-independent regret bound of $\Otil(d\sqrt{T}+C)$ for any $T$. 
This improves over previous instance-optimal algorithms \citep{lattimore2017end, hao2020adaptive, jun2020crush} from several aspects:
1) first and foremost, our algorithm handles corruption optimally with extra $\order(C)$ regret, while previous algorithms can fail completely due to their deterministic nature;
2) previous bounds only hold in expectation;
3) previous algorithms might suffer linear regret when $T$ is small, while ours is always $\Otil(d\sqrt{T}+C)$ for any $T$.
The price we pay is an additional $\log T$ factor in the instance-dependent bound. 
On the other hand, compared to the work of \citep{li2019stochastic} that also covers the same corrupted setting and achieves $\order\left( \frac{d^6\log^2 T}{\Delta_{\min}^2} + \frac{d^{2.5}C\log T}{\Delta_{\min}} \right)$, our results are also significantly better (recall $c(\calX, \theta)\leq \order(d/\Delta_{\min})$),
although as mentioned in \pref{fn:corruption_model}, their results hold for an even more general setting with non-linear corruption.

\afterpage{
\begin{algorithm}
    \nl \textbf{Input}: $\delta<0.1$ \\
    \nl $t\leftarrow 1$. \\
    \nl \For{$m=0,1,2\ldots$}{
        \nl Define block $\calB_m = \{t,\  t+1, \ldots,\  t+2^m-1\}$.
        \\
        \nl Find a randomized strategy $p_m=\OP(2^m, \hatDelta_m)$   \label{line: solving OP in alg 1} with \\
        $\hatDelta_{m,x}= \begin{cases}
        0 &\text{if $m=0$,} \\
        \Rob_{m-1,x}-\min_{x'\in \mathcal{X}}\Rob_{m-1,x'} &\text{else.}
        \end{cases}
        $ \\
        \nl   Compute second moment $S_m = \sum_{x\in \calX}p_{m,x}xx^\top$. \\
        \nl \While{$t\in \calB_m$}{
            \nl Sample $x_t\sim p_m$ and observe $y_t$. \\
            \nl Compute for all $x\in\calX$, $\ellhat_{t,x}=x^\top S_m^{-1}x_ty_t$.  \label{line: construct loss estimator alg 1}\\
            \nl $t\leftarrow t+1$.
        }
        \nl \For{$x\in \mathcal{X}$}{
            \nl Construct robust loss estimators \label{line: construct robust alg 1} 
            \[
            \Rob_{m,x}=\clip_{[-1,1]} \left(\Catoni_{\alpha_x}\left(\big\{\ellhat_{\tau,x}\big\}_{\tau\in\calB_m}\right)\right)
            \]
            with 
           $\alpha_x=\sqrt{\frac{4\log(2^m|\calX|/\delta)}{2^{m}\cdot\|x\|^2_{S_{m}^{-1}}+2^m}}$. 
        }
    }
    \caption{Randomized Instance-optimal Algorithm}\label{alg:REOLB}
\end{algorithm}
\begin{figure}[h!]
\begin{framed}
\textbf{OP}$(t,\hatDelta)$: 
return any minimizer $p^*$ of the following:
\begin{align}
    &\min_{p\in \calP_{\mathcal{X}}}\sum_xp_x \hatDelta_{x},\label{eqn: opt-2-objective}\\ 
    \text{s.t.\ \ } 
    &\|x\|_{S(p)^{-1}}^2 \le \frac{t\hatDelta_{x}^2}{\htt}+4d, \ \ \forall {x\in \calX},\label{eqn: opt-2-constraint}
\end{align}
where $ S(p)=\sum_{x\in \calX}p_{x}xx^\top$ and $\htt=\optconst\log\frac{t|\calX|}{\delta}$. \\
\end{framed}
\caption{Optimization Problem (OP)}
\label{fig: op}
\end{figure}
\begin{figure}[h!]
\begin{framed}
$\textbf{Catoni}_{\alpha}\left(\{X_1, X_2, \ldots, X_n\}\right):$ 
return $\widehat{X}$, the unique root of the function $f(z) = \sum_{i=1}^n \psi(\alpha(X_i-z))$
where
$
\psi(y) = \begin{cases}
\ln(1+y+y^2/2), &\text{if $y\geq0$,} \\
-\ln(1-y+y^2/2), &\text{else.}
\end{cases}
$
\end{framed}
\caption{Catoni's Estimator}\label{fig:catoni}
\end{figure}
}

Our algorithm is presented in \pref{alg:REOLB}, which proceeds in blocks of rounds whose length grows in a doubling manner ($2^0, 2^1, \ldots$). At the beginning of block $m$ (denoted as $\calB_m$), we compute a distribution $p_m$ over actions by solving an optimization problem \OP (\pref{fig: op}) using the empirical gap $\hatDelta_{m,x}$ estimated in the previous block (\pref{line: solving OP in alg 1}). Then we use $p_m$ to sample actions for the entire block $m$, and construct an unbiased loss estimator $\hatell_{t,x}$ in every round for every action $x$ (\pref{line: construct loss estimator alg 1}). At the end of each block $m$, we use $\{\ellhat_{\tau,x}\big\}_{\tau\in\calB_m}$ to construct a \emph{robust loss estimator} $\Rob_{m,x}$ for each action (\pref{line: construct robust alg 1}), which will then be used to construct $\hatDelta_{m+1, x}$ for the next block. 
We next explain the optimization problem \OP and the estimators in detail.

\OP is inspired by the lower bound optimization (\pref{eqn: opt-c-obj-main} and \pref{eqn: opt-c-con-main}), where we normalize the pull counts $N$ as a distribution $p$ over the arms such that for a large $m$, $p_{m,x}\approx \frac{N_x}{2^m}$ holds for $x \neq x^*$.
One key difference between our algorithm and previous ones~\citep{lattimore2017end, hao2020adaptive, jun2020crush} is exactly that we select actions randomly according to these distributions, while they try to deterministically match the pull count of each arm to $N$.
Our randomized strategy not only prevents the environment from exploiting the knowledge on the learner's choices, but also allows us to construct unbiased estimator $\ellhat_{t,x}$ (\pref{line: construct loss estimator alg 1}) following standard adversarial linear bandit algorithms~\citep{dani2008price, bubeck2012towards}.
In fact, as shown in our analysis, the variance of the estimator $\ellhat_{t,x}$ is exactly bounded by $\|x\|_{S_m^{-1}}^2$ (for $t\in\calB_m$), which is in turn bounded in terms of the sub-optimality gap of $x$ in light of the constraint \pref{eqn: opt-2-constraint}.
The similar idea of imposing explicit constraints on the variance of loss estimators appears before in for example~\citep{dudik2011efficient, agarwal2014taming} for contextual bandits.
Finally, we point out that $\OP$ always has a solution due to the additive term $4d$ in \pref{eqn: opt-2-constraint} (see \pref{lem: exist-d-distribution}), and it can be solved efficiently by standard methods since \pref{eqn: opt-2-constraint} is a convex constraint.

Another important ingredient of our algorithm is the robust estimator $\Rob_{m,x}$, 
which is a clipped version of the Catoni's esimator~\citep{catoni2012challenging} constructed using all the unbiased estimators $\{\hatell_{\tau,x}\}_{\tau\in\calB_m}$ from this block for action $x$ (\pref{fig:catoni}).
From a technical perspective, this avoids a lower-order term in Bernstein-style concentration bounds and is critical for our analysis.
We in fact also believe that this is necessary since there is no explicit regularization on the magnitude of $\hatell_{t,x}$, and it can indeed have a heavy-tailed distribution.
While other robust estimators are possible, we use the Catoni's estimator which was analyzed in \citep{wei2020taking} for non-i.i.d. random variables (again important for our analysis).

The following theorem summarizes the nearly instance-optimal regret bound of \pref{alg:REOLB}.
\begin{theorem}\label{thm: sto-alg-guarantee}
In the corrupted setting, \pref{alg:REOLB} guarantees that with probability at least $1-\delta$,
\begin{align*}
    \Reg(T) &= \order\Bigg(c(\calX, \theta)\log T\log\frac{T|\calX|}{\delta}+M^*\log^\frac{3}{2}\frac{1}{\delta} \\
    &\qquad \qquad  + C + d\sqrt{\frac{C}{\Delta_{\min}}} \log\frac{C|\calX|}{\Delta_{\min}\delta} \Bigg), 
\end{align*}
where 
$M^*$ is some constant that depends on $\calX$ and $\theta$ only. 
\end{theorem}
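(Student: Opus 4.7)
The plan is to decompose the regret across the doubling blocks $\calB_m$ and analyze three regimes---a warm-up phase where the estimated gaps are still unreliable, an asymptotic phase where they are accurate, and an additive correction for corruption. Throughout, I would work on a single high-probability event under which all the concentration bounds below hold simultaneously via a union bound with $\delta_m\asymp \delta/2^m$.

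The first ingredient is a uniform concentration bound for the robust estimator $\Rob_{m,x}$. Conditional on the state at the start of block $m$, each $\ellhat_{t,x}=x^\top S_m^{-1}x_t y_t$ is unbiased for $\langle x,\theta+c_t\rangle$ with conditional second moment at most $\|x\|_{S_m^{-1}}^2+1$. Applying the Catoni analysis for non-i.i.d.\ sequences from \citep{wei2020taking} with the parameter $\alpha_x$ chosen in \pref{alg:REOLB}, and noting that the clipping into $[-1,1]$ is harmless since $\langle x,\theta\rangle\in[-1,1]$, I would get
\[\bigl|\Rob_{m,x}-\langle x,\theta\rangle\bigr| \le \order\!\left(\sqrt{\tfrac{(\|x\|_{S_m^{-1}}^2+1)\log(|\calX|/\delta_m)}{2^m}}\right)+\tfrac{1}{2^m}\sum_{t\in\calB_m}|c_{t,x}|.\]
The triangle inequality then yields $|\hatDelta_{m,x}-\Delta_x|\le 2\max_{x'}|\Rob_{m-1,x'}-\langle x',\theta\rangle|$, and substituting the \OP{} constraint $\|x'\|_{S_{m-1}^{-1}}^2\le 2^{m-1}\hatDelta_{m-1,x'}^2/\beta_{2^{m-1}}+4d$ produces a recursion that contracts the previous estimate by a small absolute constant (because the large constant in $\beta_t$ makes $\sqrt{\log/\beta_{2^{m-1}}}\ll 1$) and leaves a residual of order $\order(\sqrt{d\log(\cdot)/2^m})+C_{m-1}/2^{m-1}$. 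Unrolling yields an instance-dependent threshold $m^\star=m^\star(\calX,\theta)$ past which $\hatDelta_{m,x}\in[\tfrac12\Delta_x,2\Delta_x]$ for every suboptimal $x$.

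The core step, and the main obstacle, is to upper-bound the \OP{} objective in the asymptotic regime by $\order(c(\calX,\theta)\beta_{2^m}/2^m)$, which forms the leading regret term. Let $N^*$ denote an optimizer of the lower-bound program \pref{eqn: opt-c-obj-main}--\pref{eqn: opt-c-con-main}. I would exhibit a candidate $p_x=(\beta_{2^m}/2^m)N^*_x$ for $x\ne x^*$, with the remaining probability mass split between $x^*$ and a $G$-optimal design (cf.\ \pref{lem: exist-d-distribution}), so that the $+4d$ slack in \pref{eqn: opt-2-constraint} absorbs the constraint at $x^*$ and at any arm not adequately covered by $N^*$. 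Feasibility at suboptimal arms follows from $\|x\|_{S(p)^{-1}}^2\le (2^m/\beta_{2^m})\|x\|_{H(N^*)^{-1}}^2\le 2^m\Delta_x^2/(2\beta_{2^m})$, which is dominated by $2^m\hatDelta_{m,x}^2/\beta_{2^m}$ once $\hatDelta_{m,x}\ge\Delta_x/\sqrt{2}$; the objective of this candidate is exactly $(\beta_{2^m}/2^m)\sum_{x\ne x^*}N^*_x\Delta_x=c(\calX,\theta)\beta_{2^m}/2^m$. Combining this with $\hatDelta_{m,x}\le 2\Delta_x$ gives per-block expected regret $\sum_{t\in\calB_m}\sum_x p_{m,x}\Delta_x=\order(c(\calX,\theta)\beta_{2^m})$ and, summed over the $\order(\log T)$ blocks, the leading $c(\calX,\theta)\log T\log(T|\calX|/\delta)$ term. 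The delicate point here is verifying that the scaling $(\beta_{2^m}/2^m)\sum_x N^*_x\le 1$ holds exactly when $m\ge m^\star$, so the warm-up term absorbs the ``scaling not yet feasible'' blocks.

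To close the bound, for $m<m^\star$ the trivial estimate $\Delta_{x_t}\le 2$ gives a warm-up cost of $\order(2^{m^\star})$; combined with a Freedman-type concentration for converting the expected per-block regret into the realized sum $\sum_{t\in\calB_m}\Delta_{x_t}$, this yields the $M^*\log^{3/2}(1/\delta)$ summand, with $M^*$ depending only on $(\calX,\theta)$. For corruption, the direct propagation of the bias $C_m/2^m$ through the \OP{} objective and through $\Delta_{x_t}$ sums to $\order(C)$ since $\sum_m C_m=C$; the second-order effect, where $\hatDelta_{m,x}^2$ is overestimated by up to $(C_{m-1}/2^{m-1})^2$ and inflates the \OP{} constraint slack by a term that couples with the $4d$ floor, yields after a Cauchy--Schwarz summation across the $\order(\log T)$ blocks the $d\sqrt{C/\Delta_{\min}}\,\log(C|\calX|/(\Delta_{\min}\delta))$ term, completing the stated bound.
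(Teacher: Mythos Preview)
Your high-level architecture matches the paper: Catoni concentration to control $\hatDelta_{m,x}$, a threshold block after which the $N^*$-scaled candidate $p_x\propto N_x^*$ is feasible for \OP{}, and a separate accounting for corruption. The gap is in how you handle the rounds \emph{before} the gap estimates stabilize, and this is where the two nonstandard terms $M^*\log^{3/2}(1/\delta)$ and $d\sqrt{C/\Delta_{\min}}\log(\cdot)$ actually originate.

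First, your threshold $m^\star$ cannot be a function of $(\calX,\theta)$ alone. The recursion residual contains $\rho_{m-1}\asymp C/2^{m}$, so $\hatDelta_{m,x}\in[\tfrac12\Delta_x,2\Delta_x]$ only holds once $2^m\gtrsim C/\Delta_{\min}$ in addition to the instance-dependent requirement. For blocks with $m\ge m^\star$ but $2^m\lesssim C/\Delta_{\min}$, your candidate $p_x=(\beta_{2^m}/2^m)N_x^*$ is simply infeasible for \OP{} (you need $\hatDelta_{m,x}\ge\Delta_x/\sqrt2$), so neither your warm-up bound nor your asymptotic bound covers those blocks. The vague ``second-order effect via Cauchy--Schwarz'' does not repair this: an \emph{overestimate} of $\hatDelta_{m,x}^2$ loosens the constraint rather than tightens it, and in any case the issue is underestimation breaking feasibility, not slack inflation.

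Second, the trivial bound $\Delta_{x_t}\le 2$ is too crude for the warm-up and does not produce the stated terms. The paper's mechanism is different: it proves (\pref{lem: optimization-property-main}) that the \OP{} objective is \emph{always} at most $\order(d\beta_t/\sqrt{t})$, regardless of how inaccurate $\hatDelta$ is. Combined with \pref{lem:twice-main} this gives per-block regret $\order(d\sqrt{2^m}\log+ 2^m\rho_{m-1})$, hence $\Reg(1{:}T^*)=\order(d\sqrt{T^*}\log(T^*|\calX|/\delta)+C)$ for any $T^*$. The paper then sets $T^*\asymp C/\Delta_{\min}+M'\log(M'|\calX|/\delta)$ so that for $t>T^*$ the gaps are accurate; substituting this $T^*$ into $d\sqrt{T^*}\log(T^*|\calX|/\delta)$ is exactly what yields the $d\sqrt{C/\Delta_{\min}}\log(\cdot)$ term and the $M^*\log^{3/2}(1/\delta)$ term (the $3/2$ power arises because $T^*$ itself carries a $\log(1/\delta)$ factor). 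Your ``trivial bound $+$ Freedman'' route gives at best $\order(2^{m^\star})+\order(\sqrt{2^{m^\star}\log(1/\delta)})$, which has the wrong $\delta$-dependence, and if $m^\star$ is taken to include the corruption threshold it gives $\order(C/\Delta_{\min})$ rather than $\order(C+d\sqrt{C/\Delta_{\min}}\log)$, which is strictly worse.

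In short, the missing ingredient is the universal $\order(d\beta_t/\sqrt{t})$ bound on the \OP{} value; this is what lets the paper control the pre-threshold regret tightly enough to recover both lower-order terms in the stated form.
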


The dominating term of this regret bound is thus $\order(c(\calX, \theta)\log^2 T+C)$ as claimed.
The definition of $M^*$ can be found in the proof (\pref{app: sto}) and is importantly independent of $T$.
In fact, in \pref{thm: no-M-dependency}, we also provide an alternative (albeit weaker) bound $\order(\frac{d^2(\log T)^2}{\Delta_{\min}}+C)$ for \pref{alg:REOLB} without the dependence on $M^*$.


The next theorem shows an instance-independent bound of order $\Otil(d\sqrt{T}+C)$ for \pref{alg:REOLB}, which previous instance-optimal algorithms fail to achieve as mentioned.
\begin{theorem}\label{thm: regret_part1-main}
     In the corrupted setting, \pref{alg:REOLB} guarantees that with probability at least $1-\delta$, $\Reg(T)\leq \order(d\sqrt{T}\log(T|\calX|/\delta)+C)$.
\end{theorem}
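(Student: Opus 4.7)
The plan is to decompose the total regret block by block, $\Reg(T)=\sum_{m\ge 0}\Reg_m$ with $\Reg_m=\sum_{t\in\calB_m}\Delta_{x_t}$, and show $\Reg_m=\Otil(\sqrt{d\cdot 2^m}+C_m)$ per block, where $C_m:=\sum_{t\in\calB_m}\max_{x\in\calX}|\inner{x,c_t}|$ is the corruption within block $m$. Summing via $\sum_m\sqrt{2^m}=\order(\sqrt{T})$, $\sum_m C_m=C$, and $\sqrt{dT}\le d\sqrt{T}$ then yields the claimed $\order(d\sqrt{T}\log(T|\calX|/\delta)+C)$ bound.

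First I would set up a high-probability good event. Invoking the non-i.i.d.\ Catoni-estimator guarantee from~\citep{wei2020taking} applied to $\{\ellhat_{\tau,x}\}_{\tau\in\calB_m}$ (which are conditionally unbiased for $\inner{x,\theta}+c_{\tau,x}$ with conditional second moment at most $\|x\|^2_{S_m^{-1}}$), with probability at least $1-\delta$ we have
\[
|\Rob_{m,x}-\inner{x,\theta}|\;\lesssim\;\sqrt{\tfrac{(\|x\|^2_{S_m^{-1}}+1)\log(T|\calX|/\delta)}{2^m}}+\tfrac{C_m}{2^m}
\]
for all $(m,x)$. Combining with the OP constraint $\|x\|^2_{S(p_m)^{-1}}\le 4d+2^m\hatDelta_{m,x}^2/\htt$ and using $\htt=\Theta(\log(T|\calX|/\delta))$, a standard self-bounding step yields $|\hatDelta_{m+1,x}-\Delta_x|\lesssim \sqrt{d\log(T|\calX|/\delta)/2^m}+C_m/2^m$.

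Second, within block $m$ I use the decomposition
\[
\Delta_{x_t}\;\le\;\hatDelta_{m,x_t}+|\inner{x_t,\theta}-\Rob_{m-1,x_t}|+|\inner{x^*,\theta}-\Rob_{m-1,x^*}|,
\]
which follows from $\min_{x'}\Rob_{m-1,x'}\le\Rob_{m-1,x^*}$. Summing the two ``noise'' terms over $t\in\calB_m$ with Cauchy--Schwarz and invoking the block-$(m-1)$ constraint to bound the expected squared variance by $\order(d)$ (plus a $\hatDelta^2$ slack handled by self-bounding) yields an $\Otil(\sqrt{d\cdot 2^m}+C_{m-1})$ contribution. For the leading ``signal'' term $\sum_{t\in\calB_m}\hatDelta_{m,x_t}$, whose conditional expectation is $2^m\sum_x p_{m,x}\hatDelta_{m,x}$, OP-optimality gives $\sum_x p_{m,x}\hatDelta_{m,x}\le \sum_x q_x\hatDelta_{m,x}$ for any feasible $q$. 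To achieve the target $\order(\sqrt{d/2^m})$ per round, one chooses $q$ as a carefully-tuned distribution that mixes a $G$-optimal design (handling the $4d$ baseline coverage required at low-$\hatDelta$ arms) with extra mass concentrated where the $2^m\hatDelta_{m,x}^2/\htt$ slack in the constraint is largest. A Freedman-type inequality then converts these conditional-expectation bounds into the high-probability per-block bound.

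The main technical obstacle is the construction of the reference $q$. A naive $G$-optimal design is always feasible but only gives $\sum_x q_x\hatDelta_{m,x}=\order(1)$, which would translate into linear regret. The correct $q$ must interpolate between a $G$-design (which respects the $4d$ coverage requirement at arms with $\hatDelta\approx 0$) and a near-point-mass at $\argmin_{x'}\Rob_{m-1,x'}$ (which minimizes the objective but is on its own infeasible for OP). The quantitative balance is enabled precisely by the $4d$ and $\htt$ terms appearing in the OP constraint, and the crux is to tightly couple the per-round concentration error with the accumulated exploration to obtain the $\order(\sqrt{d/2^m})$ per-round scaling that closes the analysis.
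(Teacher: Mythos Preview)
Your outline is essentially the paper's proof: block-by-block decomposition, \pref{lem:twice-main} (your Catoni-plus-OP-constraint self-bounding step), and \pref{lem: optimization-property-main} (precisely the reference-$q$ construction you flag as the crux, which the paper carries out via a log-det-regularized auxiliary problem and KKT). Two minor corrections: that construction yields $\sum_x p_{m,x}\hatDelta_{m,x}=\order\big(d\log(2^m|\calX|/\delta)/\sqrt{2^m}\big)$, not $\order(\sqrt{d/2^m})$, so the per-block bound is already $\Otil(d\sqrt{2^m})$ and your $\sqrt{dT}\le d\sqrt{T}$ relaxation is unnecessary; and the corruption enters the recursion of \pref{lem:twice-main} as $2^m\rho_{m-1}$ with $\rho_{m-1}$ a geometric mixture of \emph{all} past $C_k$'s rather than a single $C_m$, though $\sum_m 2^m\rho_{m-1}=\order(C)$ so the final bound is unaffected.
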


We emphasize that \pref{alg:REOLB} is parameter-free and does not need to know $C$ to achieve these bounds.
In the rest of the section, we provide a proof sketch for \pref{thm: sto-alg-guarantee} and \pref{thm: regret_part1-main}. First, we show that the estimated gap $\hatDelta_{m,x}$ is close to the true gap $\Delta_x$ with a constant multiplicative factor and some additive terms that go down at the rate of roughly $1/\sqrt{t}$ up to the some average amount of corruption.

\begin{lemma}\label{lem:twice-main}
With probability at least $1-\delta$,  \pref{alg:REOLB} ensures for all $m$ and all $x$,
    \begin{align}
        \Delta_{x} &\leq 2\hatDelta_{m,x} + \sqrt{\frac{d \hmm}{4\cdot 2^m}} + 2\rho_{m-1},  \label{eqn: hyp1-double} \\
        \hatDelta_{m,x} &\leq 2\Delta_{x} + \sqrt{\frac{d \hmm}{4\cdot 2^m}} + 2\rho_{m-1}, \label{eqn: hyp2-double}
    \end{align}
    where $\rho_{m}=\sum_{k=0}^m\frac{2^kC_k}{4^{m-1}}$ ($\rho_{-1}$ is defined as $0$), $C_k=\sum_{\tau\in \calB_k}\max_{x\in \calX}|c_{\tau, x}|$ is the amount of corruption within block $k$, and $\hmm=\optconst\log(2^m|\calX|/\delta)$.
\end{lemma}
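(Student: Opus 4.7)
The plan is to prove both inequalities in \pref{lem:twice-main} simultaneously by induction on $m$, using a high-probability concentration bound for the clipped Catoni estimator $\Rob_{m,x}$ combined with the variance constraint \pref{eqn: opt-2-constraint} enforced by $\OP$.

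Conditioned on $p_m$, which is fixed throughout block $\calB_m$, the inner estimators $\ellhat_{\tau,x}=x^\top S_m^{-1} x_\tau y_\tau$ for $\tau\in\calB_m$ form a conditionally independent sequence with conditional mean $\inner{x,\theta}+c_{\tau,x}$ (since $\E[x_\tau x_\tau^\top]=S_m$ and the noise is zero-mean) and conditional second moment at most $\|x\|^2_{S_m^{-1}}$ (using $|y_\tau|\le 1$). Applying the non-i.i.d.\ Catoni concentration bound from~\citep{wei2020taking} with $\alpha_x$ as in \pref{alg:REOLB}, and union-bounding over $x\in\calX$ and $m\in\mathbb{N}$, I obtain with probability at least $1-\delta$ that, for every $m$ and $x$,
\[
\bigl|\Rob_{m,x}-\inner{x,\theta}\bigr|\;\le\;\order\!\left(\sqrt{\frac{(\|x\|^2_{S_m^{-1}}+1)\log(2^m|\calX|/\delta)}{2^m}}\right)+\frac{C_m}{2^m},
\]
where the last term absorbs the block-averaged corruption $\frac{1}{2^m}\sum_{\tau\in\calB_m}c_{\tau,x}$, and clipping to $[-1,1]$ only helps since $\inner{x,\theta}\in[-1,1]$. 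Substituting the constraint $\|x\|^2_{S_m^{-1}}\le 2^m\hatDelta_{m,x}^2/\hmm+4d$ (with $\hmm=\optconst\log(2^m|\calX|/\delta)$) into the square root and using $\sqrt{a+b}\le\sqrt a+\sqrt b$ yields the key intermediate inequality
\[
\bigl|\Rob_{m,x}-\inner{x,\theta}\bigr|\;\le\;\frac{\hatDelta_{m,x}}{\sqrt{\optconst}}+\order\!\left(\sqrt{\tfrac{d\hmm}{2^m}}\right)+\frac{C_m}{2^m}.
\]

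The inductive step then controls $\hatDelta_{m+1,x}=\Rob_{m,x}-\min_{x'}\Rob_{m,x'}$ using the display above. For the upper bound \pref{eqn: hyp2-double} at level $m+1$, I bound $\hatDelta_{m+1,x}\le\Rob_{m,x}-\Rob_{m,x^*}\le \Delta_x+(\hatDelta_{m,x}+\hatDelta_{m,x^*})/\sqrt{\optconst}+\order(\sqrt{d\hmm/2^m})+2C_m/2^m$, then apply the inductive hypothesis $\hatDelta_{m,x}\le 2\Delta_x+\sqrt{d\hmm/(4\cdot 2^m)}+2\rho_{m-1}$ (and the analogous bound for $x^*$, where $\Delta_{x^*}=0$). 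For the lower bound \pref{eqn: hyp1-double}, I write $\Delta_x\le (\Rob_{m,x}-\Rob_{m,x^*})+(\mathrm{err}_x+\mathrm{err}_{x^*})+2C_m/2^m$ and use $\Rob_{m,x^*}\ge\min_{x'}\Rob_{m,x'}$ to replace the first parenthesis by $\hatDelta_{m+1,x}$; expanding the errors as above and then moving the resulting $2\Delta_x/\sqrt{\optconst}$ contribution to the left-hand side (valid because $\optconst=2^{15}$) produces the leading factor of $2$ in front of $\hatDelta_{m+1,x}$. The recursion $\rho_m=4C_m/2^m+\rho_{m-1}/4$ implied by the definition of $\rho_m$ is exactly what is needed so that $2C_m/2^m+4\rho_{m-1}/\sqrt{\optconst}\le 2\rho_m$ closes the induction, and the $\sqrt{d\hmm/2^m}$-type terms close similarly against $\sqrt{d\gamma_{m+1}/(4\cdot 2^{m+1})}$ because $\gamma_m\le\gamma_{m+1}$.

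The principal obstacle is the self-referential structure: the concentration error for $\Rob_{m,x}$ is itself expressed through $\hatDelta_{m,x}$, which is the quantity we are trying to control. The remedy is to carry both inequalities through the induction in parallel and to choose $\optconst$ large enough that $1/\sqrt{\optconst}$ damps the propagation of errors across blocks, while simultaneously forcing the corruption from earlier blocks to accumulate only geometrically into $\rho_m$. A secondary subtlety is that the standard Catoni bound is stated for i.i.d.\ data, so the non-i.i.d.\ variant of~\citep{wei2020taking} is needed to accommodate the possibly varying corruption $c_\tau$ across $\tau\in\calB_m$.
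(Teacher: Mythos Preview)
Your overall strategy---induction on $m$, combining the non-i.i.d.\ Catoni concentration bound with the $\OP$ variance constraint \pref{eqn: opt-2-constraint} and the inductive hypothesis---is exactly the paper's approach. However, there is a concrete error in your argument for the upper bound \pref{eqn: hyp2-double}: you claim $\hatDelta_{m+1,x}\le\Rob_{m,x}-\Rob_{m,x^*}$, but the inequality goes the \emph{other} way. Since $\hatDelta_{m+1,x}=\Rob_{m,x}-\min_{x'}\Rob_{m,x'}$ and $\min_{x'}\Rob_{m,x'}\le\Rob_{m,x^*}$, you only get $\hatDelta_{m+1,x}\ge\Rob_{m,x}-\Rob_{m,x^*}$. (Your treatment of the lower-bound direction \pref{eqn: hyp1-double} is fine, because there $\Rob_{m,x^*}\ge\min_{x'}\Rob_{m,x'}$ works in the correct direction.)

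The fix is to let $\tilde{x}=\argmin_{x'}\Rob_{m,x'}$ and write
\[
\hatDelta_{m+1,x}=\Rob_{m,x}-\Rob_{m,\tilde x}\le\bigl(\inner{x,\theta}-\inner{\tilde x,\theta}\bigr)+\mathrm{err}_x+\mathrm{err}_{\tilde x}+\tfrac{2C_m}{2^m}=\Delta_x-\Delta_{\tilde x}+\mathrm{err}_x+\mathrm{err}_{\tilde x}+\tfrac{2C_m}{2^m}.
\]
The additional $-\Delta_{\tilde x}$ is what saves you: in your framework $\mathrm{err}_{\tilde x}\le\hatDelta_{m,\tilde x}/\sqrt{\optconst}+\cdots$, and via the inductive hypothesis $\hatDelta_{m,\tilde x}\le 2\Delta_{\tilde x}+\cdots$, so the net $\Delta_{\tilde x}$ contribution is $(2/\sqrt{\optconst}-1)\Delta_{\tilde x}\le 0$ and can be discarded. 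The paper sidesteps this issue by substituting the inductive hypothesis \emph{into the variance bound first} (its inequality $\|x\|^2_{S_m^{-1}}\le 16\cdot 2^m\Delta_x^2/\hmm+8d+16\cdot 2^m\rho_{m-1}^2/\hmm$), so that the Catoni error for \emph{any} action already reads $\tfrac{\Delta_x}{2}+\cdots$; for $\tilde x$ this is $\tfrac{\Delta_{\tilde x}}{2}$, which is then dominated by $-\Delta_{\tilde x}$ without any further appeal to the hypothesis.
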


As mentioned, the proof of \pref{lem:twice-main} heavily relies on the robust estimators we use as well as the variance constraint \pref{eqn: opt-2-constraint}.
Next, we have the following lemma which bounds the objective value of \OP.

\begin{lemma}\label{lem: optimization-property-main}
    Let $p$ be the solution of $\OP(t,\hatDelta)$, where $\hatDelta\in \mathbb{R}_{\geq 0}^{|\calX|}$. Then we have $\sum_{x\in \calX}p_x\hatDelta_x= \order\left(\frac{d\log(t|\calX|/\delta)}{\sqrt{t}}\right)$.
\end{lemma}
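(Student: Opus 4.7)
The plan is to combine a Jensen-type inequality with the optimality of $p$ and the identity $\sum_x p_x\|x\|^2_{S(p)^{-1}}=\Tr\bigl(S(p)^{-1}S(p)\bigr)=d$, which holds for any distribution $p$ making $S(p)$ invertible.

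First I would show that at the optimum $p$ of $\OP(t,\hatDelta)$, every arm $y\in\supp(p)$ with $\hatDelta_y>0$ attains its constraint \pref{eqn: opt-2-constraint} with equality: $\|y\|^2_{S(p)^{-1}}=t\hatDelta_y^2/\htt+4d$. Intuitively, if the constraint at such a $y$ were strictly slack, one could transfer a small amount of mass from $y$ to $\hati:=\argmin_x\hatDelta_x$, which satisfies $\hatDelta_{\hati}=0$ by the construction of $\hatDelta_m$ in \pref{alg:REOLB}; this strictly decreases the objective while maintaining feasibility at $y$ by continuity. Making this rigorous is the main obstacle of the proof, since the perturbation can interact with tight constraints at other arms; one handles this either by a KKT / complementary-slackness analysis, or equivalently by choosing a perturbation direction that lies in the tangent cone of the feasible set.

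Granted this tightness, for each $y\in\supp(p)$ with $\hatDelta_y>0$ we may rearrange the constraint into $\hatDelta_y=\sqrt{(\|y\|^2_{S(p)^{-1}}-4d)_+\,\htt/t}\le\sqrt{\|y\|^2_{S(p)^{-1}}\,\htt/t}$, while arms with $\hatDelta_x=0$ contribute nothing to $\sum_x p_x\hatDelta_x$. Applying Jensen's inequality to the concave function $\sqrt{\cdot}$ then yields
\begin{equation*}
\sum_x p_x\hatDelta_x\;\le\;\sqrt{\htt/t}\sum_x p_x\sqrt{\|x\|^2_{S(p)^{-1}}}\;\le\;\sqrt{(\htt/t)\sum_x p_x\|x\|^2_{S(p)^{-1}}}\;=\;\sqrt{d\htt/t}.
\end{equation*}
Finally, since $d\ge 1$ and $\htt=\optconst\log(t|\calX|/\delta)\ge 1$, we have $\sqrt{d\htt/t}\le d\htt/\sqrt{t}$, matching the claimed $\order(d\log(t|\calX|/\delta)/\sqrt{t})$ bound after absorbing $\optconst$ into the big-$\order$ notation.
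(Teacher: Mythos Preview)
Your central step --- that every $y\in\supp(p)$ with $\hatDelta_y>0$ has its constraint \pref{eqn: opt-2-constraint} tight at the optimum of $\OP(t,\hatDelta)$ --- is not true in general, and neither the perturbation idea nor KKT will establish it.

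The reason is that mass may sit on $y$ solely to help satisfy the constraint at some \emph{other} arm, while $y$'s own constraint remains slack. For a concrete picture in $\mathbb{R}^2$, take $\calX=\{e_1,e_2,z\}$ with $z$ close to $e_1$ but with a small $e_2$-component $\epsilon$, and gaps $\hatDelta_{e_1}=0$, $\hatDelta_z$ tiny, $\hatDelta_{e_2}$ moderate (with $\hatDelta_{e_2}<\hatDelta_z/\epsilon^2$). The binding constraint is at $z$ (its right-hand side is essentially $4d$), and satisfying it forces $S(p)$ to have a nontrivial $e_2$-component. The cheapest way to achieve that is to put mass on $e_2$ directly, so the optimum has $p_{e_2}>0$; but for large $t$ the right-hand side at $e_2$ is of order $t\hatDelta_{e_2}^2/\htt$ and the constraint at $e_2$ is strictly slack. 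Shifting mass from $e_2$ to $e_1$ then immediately violates the tight constraint at $z$, so your perturbation is not a feasible direction.

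KKT does not help either: complementary slackness ties the multiplier $\mu_x$ to slackness at $x$, and $\nu_y$ to $p_y=0$, but gives no implication from $p_y>0$ to tightness at $y$. The stationarity condition at a support point $y$ reads $\hatDelta_y+\lambda=\sum_x\mu_x\,(x^\top S(p)^{-1}y)^2$, which can be satisfied entirely by multipliers $\mu_x$ with $x\neq y$ while $\mu_y=0$. Since you need tightness to convert \pref{eqn: opt-2-constraint} into an \emph{upper} bound on $\hatDelta_y$ (the feasibility inequality by itself only lower-bounds $\hatDelta_y^2$), the argument breaks here.

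The paper avoids analyzing the optimum $p$ altogether. It constructs an explicit feasible point for $\OP(t,\hatDelta)$ --- essentially the minimizer of the log-determinant-regularized surrogate $\sum_x q_x\hatDelta_x-\tfrac{2}{\xi}\log\det S(q)$ with $\xi=\sqrt{t}/\htt$, mixed with an exploration distribution supported on the low-gap arms --- shows via the KKT conditions of that surrogate that this point has objective $\order(d\htt/\sqrt t)$, and then invokes the optimality of $p$.
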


Combining \pref{lem:twice-main} and \pref{lem: optimization-property-main}, we see that in block $m$, the regret of \pref{alg:REOLB} can be upper bounded by 
\begin{align*}
    &\order\left(2^m\sum_x  p_{m,x}\Delta_x \right)\\
    &= \Otil\left(2^m\sum_x  p_{m,x}  \left(\hatDelta_{m,x} + \sqrt{\frac{d}{2^{m}}} + \rho_{m-1}\right) \right) \\
    &= \Otil\left(d\sqrt{2^m} + 2^m \rho_{m-1} \right), 
\end{align*}
where in the first equality we use \pref{lem:twice-main} and in the second equality we use \pref{lem: optimization-property-main} with the fact that $p_m=\OP(2^m,\hatDelta_{m})$. Further summing this over $m$ and relating $\sum_{m}2^m \rho_{m-1}$ to $C$ proves \pref{thm: regret_part1-main}.

In addition, based on \pref{lem:twice-main}, we show that when $t \in \calB_m$ is larger than $\Omega(C/\Delta_{\min})$ plus some problem-dependent constant,
the estimated gap $\hatDelta_{m,x}$ becomes $\Theta(\Delta_x)$. Therefore, the solution $\{p_{m,x}\}_{x\in \calX\setminus\{x^*\}}$ from \OP is very close to $\{\frac{N_x}{2^m}\}_{x\in \calX\setminus\{x^*\}}$, where $N_x$ is the optimal solution of \pref{eqn: opt-c-obj-main} and \pref{eqn: opt-c-con-main}, except that we have an additional $\log(2^m|\calX|/\delta)$ factor in the constraint (coming from $\beta_{2^m}$). Therefore, the regret is bounded by $\order(c(\calX, \theta)\log(T)\log(T|\calX|/\delta))$ for large enough $T$. Formally, we have the following lemma.

\begin{lemma}\label{lem: regret_part2-main}
    \pref{alg:REOLB} guarantees with probability at least $1-\delta$, for some constant $T^*$ depending on $\calX, \theta$, and $C$:
    \begin{align*}
        \sum^{T}_{t=T^*+1}\sum_xp_{t,x}\Delta_x\le \order\left( c(\calX, \theta) \log(T)\log(T|\calX|/\delta)\right).
    \end{align*}
\end{lemma}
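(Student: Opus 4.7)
My plan is, for each block index $m$ with $2^m$ past a problem-dependent threshold, to construct an explicit feasible point $\tilde p_m$ for $\OP(2^m,\hatDelta_m)$ whose objective is $\order(c(\calX,\theta)\beta_{2^m}/2^m)$, and then use (i) optimality of $p_m$ to transfer this bound to $\sum_x p_{m,x}\hatDelta_{m,x}$, and (ii) the two-sided estimates of \pref{lem:twice-main} to further transfer it to $\sum_x p_{m,x}\Delta_x$. Summing over blocks yields the stated bound.

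The first step is to pick $T^*$ depending on $\calX,\theta,C$ (and implicitly $\delta$) so that for every $m$ with $2^m\ge T^*$ the error term $\sqrt{d\hmm/(4\cdot 2^m)}+2\rho_{m-1}$ in \pref{lem:twice-main} is at most $\Delta_{\min}/4$, and simultaneously $2^m\ge 16\beta_{2^m}\,c(\calX,\theta)/\Delta_{\min}$. Both conditions hold for $2^m$ beyond a constant, using $\rho_{m-1}\le 8C/2^m$ and $\beta_{2^m}=\order(\log(2^m/\delta))$. Under them, \pref{lem:twice-main} gives $\Delta_x/4\le\hatDelta_{m,x}\le 3\Delta_x$ for all $x\ne x^*$ and $\hatDelta_{m,x^*}\le\Delta_{\min}/4$; since these estimates strictly separate $x^*$ from the rest in the $\Rob_{m-1,\cdot}$-ordering, $x^*$ is the empirical best and hence $\hatDelta_{m,x^*}=0$.

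I would then let $N^*$ be an optimal solution to the lower-bound program \pref{eqn: opt-c-obj-main}--\pref{eqn: opt-c-con-main} (perturbed by a vanishing full-support design if necessary so that $H(N^*)$ is invertible) and define $\tilde p_{m,x}=(8\beta_{2^m}/2^m)N^*_x$ for $x\ne x^*$, with the remaining mass on $x^*$; the $T^*$ condition together with $\sum_{x\ne x^*}N^*_x\le c(\calX,\theta)/\Delta_{\min}$ ensures $\tilde p_{m,x^*}\ge 1/2$. For $x\ne x^*$, $S(\tilde p_m)\succeq (8\beta_{2^m}/2^m)H(N^*)$ combined with the lower-bound constraint $\|x\|_{H(N^*)^{-1}}^2\le\Delta_x^2/2$ and $\hatDelta_{m,x}\ge\Delta_x/4$ verifies the $\OP$ constraint. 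For $x=x^*$, a Sherman--Morrison expansion of $S(\tilde p_m)=\tilde p_{m,x^*}\,x^*(x^*)^\top+(8\beta_{2^m}/2^m)H(N^*)$ gives $\|x^*\|_{S(\tilde p_m)^{-1}}^2\le 1/\tilde p_{m,x^*}\le 2\le 4d$, and together with $\hatDelta_{m,x^*}=0$ this fits entirely inside the additive slack $4d$ in the $\OP$ constraint.

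The objective value of $\tilde p_m$ is then $\sum_x\tilde p_{m,x}\hatDelta_{m,x} = (8\beta_{2^m}/2^m)\sum_{x\ne x^*} N^*_x\hatDelta_{m,x}\le (24\beta_{2^m}/2^m)\,c(\calX,\theta)$, using $\hatDelta_{m,x}\le 3\Delta_x$ and $\sum_{x\ne x^*}N^*_x\Delta_x=c(\calX,\theta)$. Optimality of $p_m$ gives the same bound with $p_m$ in place of $\tilde p_m$, and then \pref{lem:twice-main} via $\Delta_x\le 4\hatDelta_{m,x}$ for $x\ne x^*$ (and $\Delta_{x^*}=0$) yields $\sum_x p_{m,x}\Delta_x = \order(c(\calX,\theta)\beta_{2^m}/2^m)$. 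Multiplying by $2^m$ for the block length and summing over $m$ with $2^m\ge T^*$ up to $m\le\lceil\log_2 T\rceil$, together with $\sum_m\beta_{2^m}=\order(\log T\cdot\log(T|\calX|/\delta))$, finishes the argument. The main obstacle will be verifying the $x=x^*$ feasibility of $\tilde p_m$, which drives the Sherman--Morrison computation and the minor nondegeneracy maneuver for $H(N^*)$.
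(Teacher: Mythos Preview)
Your overall strategy---build a feasible point $\tilde p_m$ for $\OP(2^m,\hatDelta_m)$ out of a lower-bound optimizer $N^*$, compare objectives, and pass from $\hatDelta$ to $\Delta$ via \pref{lem:twice-main}---is exactly the paper's route (packaged there as \pref{lem: close gap lemma}). The argument that $\hatDelta_{m,x^*}=0$ once the additive error drops below $\Delta_{\min}/4$, and the $x^*$-feasibility via $\tilde p_{m,x^*}\ge 1/2\Rightarrow \|x^*\|_{S(\tilde p_m)^{-1}}^2\le 2$, are both fine.

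There is, however, a real gap in your feasibility check for $x\neq x^*$. You assert $S(\tilde p_m)\succeq (8\beta_{2^m}/2^m)H(N^*)$, but by construction
\[
S(\tilde p_m)=\tilde p_{m,x^*}\,x^*(x^*)^\top+\frac{8\beta_{2^m}}{2^m}\sum_{y\ne x^*}N^*_y\,yy^\top,
\]
so the PSD comparison holds if and only if $\tilde p_{m,x^*}\ge (8\beta_{2^m}/2^m)\,N^*_{x^*}$. Your threshold $2^m\ge 16\beta_{2^m}\,c(\calX,\theta)/\Delta_{\min}$ controls $\sum_{y\ne x^*}N^*_y$ (through $\Delta_y\ge\Delta_{\min}$) and hence $\tilde p_{m,x^*}\ge 1/2$, but it says nothing about $N^*_{x^*}$: since $\Delta_{x^*}=0$, this coordinate is invisible to the objective and can be arbitrarily large---indeed the infimum in \pref{eqn: opt-c-obj-main}--\pref{eqn: opt-c-con-main} may only be attained with $N^*_{x^*}=\infty$. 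Your ``vanishing full-support perturbation'' fixes invertibility of $H(N^*)$ but does not bound $N^*_{x^*}$.

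The paper closes precisely this hole with \pref{lem: finite-N-star}, which produces a near-optimal $N^*$ (objective at most $2c(\calX,\theta)$) with \emph{finite} $N^*_{x^*}$, and records $M=\sum_x N^*_x$. The threshold $T^*$ then scales with $M$ (not just $c(\calX,\theta)/\Delta_{\min}$), which is exactly what is needed for $\tilde p_{m,x^*}\ge (8\beta_{2^m}/2^m)N^*_{x^*}$. Once you invoke \pref{lem: finite-N-star} and replace your second threshold condition by $2^m\ge \Omega(\beta_{2^m}\,M)$, the rest of your outline goes through unchanged and matches the paper's proof.
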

Finally, to obtain \pref{thm: sto-alg-guarantee}, it suffices to apply \pref{thm: regret_part1-main} for the regret before round $T^*$ and \pref{lem: regret_part2-main} for the regret after. 
 
\section{Best of Three Worlds}\label{sec: alg}

In this section, building on top of \pref{alg:REOLB}, we develop another algorithm that enjoys similar regret guarantees in the stochastic or corrupted setting,
and additionally guarantees $\Otil(\sqrt{T})$ regret in the adversarial setting,
without having any prior knowledge on which environment it is facing. To the best of our knowledge, this kind of \emph{best-of-three-worlds} guarantee only appears before
for multi-armed bandits~\citep{wei2018more, zimmert2019optimal} and Markov decision processes~\citep{jin2020simultaneously}, but not for linear bandits.  

Our algorithm requires a block-box access to an adversarial linear bandit algorithm $\calA$ that satisfies the following: 
\begin{assumption}\label{assum:  adversarial alg} 
     $\calA$ is a linear bandit algorithm that outputs a loss estimator $\hatell_{t,x}$ for each action $x$ after each time $t$. 
     There exist $L_0$, $C_1\geq \optconst d\log(T|\calX|/\delta)$, and universal constant $C_2\geq 20$, such that
     for all $t\geq L_0$, $\calA$ guarantees the following with probability at least $1-\frac{\delta}{T}$: $\forall x\in\calX$,
     \begin{align}
         \sum_{s=1}^t (\ell_{s,x_s} -  \ell_{s,x}) \leq \sqrt{C_1t} - C_2\left\vert \sum_{s=1}^t (\ell_{s,x} - \hatell_{s,x}) \right\vert.   \label{eqn: eqL guarantee 1} 
     \end{align}
\end{assumption} 

\pref{eqn: eqL guarantee 1} states that the regret of $\calA$ against action $x$ is bounded by a $\sqrt{t}$-order term minus the deviation between the loss of $x$ and its estimator.
While this might not seem intuitive, in fact, \emph{all} existing linear bandit algorithms with a near-optimal high-probability bound satisfy \pref{assum: adversarial alg},
even though this may not have been stated explicitly (and one may need to slightly change the constant parameters in these algorithms to satisfy the conditions on $C_1$ and $C_2$). 
Below, we give two examples of such $\calA$ and justify them in \pref{app: example algorithm}. 
\begin{itemize}
    \item A variant of GeometricHedge.P \citep{bartlett2008high} with an improved exploration scheme satisfies \pref{assum:  adversarial alg} with ($\delta'=\delta/(|\calX|\log_2 T)$) \[C_1=\Theta\left(d\,
    {\log(T/\delta')}
    \right), \ \ L_0=\Theta\left(d\,
    {\log^2(T/\delta')}
    \right).\] 
    \item The algorithm of \citep{lee2020bias} satisfies \pref{assum:  adversarial alg} with ($\logdt=\log(dT)$, $\delta''=\delta/(|\calX|T)$) \[C_1=\Theta\left(d^6\logdt^8{\log(\logdt/\delta'')}\right), \ \ L_0=\Theta\left(\log(\logdt/\delta'')\right).
    \] 
\end{itemize}
 
With such a black-box at hand, our algorithm \alg is shown in \pref{alg: BOBW}.
We first present its formal guarantees in different settings. 
\begin{theorem}\label{thm: sto-and-corrupt}
         \pref{alg: BOBW} guarantees that with probability at least $1-\delta$, in the stochastic setting ($C=0$), $\Reg(T)$ is at most
        \begin{align*}
          \order\left(c(\calX,\theta)\log T\log\frac{T|\calX|}{\delta}+\frac{C_1\sqrt{\log T}}{\Delta_{\min}}\right.\\
          \left.+M^*\log^{\frac{3}{2}}\frac{1}{\delta} + \sqrt{C_1L_0}\right),
        \end{align*}
        where $M^*$ is the same problem-dependent constant as in \pref{thm: sto-alg-guarantee}; and in the corrupted setting ($C>0$), $\Reg(T)$ is at most
        \begin{align*} \order\left(\frac{C_1\log T}{\Delta_{\min}}+C+\sqrt{C_1L_0}\right).
        \end{align*}
        In the case when $\calA$ is the variant of GeometricHedge.P, the last bound is
        \begin{align*} \order\left(\frac{d\log(T|\calX|/\delta)\log T}{\Delta_{\min}}+C\right).
        \end{align*}
\end{theorem}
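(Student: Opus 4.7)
The plan is to prove the theorem by a good-event analysis that layers the Catoni-based concentration bounds underlying \pref{alg:REOLB} on top of the high-probability guarantee of $\calA$ from \pref{assum: adversarial alg}. Since \alg switches between running $\calA$ and a variant of \pref{alg:REOLB} based on statistical tests comparing observed losses with the estimators produced by $\calA$, I would first establish a single global event (of probability at least $1-\delta$, by union bound) on which: (i) whenever a block of the REOLB-variant is executed, the two-sided gap bounds of \pref{lem:twice-main} hold for every $m$ and every $x$; and (ii) whenever $\calA$ is in use, the regret-versus-deviation inequality \pref{eqn: eqL guarantee 1} holds at every round $t\ge L_0$.

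The central analytic lever is the negative deviation term $-C_2\bigl\vert\sum_s(\ell_{s,x}-\hatell_{s,x})\bigr\vert$ in \pref{eqn: eqL guarantee 1}: the statistical tests inside \alg are calibrated so that they fire precisely when this deviation grows too large relative to $\sqrt{C_1 t}$. Hence on the good event, either $\calA$'s own estimator stays close to the realized losses and the clean bound $\Reg\le\order(\sqrt{C_1 t})$ applies over the $\calA$-phase, or the test fires and the algorithm hands control to the REOLB-variant seeded with the statistics gathered so far. The $L_0$-length warm-up required before \pref{eqn: eqL guarantee 1} kicks in contributes the $\order(\sqrt{C_1 L_0})$ term that appears in both parts of the theorem.

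For the stochastic bound, I would then re-use the reasoning behind \pref{thm: sto-alg-guarantee}: once the cumulative evidence length exceeds a problem-dependent threshold $T^*$, \pref{lem:twice-main} forces $\hatDelta_{m,x}=\Theta(\Delta_x)$, so the solution of $\OP$ is within a constant factor of $N_x/2^m$ (with the extra $\beta_{2^m}=\optconst\log(2^m|\calX|/\delta)$ factor absorbed into the leading $c(\calX,\theta)\log T\log(T|\calX|/\delta)$ term). The pre-$T^*$ regret is summarized by the $M^*\log^{3/2}(1/\delta)$ constant exactly as in \pref{thm: sto-alg-guarantee}, while the $C_1\sqrt{\log T}/\Delta_{\min}$ slack comes from the at most $\order(\log T)$ test epochs each costing $\order(\sqrt{C_1}/\Delta_{\min})$ in the stochastic regime. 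For the corrupted bound, the tests eventually pin the algorithm into the REOLB-variant; using the coarse bound $c(\calX,\theta)\le\order(d/\Delta_{\min})$ (via \pref{lem: opt-constant}) together with the $\sum_m 2^m\rho_{m-1}=\order(C)$ telescoping identity from the proof of \pref{thm: regret_part1-main} yields the stated $\order(C_1\log T/\Delta_{\min}+C)$ bound. The specialization to GeometricHedge.P is then a direct substitution of $C_1=\Theta(d\log(T|\calX|/\delta))$ and the observation that $\sqrt{C_1L_0}$ is dominated by the first term.

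The main obstacle will be calibrating the test threshold so that two competing requirements hold simultaneously: in the purely stochastic case the test must not misfire more than $\order(\log T)$ times, otherwise the $C_1$-dependent slack term exceeds what the theorem allows; while in the corrupted case it must fire quickly enough that corruption of total magnitude $C$ contributes only $\order(C)$ additional regret rather than something growing super-linearly in $C$. Establishing a single threshold, depending only on quantities computable from $\calA$'s estimators and the Catoni outputs, that achieves both targets simultaneously, and then verifying that both sides of \pref{lem:twice-main} remain valid across mode transitions (so that $\hatDelta$ never gets frozen at a bad estimate from a short block), is the technical crux. Everything else is bookkeeping that follows the template of \pref{thm: sto-alg-guarantee} and \pref{thm: regret_part1-main}.
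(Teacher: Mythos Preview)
Your proposal has two structural misreadings of the algorithm that make the argument go wrong in both regimes.

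\textbf{Stochastic case.} You write that ``the $C_1\sqrt{\log T}/\Delta_{\min}$ slack comes from the at most $\order(\log T)$ test epochs each costing $\order(\sqrt{C_1}/\Delta_{\min})$.'' But in the stochastic setting there is exactly \emph{one} epoch: \pref{lem: phase-2-never-end} shows that when $C=0$ the condition $C\le\frac{1}{30}\sqrt{f_TC_1L}$ holds already in the first epoch, so Phase~2 never terminates. The term $C_1\sqrt{\log T}/\Delta_{\min}$ instead comes from the \emph{length} of Phase~1: the jump conditions \pref{eqn: alg 7 jump condition 1}--\pref{eqn: alg 7 jump condition 2} are not satisfied until $t_0=\Theta(f_TC_1/\Delta_{\min}^2)$, and the regret of $\calA$ over those rounds is $\order(\sqrt{C_1 t_0})=\order(C_1\sqrt{f_T}/\Delta_{\min})$. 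Moreover, you cannot simply ``re-use the reasoning behind \pref{thm: sto-alg-guarantee}'' for Phase~2, because Phase~2 does not run the doubling-block scheme of \pref{alg:REOLB}: it solves $\OP(t,\hatDelta)$ at every $t$ with the \emph{fixed} gap estimate $\hatDelta$ computed at $t_0$. The paper instead uses the fact that $\hatDelta_x\in[0.7\Delta_x,1.3\Delta_x]$ (Claim~4 of \pref{lem: phase-1-v1}) together with \pref{lem: close gap lemma} and \pref{lem: close gap lemma looser} applied directly to $\OP(t,\hatDelta)$.

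\textbf{Corrupted case.} You propose to control the corruption via the $\sum_m 2^m\rho_{m-1}=\order(C)$ identity from \pref{alg:REOLB}. But Phase~2 has no block structure producing $\rho_m$, and the early epochs (where $L$ is still small) may terminate and restart repeatedly. The paper's argument is entirely different: it waits until the epoch with $L=L^*$ where $L^*\ge\max\{900f_TC_1/\Delta_{\min}^2,\,900C^2/(f_TC_1)\}$, and bounds the regret of \emph{all} earlier epochs plus Phase~1 of this epoch using the \emph{adversarial} guarantee \pref{lem: adversarial thm}. That lemma gives regret $\order(\sqrt{C_1 t_0 f_T})$ per epoch; since the $t_0$ values double, the sum is $\order(\sqrt{C_1 f_T L^*})=\order(f_TC_1/\Delta_{\min}+C)$. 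Only the final Phase~2 is analyzed stochastically, via \pref{lem: close gap lemma looser}, yielding the additional $\order(d\beta_T\log T/\Delta_{\min})$. The ``main obstacle'' you identify---calibrating a single threshold---is not where the work lies; the thresholds are fixed by the algorithm, and the crux is \pref{lem: phase-1-v1} (bounding $t_0$ and showing $\hatDelta_x\approx\Delta_x$ once $C$ is small relative to $\sqrt{f_TC_1L}$) together with \pref{lem: phase-2-no-end-v2}.
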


Therefore, \pref{alg: BOBW} enjoys the nearly instance-optimal regret $\order(c(\calX, \theta)\log^2 T)$ in the stochastic setting as \pref{alg:REOLB}\footnote{Note that when we choose $\calA$ as the variant of GeometricHedge.P, $\frac{C_1\sqrt{\log T}}{\Delta_{\min}}=\order\left(\frac{d}{\Delta_{\min}}\log^{\frac{3}{2}}T\right)$ which is dominated by the term $\order(c(\calX, \theta)\log^2 T)$ when $T$ is sufficiently large.}, but slightly worse regret $\order(\frac{d\log^2 T}{\Delta_{\min}}+C)$ in the corrupted setting (recall again $c(\calX, \theta) \leq d/\Delta_{\min}$).
In exchange, however, \pref{alg: BOBW} enjoys the following worst-case robustness in the adversarial setting. 
\begin{theorem}\label{thm: fully adversarial}
    In the adversarial setting, \pref{alg: BOBW} guarantees that with probability at least $1-\delta$, 
    $\Reg(T)$ is at most $\order\left(\sqrt{C_1 T\log T}+\sqrt{C_1L_0}\right)$. 
\end{theorem}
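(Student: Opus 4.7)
The plan is to decompose the $T$ rounds into two types according to \pref{alg: BOBW}'s mode: the set $\calT_A$ of rounds where it follows the recommendation $a_t^{\calA}$ of the base adversarial learner $\calA$, and the set $\calT_S$ of rounds where it instead follows a REOLB-style randomized strategy. Crucially, I would rely on the fact that every observation is fed back to $\calA$ so that \pref{assum: adversarial alg} can be applied at the final horizon: for the best arm in hindsight $x^\star \in \argmin_{x\in\calX}\sum_t \ell_{t,x}$,
\[
\sum_{t=1}^T\bigl(\ell_{t,a_t^{\calA}}-\ell_{t,x^\star}\bigr)\le \sqrt{C_1 T}-C_2\left|\sum_{t=1}^T(\ell_{t,x^\star}-\hatell_{t,x^\star})\right|
\]
whenever $T\ge L_0$ (the range $T<L_0$ contributes a trivial $O(L_0)\le \sqrt{C_1 L_0}$ since $L_0\le C_1$). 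Writing $\Reg(T)=\sum_{t=1}^T(\ell_{t,a_t^{\calA}}-\ell_{t,x^\star})+\sum_{t\in\calT_S}(\ell_{t,x_t}-\ell_{t,a_t^{\calA}})$, the first summand is controlled by the display above, and it remains to bound the second.

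The key technical step is to show that the statistical tests in \pref{alg: BOBW} that decide membership in $\calT_S$ force the excess cost of stochastic-mode rounds to be either (i) of order $\sqrt{C_1|\calT_S|\log T}$, or (ii) matched by a corresponding growth of the deviation term $|\sum_t(\ell_{t,x^\star}-\hatell_{t,x^\star})|$ on the right-hand side of \pref{assum: adversarial alg}. In a truly adversarial environment, the stochasticity test will with high probability eventually trigger whenever the algorithm stays in stochastic mode for too long, because the observed losses of the REOLB-chosen arm cannot mimic a stochastic instance without being detectable by the gap-based test. I would formalize this by showing that within each maximal stochastic-mode interval $[t_1,t_2]$, the extra regret over $\calA$'s regret is at most $\sqrt{C_1(t_2-t_1)\log T}$ plus a term bounded by a constant times the increment of the deviation sum on that interval. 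Summing across the $O(\log T)$ stochastic-mode intervals implied by the doubling block structure inherited from \pref{alg:REOLB}, and applying Cauchy--Schwarz to the square-root terms, yields a total of $O(\sqrt{C_1 T\log T})$ for $\sum_{t\in\calT_S}(\ell_{t,x_t}-\ell_{t,a_t^{\calA}})$.

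Combining this with \pref{assum: adversarial alg}, the terms proportional to $|\sum_t(\ell_{t,x^\star}-\hatell_{t,x^\star})|$ cancel because $C_2\ge 20$ is large enough to absorb the universal constant accumulated across all stochastic-mode intervals, leaving $\Reg(T)=O(\sqrt{C_1 T\log T}+\sqrt{C_1 L_0})$. The main obstacle will be precisely identifying the test statistic driving the mode switches and showing that its growth is bounded by the same quantity $|\sum_t(\ell_{t,x^\star}-\hatell_{t,x^\star})|$ (up to a factor safely below $C_2$), so that the cancellation in the last step actually goes through. A secondary difficulty is handling the interaction between REOLB's block boundaries and the switching schedule: we must ensure regret within a single block before the next test is applied cannot grow faster than $\sqrt{C_1\cdot\text{blocklength}\cdot\log T}$, which likely requires that tests are invoked at checkpoints at most a constant factor apart on the doubling scale.
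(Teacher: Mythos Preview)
Your proposal rests on a structural assumption about \pref{alg: BOBW} that does not hold: you assume that the base learner $\calA$ is run continuously over all $T$ rounds, receives every observation, and that at any time there is a well-defined ``$\calA$-recommendation'' $a_t^{\calA}$ whose cumulative regret can be controlled by a single application of \pref{assum: adversarial alg} at the final horizon. In fact, \algse (\pref{alg: subroutine}) \emph{reinitializes a fresh instance of $\calA$ at the start of each epoch}, and $\calA$ is only executed and updated during Phase~1 of that epoch; during Phase~2 the algorithm draws from $\tildep_t$ derived from \OP, and nothing is fed back to $\calA$. Consequently there is no global $\sum_{t=1}^T(\ell_{t,a_t^\calA}-\ell_{t,x^\star})$ to which \pref{assum: adversarial alg} applies, and the decomposition and the subsequent ``cancellation via $C_2\ge 20$'' have no footing.

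The paper's argument is per-epoch and does not compare Phase~2 to a hypothetical $\calA$-play. Within a single epoch, \pref{lem: SAO-argmin-v2} shows that at every $t$ in Phase~2 the arm $\hatx$ identified at the end of Phase~1 is the empirical best arm in hindsight, so it suffices to bound $\sum_{s=1}^t(\ell_{s,x_s}-\ell_{s,\hatx})$. Phase~1 contributes $\order(\sqrt{C_1 t_0})$ directly by \pref{assum: adversarial alg}; Phase~2 is decomposed as $\sum_{s>t_0}(y_s-\hatell_{s,\hatx})$ plus two martingale difference sums. The first piece is \emph{precisely} the quantity monitored by the termination test \pref{eqn: alg_condition_2}, hence bounded by $20\sqrt{f_T C_1 t_0}$ as long as Phase~2 continues; the martingale pieces have per-round variance $\order(1-\tildep_{s,\hatx})\le \order(d\beta_s/(s\hatDelta_{\min}^2))$ (from \pref{eqn: 1-p bound}), and since $\hatDelta_{\min}\ge 20\sqrt{f_T C_1/t_0}$ by the Phase~1 exit conditions, Freedman's inequality gives another $\Otil(\sqrt{t_0})$. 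This is \pref{lem: adversarial thm}: the entire epoch's regret is $\order(\sqrt{C_1 t_0 f_T})$. Finally, because $L\leftarrow 2t_0$ between epochs and $t_0\ge L$, the $t_0$ values at least double across epochs, so $\sum_{\text{epochs}}\sqrt{t_0}=\order(\sqrt{T})$ and the total is $\order(\sqrt{C_1 T\log T})$, with the $\sqrt{C_1 L_0}$ term covering the initial epoch. The mechanism you are missing is that \pref{eqn: alg_condition_2} directly caps Phase~2 regret against $\hatx$ in terms of $t_0$, not in terms of the Phase~2 length, and that \pref{lem: SAO-argmin-v2} makes $\hatx$ the right comparator; there is no need to track deviations of $\hatell$ against the global best arm or to invoke a global run of $\calA$.
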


The dependence on $T$ in this bound is minimax-optimal as mentioned, while the dependence on $d$ depends on the coefficient $C_1$ of the black-box. Note that because of this adversarial robustness, the $\log^2 T$ dependence in \pref{thm: sto-and-corrupt} turns out to be unavoidable, as we show in \pref{thm: lower bound}. In addition, \pref{thm: fully adversarial} also works for the stochastic setting, which implies a regret bound of $\order(\sqrt{dT}\log(T|\calX|\log_2 T/\delta))$. This is a factor of $\sqrt{d}$ better than the guarantee of \pref{alg:REOLB} shown in \pref{thm: regret_part1-main}.

Next, in \pref{sec: overview main algorithm}, we describe our algorithm in detail. Then in \pref{sec: adv} and \pref{sec: sto}, we provide proof sketches for \pref{thm: fully adversarial} and \pref{thm: sto-and-corrupt} respectively. 

\begin{algorithm}[t]
     \caption{\alg (Best of Three Worlds)}\label{alg: BOBW}
     \textbf{Input}: an algorithm $\calA$ satisfying \pref{assum:  adversarial alg}. \\
     \textbf{Initialize}: $L\leftarrow L_0$ ($L_0$ defined in \pref{assum:  adversarial alg}). \\
     \While{\text{true}}{
         Run \algse with input $L$, and receive output $t_0$. \\
         $L\leftarrow 2t_0$.
     }
\end{algorithm}

\subsection{The algorithm}
\label{sec: overview main algorithm}

\pref{alg: BOBW} BOTW takes a black-box $\calA$ satisfying \pref{assum:  adversarial alg} (with parameter $L_0$) as input, and then proceeds in epochs until the game ends.
In each epoch, it runs its single-epoch version \algse (\pref{alg: subroutine}) with a minimum duration $L$ (initialized as $L_0$).
Based on the results of some statistical tests, at some point \algse will terminate with an output $t_0 \geq L$.
Then BOTW enters into the next epoch with $L$ updated to $2t_0$, so that the number of epochs is always $\order(\log T)$.

\algse has two phases. In Phase 1, the learner executes the adversarial linear bandit algorithm $\calA$. Starting from $t=L$ (i.e. after the minimum duration specified by the input), the algorithm checks in every round whether \pref{eqn:  alg 7 jump condition 1} and \pref{eqn:  alg 7 jump condition 2} hold for some action $\hatx$ (\pref{line: check 1}). If there exists such an $\hatx$, Phase 1 terminates and the algorithm proceeds to Phase 2. 
This test is to detect whether the environment is likely stochastic. 
Indeed, \pref{eqn:  alg 7 jump condition 1} and \pref{eqn:  alg 7 jump condition 2} 
imply that the performance of the learner is significantly better than all but one action (i.e., $\hatx$). In the stochastic environment, this event happens at roughly $t\approx \Theta\left(\frac{d}{\Delta_{\min}^2}\right)$ with $\hatx=x^*$. This is exactly the timing when the learner 
should stop using $\calA$ whose regret grows as $\Otil(\sqrt{t})$ and start doing more exploitation on the better actions, in order to keep the regret logarithmic in time for the stochastic environment. 
We define $t_0$ to be the time when Phase 1 ends, and $\hatDelta_{x}$ be the empirical gap for action $x$ with respect to the estimators obtained from $\calA$ so far (\pref{line: hat_delta_x}). In the stochastic setting, we can show that  $\hatDelta_x=\Theta(\Delta_x)$ holds with high probability.  

In the second phase, we calculate the action distribution using \OP with the estimated gap $\{\hatDelta_x\}_{x\in\calX}$. Indeed, if $\hatDelta_x$'s are accurate, the distribution returned by \OP is close to the optimal way of allocating arm pulls, leading to near-optimal regret.\footnote{Here, we solve \OP at every iteration for simplicity. It can in fact be done only when time doubles, just like \pref{alg:REOLB}.}
For technical reasons, there are some differences between Phase 2 and \pref{alg:REOLB}. First, instead of using $p_t$, the distribution returned by \OP, to draw actions, we mix it with $e_{\hatx}$ (the distribution that concentrates on $\hatx$), and draw actions using $\tildep_t=\frac{1}{2}e_{\hatx} + \frac{1}{2}p_t$. This way, $\hatx$ is drawn with probability at least $\frac{1}{2}$. Moreover, the loss estimator $\hatell_{t,x}$ is now defined as the following: 
\begin{align}
    \hatell_{t,x} = \begin{cases}
        x^\top \tildeS_t^{-1}x_ty_t, \qquad &x\neq \hatx\\
        \frac{y_t}{\tildep_{t,\hatx}}\mathbb{I}\{x_t=\hatx\}, \qquad &x=\hatx 
    \end{cases}   \label{eqn: hatell construction}
\end{align}
where $\tildeS_t = \sum_{x\in\calX} \tildep_{t,x}xx^\top$. While the construction of $\hatell_{t,x}$ for $x\neq \hatx$ is the same as \pref{alg:REOLB}, we see that the construction of $\hatell_{t,\hatx}$ is different and is based on standard inverse probability weighting. These differences are mainly because we later use the average estimator instead of the robust mean estimator for $\hatx$ (the latter produces a slightly looser concentration bound in our analysis). Therefore, we must ensure that $\hatx$ is drawn with enough probability, and that the magnitude of $\hatell_{t,\hatx}$ is well-controlled.

\setcounter{AlgoLine}{0}
\begin{algorithm}[t]
     \caption{\algse (BOTW -- Single Epoch)}
     \label{alg: subroutine}
     \textbf{Input}: $L$ (minimum duration) \\
     \textbf{Define}: $f_T = \log T$  \\
     \textbf{Initialize}: a new instance of $\calA$. \\
     \texttt{// Phase 1} \\
     \nl \For{$t=1, 2,\ldots$}{\label{line: phase1_start}
         \nl Execute and update $\calA$. Receive estimators $\{\hatell_{t,x}\}_{x\in\calX}$. \\
         \nl \textbf{if} $t\geq L$ and there exists an action $\hatx$ such that\label{line: alg_jump_condition}
         \begin{align}
             \sum_{s=1}^t y_s - \sum_{s=1}^t \hatell_{s,\hatx}  &\geq  -5\sqrt{f_TC_1t}, \label{eqn:  alg 7 jump condition 1}  \\
             \sum_{s=1}^t y_s - \sum_{s=1}^t \hatell_{s,x}  &\leq -25 \sqrt{f_TC_1 t}, \ \ \  \forall x\neq \hatx,  \label{eqn: alg 7 jump condition 2}  
         \end{align} 
         \label{line: check 1}
         \nl \textbf{then}\ $t_0 \leftarrow t, \; \hatDelta_x\leftarrow \frac{1}{t_0}\left(\sum_{s=1}^{t_0}\hatell_{s,x} - \hatell_{s,\hatx}\right)$, \ \ \textbf{break}.\label{line: hat_delta_x}
     }
     \smallgap 
     \texttt{// Phase 2} \\
     \nl \For{$t=t_0+1, \ldots$}{\label{line: phase2_start}
        \smallgap
        \nl Let $p_t =\OP(t,\hatDelta)$ and $\widetilde{p}_t=\frac{1}{2}  e_{\hatx}+\frac{1}{2}p_t$. \label{line: p_tilde}\\
        \nl Sample $x_t\sim \widetilde{p}_t$ and observe $y_t$. \label{line: sample_arm_x}
        
        
        \nl Calculate $\hatell_{t,x}$ and $\hatDelta_{t,x}$ based on \pref{eqn: hatell construction} and \pref{eqn: hat_delta}. \\
        \smallgap
        \nl \textbf{if}
        \vspace{-8pt}
        \begin{align}
             &\exists x\neq \hatx, ~~ \hatDelta_{t,x} \notin \left[0.39\hatDelta_x, 1.81\hatDelta_x\right]\ \ \text{or} \label{eqn: alg_condition_1}\\ 
             &\sum_{s=t_0+1}^t \left(y_s - {\hatell}_{s,\hatx}\right) \geq 20\sqrt{\fT C_1t_0}. \label{eqn: alg_condition_2}
         \end{align}
         \nl \textbf{then break}.\label{line: phase2_end}
     }
     \nl \textbf{Return } $t_0$.
\end{algorithm}

Then, we define the average empirical gap in $[1,t]$ for $x\neq \hatx$ and $t$ in Phase 2 as the following: 
\begin{align}
    \displaystyle\hatDelta_{t,x}&=\frac{1}{t}\left(\sum_{s=1}^{t_0}\hatell_{s,x}+(t-t_0)\Rob_{t,x} - \sum_{s=1}^t \hatell_{s,\hatx}\right)\label{eqn: hat_delta}
\end{align}
where
\begin{align*}
    \Rob_{t,x}=\clip_{[-1,1]}\left(\Catoni_{\alpha_x}\left(\big\{\hatell_{\tau,x}\big\}_{\tau=t_0+1}^{t}\right)\right)  
\end{align*}
with $\alpha_x=\left(\frac{4\log(t|\calX|/\delta)}{t-t_0+\sum_{\tau=t_0+1}^t 2\|x\|_{S_\tau^{-1}}^2}\right)^{\frac{1}{2}}$ (\emph{c.f.} \pref{fig:catoni}). 
Note that we use a simple average estimator for $\hatx$, but a hybrid of average estimator of Phase 1 and robust estimator of Phase 2 for other actions.
These gap estimators are useful in monitoring the non-stochasticity of the environment, which is done via the tests \pref{eqn: alg_condition_1} and \pref{eqn: alg_condition_2}. The first condition (\pref{eqn: alg_condition_1}) checks whether the average empirical gap $\hatDelta_{t,x}$ is still close to the estimated gap $\hatDelta_x$ at the end of Phase 1. The second condition (\pref{eqn: alg_condition_2}) checks whether the regret against $\hatx$ incurred in Phase 2 is still tolerable. 
It can be shown that (see \pref{lem: phase-2-never-end}), with high probability \pref{eqn: alg_condition_1} and \pref{eqn: alg_condition_2} do not hold in a stochastic environment. 
Therefore, when either event is detected, \algse terminates and returns the value of $t_0$ to \alg, which will then run \algse again from scratch with $L=2t_0$.

In the following subsections, we provide a sketch of analysis for \alg, further revealing the ideas behind our design.

\subsection{Analysis for the Adversarial Setting (\pref{thm: fully adversarial})}\label{sec: adv}
We first show that at any time $t$ in Phase 2, with high probability, $\hatx$ is always the best action so far.
\begin{lemma}\label{lem: SAO-argmin-v2}
With probability at least $1-\delta$, for at any $t$ in Phase 2,  we have $\hatx \in \argmin_{x\in\calX} \sum_{s=1}^t \ell_{s,x}$. 
\end{lemma}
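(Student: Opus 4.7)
The plan is to prove that $L_x(t) := \sum_{s=1}^t \ell_{s,x}$ satisfies $L_x(t) \geq L_{\hatx}(t)$ for every $x \in \calX$ and every $t$ in Phase 2. I split the argument into two stages: (I) establish a substantial Phase 1 lead at time $t_0$, and (II) show this lead is preserved throughout Phase 2.

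\textbf{Stage (I).} Subtracting \pref{eqn:  alg 7 jump condition 1} from \pref{eqn: alg 7 jump condition 2} immediately yields $\hat L_x(t_0) - \hat L_{\hatx}(t_0) \geq 20\sqrt{\fT C_1 t_0}$ for every $x \neq \hatx$, where $\hat L_x(t_0) := \sum_{s\leq t_0}\hatell_{s,x}$. Conditioning on (a) an Azuma--Hoeffding bound for $|Y(t_0) - \sum_{s\leq t_0}\ell_{s,x_s}|$ and (b) the guarantee of \pref{assum:  adversarial alg} for $\calA$, denote $D_x := \hat L_x(t_0) - L_x(t_0)$ and $R(t_0,x) := L_x(t_0) - \sum_{s\leq t_0}\ell_{s,x_s}$. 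Combining \pref{eqn:  alg 7 jump condition 1} with \pref{assum:  adversarial alg} at $\hatx$ and event (a) gives a linear inequality in $|D_{\hatx}|$ that, using $C_2 \geq 20$, forces $|D_{\hatx}| = O(\sqrt{\fT C_1 t_0}/C_2)$ and hence $R(t_0,\hatx) = O(\sqrt{\fT C_1 t_0})$. Symmetrically, \pref{eqn: alg 7 jump condition 2} combined with \pref{assum:  adversarial alg} at $x \neq \hatx$ forces $R(t_0,x) \geq (1 - O(1/C_2))\cdot 25\sqrt{\fT C_1 t_0}$. Since $L_{\hatx}(t_0) - L_x(t_0) = R(t_0,\hatx) - R(t_0,x)$, this yields the Phase 1 lead $L_x(t_0) - L_{\hatx}(t_0) \geq \Omega(\sqrt{\fT C_1 t_0})$.

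\textbf{Stage (II).} Fix any $t > t_0$ for which \algse has not terminated. Expanding $\hatDelta_{t,x}$ via \pref{eqn: hat_delta} and matching terms with $L_x(t) - L_{\hatx}(t)$ produces the identity
\begin{align*}
L_x(t) - L_{\hatx}(t) \;=\; t\hatDelta_{t,x} \;-\; D_x \;-\; B_x \;+\; D_{\hatx} \;+\; C'_{\hatx},
\end{align*}
where $B_x := (t-t_0)\Rob_{t,x} - \sum_{s=t_0+1}^t \ell_{s,x}$ is the Phase 2 Catoni deviation for $x$, and $C'_{\hatx} := \sum_{s=t_0+1}^t (\hatell_{s,\hatx} - \ell_{s,\hatx})$ is a martingale sum. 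The non-termination of \pref{eqn: alg_condition_1} forces $\hatDelta_{t,x} \geq 0.39\hatDelta_x \geq 7.8\sqrt{\fT C_1/t_0}$, so $t\hatDelta_{t,x} \geq 7.8\sqrt{\fT C_1 t}$ for $t \geq t_0$. The martingale $|C'_{\hatx}|$ is bounded by Azuma using $|\hatell_{s,\hatx}| \leq 2$ (a consequence of $\tildep_{s,\hatx} \geq 1/2$); the Catoni deviation $|B_x|$ is bounded via the non-i.i.d. concentration of Catoni's estimator from \citep{wei2020taking}, whose variance proxy $\sum_{s=t_0+1}^t\|x\|_{\tildeS_s^{-1}}^2$ is controlled by the OP constraint \pref{eqn: opt-2-constraint}. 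The deviation $|D_{\hatx}|$ is already bounded by Stage (I), and $|D_x|$ is handled via the case split described next. Since all four deviations are dominated by $t\hatDelta_{t,x}$, the identity yields $L_x(t) - L_{\hatx}(t) > 0$.

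\textbf{Main obstacle.} The most delicate step is bounding $|D_x|$ for $x \neq \hatx$, because \pref{assum:  adversarial alg} only couples $|D_x|$ to $R(t_0,x)$ through $C_2|D_x| \leq \sqrt{C_1 t_0} + R(t_0,x)$, and the adversary can in principle make $R(t_0,x)$ large. The key observation, via the identity $R(t_0,x) = R(t_0,\hatx) + (L_x(t_0) - L_{\hatx}(t_0))$, is that whenever $R(t_0,x)$ is large the Phase 1 lead $L_x(t_0) - L_{\hatx}(t_0)$ grows in tandem and absorbs the resulting blow-up in $|D_x|$ (making the conclusion immediate), while whenever $R(t_0,x) = O(\sqrt{\fT C_1 t_0})$ we get $|D_x| = O(\sqrt{\fT C_1 t_0}/C_2)$, which is dominated by $t\hatDelta_{t,x}$. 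Carefully tracking the numerical constants with $C_2 \geq 20$ throughout, and a union bound over $t \in [t_0+1,T]$, $x \in \calX$, and the underlying concentration events, yields the claimed $1-\delta$ probability.
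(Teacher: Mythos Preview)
Your approach mirrors the paper's: express $\sum_{s\leq t}(\ell_{s,x}-\ell_{s,\hatx})$ as $t\hatDelta_{t,x}$ minus deviation terms, bound each deviation (via \pref{assum: adversarial alg} for Phase~1, Catoni concentration plus the \OP constraint for $|B_x|$, and Freedman for the $\hatx$ terms), then compare against $t\hatDelta_{t,x}\geq 0.39\,t\hatDelta_x$ from the non-triggering of \pref{eqn: alg_condition_1}. The identity and the list of ingredients are correct.

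Where you diverge is the treatment of $|D_x|$, and your case split is both unnecessary and imprecisely stated. In Case~1 you say the large Phase~1 lead ``makes the conclusion immediate''; taken literally this is false, since in an adversarial environment Phase~2 losses can erase any lead held at $t_0$. What actually works is that a large $R(t_0,x)$ forces $t_0\hatDelta_x$ to be large, so $t\hatDelta_{t,x}\geq 0.39\,t\hatDelta_x$ dominates $|D_x|$ \emph{through your identity}, not around it. The paper avoids the case split entirely: rearranging \pref{eqn: eqL guarantee 1} as $(C_2-1)|D_x|\leq \sqrt{C_1t_0}+\sum_{s\leq t_0}\hatell_{s,x}-\sum_{s\leq t_0}\ell_{s,x_s}$, then using \pref{eqn: alg 7 jump condition 1} and Azuma to pass from $\sum_s\ell_{s,x_s}$ to $\sum_s\hatell_{s,\hatx}$, yields directly $|D_x|\leq \tfrac{1}{C_2-1}\bigl(7\sqrt{f_TC_1t_0}+t_0\hatDelta_x\bigr)\leq \tfrac{1.4\,t_0\hatDelta_x}{C_2-1}$, uniformly in $x$. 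Note that this, like the Catoni bound $|B_x|=O(t\hatDelta_x)$, scales with $\hatDelta_x$; the comparison that closes the argument is against $0.39\,t\hatDelta_x$, not the weaker $7.8\sqrt{f_TC_1t}$ you also record (which would not suffice for actions with large $\hatDelta_x$). Finally, a minor bookkeeping point: on the last round of Phase~2 the test \pref{eqn: alg_condition_1} may already be violated, so one should use $\hatDelta_{t-1,x}$ (as the paper does) and absorb the one-round difference by a constant.
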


\begin{proof}[Proof sketch]
%
The idea is to prove that for any $x\ne \hatx$, the deviation between the actual gap $\sum_{s=1}^t(\ell_{s,x}-\ell_{s, \hatx})$ and the estimated gap $t\hatDelta_{t,x}$ is no larger than $\order(t\hatDelta_x)$.
This is enough to prove the statement since $t\hatDelta_{t,x}$ is of order $\Omega(t\hatDelta_x)$ in light of the test in \pref{eqn: alg_condition_1}.

Bounding the derivation for Phase 2 is somewhat similar to the analysis of \pref{alg:REOLB}, and here we only show how to bound the derivation for Phase 1: $\sum_{s=1}^{t_0}(\ell_{s,x}-\hatell_{s,x})$.
We start by rearranging \pref{eqn: eqL guarantee 1} to get:
$(C_2-1)\left|\sum_{s=1}^{t_0}(\ell_{s,x}-\hatell_{s,x})\right| \leq \sqrt{C_1t_0}-\sum_{s=1}^{t_0}(\ell_{s,x_s}-\hatell_{s,x}) = \sqrt{C_1t_0}-\sum_{s=1}^{t_0}(\ell_{s,x_s}-\hatell_{s,\hatx})+t_0\hatDelta_x$.
By the termination conditions of Phase 1, we have $\sum_{s=1}^{t_0}(\ell_{s,x_s}-\hatell_{s,\hatx})\geq -5\sqrt{f_TC_1t_0}$ and $\hatDelta_x\geq 20\sqrt{f_TC_1/t_0}$, which then shows $\left|\sum_{s=1}^{t_0}(\ell_{s,x}-\hatell_{s,x})\right|\leq \frac{6\sqrt{f_TC_1t_0}+t_0\hatDelta_x}{C_2-1}= \order(t_0\hatDelta_x)$ as desired.
(See \pref{app: adv-setting} for the full proof.)
\end{proof}

We then prove that, importantly, the regret in each epoch is bounded by $\widetilde{\order}(\sqrt{t_0})$ (not square root of the epoch length):

\begin{lemma}\label{lem: adversarial thm}
    With probability at least $1-\delta$, for any time $t$ in Phase 2, we have for any $x\in \calX$, 
    \begin{align*}
         \sum_{s=1}^t\left(\ell_{s,x_s}-\ell_{s,x}\right)=\order\left(\sqrt{C_1t_0f_T}\right).
          \end{align*}
\end{lemma}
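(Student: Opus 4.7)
The plan is to reduce bounding the regret against an arbitrary $x$ to bounding the regret against the pseudo-best action $\hatx$, and then handle Phase~1 and Phase~2 separately. By \pref{lem: SAO-argmin-v2}, $\sum_{s=1}^t \ell_{s,\hatx} \le \sum_{s=1}^t \ell_{s,x}$ for every $x\in\calX$ and every $t$ in Phase~2, so it suffices to show $\sum_{s=1}^t(\ell_{s,x_s}-\ell_{s,\hatx})=\order(\sqrt{C_1 t_0 f_T})$. The Phase~1 contribution $\sum_{s=1}^{t_0}(\ell_{s,x_s}-\ell_{s,\hatx})$ is handled directly by applying \pref{assum:  adversarial alg} to $\calA$ with target action $\hatx$: inequality \pref{eqn: eqL guarantee 1} yields the bound $\sqrt{C_1 t_0}$ once the non-positive $-C_2\lvert\cdot\rvert$ term is dropped.

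For the Phase~2 portion I would use the algebraic identity
\begin{align*}
\ell_{s,x_s}-\ell_{s,\hatx} = (y_s-\hatell_{s,\hatx}) - \epsilon_s(x_s) + (\hatell_{s,\hatx} - \ell_{s,\hatx}),
\end{align*}
which follows from $y_s = \ell_{s,x_s}+\epsilon_s(x_s)$ together with the unbiasedness $\E[\hatell_{s,\hatx}\mid \mathcal{F}_{s-1}]=\ell_{s,\hatx}$ guaranteed by the construction in \pref{eqn: hatell construction}. Summing over $s\in(t_0,t]$, the first piece is at most $20\sqrt{f_T C_1 t_0}$ because the termination test \pref{eqn: alg_condition_2} has not yet fired at time $t$ (otherwise we would no longer be in Phase~2). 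The remaining two pieces are martingale-difference sums with $\order(1)$ per-step increments (using $|\hatell_{s,\hatx}|\le 2$, which follows from the mixing $\tildep_{s,\hatx}\ge\tfrac12$ enforced in \pref{line: p_tilde}) and $\order(1)$ per-step conditional variances, so Freedman's inequality together with a union bound over $t$ is the natural tool.

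The main obstacle is that a vanilla Freedman bound on those martingales only delivers a deviation of order $\sqrt{(t-t_0)f_T}$, which can exceed the desired $\sqrt{C_1 t_0 f_T}$ once Phase~2 runs much longer than $C_1 t_0$ rounds. I would close this gap by also exploiting the other Phase~2 test, \pref{eqn: alg_condition_1}, which pins $\hatDelta_{t,x}\in[0.39\hatDelta_x,\,1.81\hatDelta_x]$ for every $x\neq\hatx$. Combined with the concentration of the Catoni-based robust estimator $\Rob_{t,x}$ over the Phase~2 block (analyzed exactly as in \pref{sec: warmup}), the stability of $\hatDelta_{t,x}$ should pin $\sum_{s=1}^t(\hatell_{s,\hatx}-\ell_{s,\hatx})$ down to $\order(\sqrt{C_1 t_0 f_T})$ \emph{independently} of $t-t_0$: any larger deviation would itself shift $\hatDelta_{t,x}$ outside its admissible window and would therefore have already terminated Phase~2. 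This step parallels, and extends to Phase~2, the Phase~1 deviation argument already used to prove \pref{lem: SAO-argmin-v2}, and is the technical bottleneck of the proof. Once the martingale contributions are brought under $\order(\sqrt{C_1 t_0 f_T})$, summing the three pieces in the identity together with the Phase~1 bound completes the argument.
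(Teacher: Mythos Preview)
Your reduction to comparator $\hatx$ via \pref{lem: SAO-argmin-v2}, the Phase~1 bound via \pref{assum:  adversarial alg}, and the use of the test \pref{eqn: alg_condition_2} to control $\sum_{s>t_0}(y_s-\hatell_{s,\hatx})$ are all correct and match the paper. The gap is in how you handle the two remaining martingale pieces.

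You treat $-\sum_{s>t_0}\epsilon_s(x_s)$ and $\sum_{s>t_0}(\hatell_{s,\hatx}-\ell_{s,\hatx})$ as separate martingales with $\order(1)$ conditional variance, correctly identify that this yields only $\order(\sqrt{t-t_0})$, and then propose to repair this using the test \pref{eqn: alg_condition_1}. But that test constrains $\hatDelta_{t,x}$, which is built from $\hatell_{s,\hatx}$, the Phase~1 estimates, and $\Rob_{t,x}$; it does not see the raw noise $\epsilon_s(x_s)$ for $x_s\neq\hatx$. Individually, $\sum_{s=t_0+1}^t\epsilon_s(x_s)$ genuinely has $\Theta(1)$ per-step variance and is $\Theta(\sqrt{t-t_0})$ with constant probability, and no test in the algorithm can certify otherwise. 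So the proposed route through \pref{eqn: alg_condition_1} cannot close the gap for that term.

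The paper's fix is different and does not use \pref{eqn: alg_condition_1} at all. It regroups your two martingale pieces into $\sum_{s}(\hatell_{s,\hatx}-\ell_{s,\hatx}-\epsilon_s(\hatx))$ and $\sum_{s}(\epsilon_s(\hatx)-\epsilon_s(x_s))$ (whose sum equals your two pieces). The point of this regrouping is that each summand vanishes on the event $\{x_s=\hatx\}$ up to a factor $(\tfrac{1}{\tildep_{s,\hatx}}-1)$, so both have per-step conditional variance $\order(1-\tildep_{s,\hatx})$ rather than $\order(1)$. One then uses the structure of $\OP$ (specifically \pref{lem: gap-dependent b}) together with the Phase~1 exit condition $\hatDelta_{\min}\ge 20\sqrt{f_TC_1/t_0}$ to show $1-\tildep_{s,\hatx}\le \order\!\big(\tfrac{d\beta_s}{s\hatDelta_{\min}^2}\big)=\order\!\big(\tfrac{t_0}{s f_T}\big)$. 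Summing over $s$ gives total variance $\order(t_0)$, and Freedman then yields the desired $\order(\sqrt{C_1t_0f_T})$ bound for the martingale part.
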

\begin{proof}[Proof sketch]
By \pref{lem: SAO-argmin-v2}, it suffices to consider $x = \hatx$.
By \pref{eqn: eqL guarantee 1}, we know that the regret for the first $t_0$ rounds is directly bounded by $\order\left(\sqrt{C_1t_0}\right)$.
For the regret incurred in Phase 2, we decompose it as the sum of $\sum_{s=t_0+1}^t (y_s - \hatell_{s,\hatx})$, 
$\sum_{s=t_0+1}^t  (\hatell_{s,\hatx} -\ell_{s,\hatx} - \epsilon_s(\hatx))$,
and $\sum_{s=t_0+1}^t  (\epsilon_s(\hatx) - \epsilon_s(x_s))$.
The first term is controlled by the test in \pref{eqn: alg_condition_2}.
The second and third terms are martingale difference sequences
with variance bounded by $\order(1-\tildep_{s, \hatx})$, which as we further show is at most $\nicefrac{1}{s\hatDelta_{\min}^2}$ with $\hatDelta_{\min}=\min_{x\neq\hatx} \hatDelta_{x}$.
By combining \pref{eqn:  alg 7 jump condition 1} and \pref{eqn:  alg 7 jump condition 2}, it is clear that $\hatDelta_{\min} \geq 20\sqrt{f_T C_1 / t_0}$ and thus the variance is in the order of $t_0/s$.
Applying Freedman's inequality, the last two terms are thus bounded by $\Otil(\sqrt{t_0})$ as well, proving the claimed result
(see \pref{app: adv-setting} for the full proof).
\end{proof}

Finally, to obtain \pref{thm: fully adversarial}, it suffices to apply \pref{lem: adversarial thm} and the fact that the number of epochs is $\order(\log T)$.
\subsection{Analysis for the Corrupted Setting (\pref{thm: sto-and-corrupt})}\label{sec: sto}

The key for this analysis is the following lemma.

\begin{lemma}\label{lem: phase-2-never-end}
In the corrupted setting, \algse ensures with probability at least $1-15\delta$:
\begin{itemize}
        \item $t_0\leq  \max\left\{\frac{900f_TC_1}{\Delta_{\min}^2}, \frac{900C^2}{f_TC_1}, L\right\}$.    
        \item If $C\leq \frac{1}{30}\sqrt{f_TC_1L}$, then 1) $\hatx=x^*$;
        2) $\hatDelta_x\in  [0.7\Delta_{x}, 1.3\Delta_x]$ for all $x$;
        and 3) Phase 2 never ends.
    \end{itemize}
\end{lemma}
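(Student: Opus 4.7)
The plan is to split the argument into (i) the upper bound on $t_0$ and (ii) the three consequences under the assumption $C\leq\frac{1}{30}\sqrt{f_TC_1L}$, after first fixing a ``good event'' of probability at least $1-15\delta$ on which the whole proof runs. The good event consists of four families: (a) the guarantee of \pref{assum:  adversarial alg} for every arm and every $t\geq L_0$; (b) an Azuma bound $|\sum_{s=1}^t\epsilon_s(x_s)|=\order(\sqrt{t\log(1/\delta)})$ for all $t$; (c) Freedman concentration of the Phase~1 unbiased estimator $|\sum_{s=1}^t(\hatell_{s,x}-\ell_{s,x})|$ against its realized variance; and (d) Catoni concentration \citep{wei2020taking} for $\Rob_{t,x}$ in Phase~2, invoked together with the variance constraint built into \pref{eqn: opt-2-constraint}. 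A union bound over these four families, the $|\calX|$ arms, and the $\order(\log T)$ epochs accounts for the $15\delta$ budget.

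For the upper bound on $t_0$, set $t^\star=\max\{900 f_TC_1/\Delta_{\min}^2,\,900C^2/(f_TC_1),\,L\}$ and show that the Phase~1 test fires by $t^\star$ with candidate $\hatx=x^*$. The key decomposition is
\begin{equation*}
\sum_{s=1}^t(y_s-\hatell_{s,x})=\sum_{s=1}^t(\ell_{s,x_s}-\ell_{s,x})+\sum_{s=1}^t(\ell_{s,x}-\hatell_{s,x})+\sum_{s=1}^t\epsilon_s(x_s).
\end{equation*}
For $x=x^*$, \pref{eqn:  eqL guarantee 1} shows the first two sums add to at least $-\sqrt{C_1 t}$; folding in the noise term and the $2C$ corruption slack then yields a lower bound $\geq -5\sqrt{f_TC_1 t}$ at $t=t^\star$. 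For $x\neq x^*$, the first sum is at most $-t\Delta_x+2C\leq -t\Delta_{\min}+2C$, and plugging in each of the three cases defining $t^\star$ gives an upper bound $\leq -25\sqrt{f_TC_1 t^\star}$ with slack, so the Phase~1 test has already fired.

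Now suppose $C\leq\frac{1}{30}\sqrt{f_TC_1L}\leq\frac{1}{30}\sqrt{f_TC_1 t_0}$. I first argue $\hatx=x^*$ by contradiction: if $\hatx\neq x^*$, then \pref{eqn:  alg 7 jump condition 2} applied with $x=x^*$ would demand $\sum_{s=1}^{t_0}(y_s-\hatell_{s,x^*})\leq -25\sqrt{f_TC_1 t_0}$, but the decomposition above, \pref{assum:  adversarial alg} at $x^*$, and the smallness of $C$ together give the reverse inequality up to a small multiple of $\sqrt{f_TC_1 t_0}$. For the gap accuracy, the empirical gap
\begin{equation*}
\hatDelta_x=\frac{1}{t_0}\sum_{s=1}^{t_0}\bigl(\hatell_{s,x}-\hatell_{s,x^*}\bigr)
\end{equation*}
has conditional mean $\Delta_x$ up to corruption bias of order $C/t_0$; the Phase~1 concentration event (c) together with the fact that the triggered test enforces $t_0\Delta_{\min}\gtrsim\sqrt{f_TC_1 t_0}$ yields a relative deviation strictly below $30\%$, i.e.\ $\hatDelta_x\in[0.7\Delta_x,1.3\Delta_x]$.

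Finally, to show Phase~2 never ends, I check that neither \pref{eqn: alg_condition_1} nor \pref{eqn: alg_condition_2} ever fires. The statistic $\hatDelta_{t,x}$ is a convex combination of the Phase~1 gap (already in $[0.7\Delta_x,1.3\Delta_x]$) and $\Rob_{t,x}-\frac{1}{t-t_0}\sum_s\hatell_{s,\hatx}$; the Catoni bound paired with the \OP variance constraint, mirroring the argument of \pref{lem:twice-main}, keeps the Phase~2 piece within a constant multiplicative factor of $\Delta_x$, so the convex combination sits inside $[0.39\hatDelta_x,1.81\hatDelta_x]$ with slack. For \pref{eqn: alg_condition_2} I decompose
\begin{equation*}
\sum_{s=t_0+1}^t(y_s-\hatell_{s,\hatx})=\sum_s\epsilon_s(x_s)+\sum_s(\ell_{s,x_s}-\ell_{s,\hatx})+\sum_s(\ell_{s,\hatx}-\hatell_{s,\hatx}).
\end{equation*}
Since $\hatx=x^*$, the middle sum is nonpositive up to a $2C$ slack; since $\tildep_{s,\hatx}\geq 1/2$, the last sum is a bounded martingale with per-step variance $\order(1)$, so Freedman gives a deviation of $\order(\sqrt{t\log(1/\delta)}+C)$, strictly less than $20\sqrt{f_TC_1 t_0}$ under the assumed smallness of $C$. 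The main obstacle will be balancing the numerical constants $(5,25,0.39,1.81,20,30,900)$ across these three tests so that the accumulated concentration errors from (a)--(d) and the factor-$2$ slack between $L$, $t_0$, and $t^\star$ simultaneously fit inside the remaining budget.
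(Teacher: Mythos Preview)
Your plan for the first bullet and for items 1) and 2) of the second bullet is essentially the paper's, with one caveat: your ``event (c)'' is not available, since $\calA$ is a black box and nothing is assumed about the variance of its estimators $\hatell_{s,x}$. The Phase~1 deviation bound must come from \pref{assum:  adversarial alg} itself: rearranging \pref{eqn: eqL guarantee 1} gives $C_2\big|\sum_{s=1}^t(\ell_{s,x}-\hatell_{s,x})\big|\leq\sqrt{C_1t}+\sum_{s=1}^t(\ell_{s,x}-\ell_{s,x_s})\leq\sqrt{C_1t}+t\Delta_x+2C$, and this (not Freedman) is what drives both the $\hatx=x^*$ contradiction and the gap accuracy.

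The real gap is your argument that \pref{eqn: alg_condition_2} never fires. You write that since $\hatx=x^*$ the middle sum $\sum_{s=t_0+1}^t(\ell_{s,x_s}-\ell_{s,\hatx})$ is ``nonpositive up to a $2C$ slack''. This is backwards: with $\hatx=x^*$ this sum equals $\sum_s\Delta_{x_s}$ plus corruption, which is \emph{nonnegative} up to $2C$ and in principle grows with $t$. You need an \emph{upper} bound on the Phase~2 regret, and that requires the structure of \OP (via \pref{lem: close gap lemma looser}) together with $\hatDelta_x\in[0.7\Delta_x,1.3\Delta_x]$ to get $\sum_s\sum_x p_{s,x}\Delta_x=\order(d\htT\log T/\Delta_{\min})$. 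Relatedly, bounding the martingale terms with per-step variance $\order(1)$ gives a Freedman deviation of order $\sqrt{t}$, not $\sqrt{t_0}$; since $t$ can be arbitrarily larger than $t_0$, this does not stay below $20\sqrt{f_TC_1t_0}$. The fix is to regroup so that every martingale increment vanishes (or has variance $\order(1-\tildep_{s,\hatx})$) on the event $x_s=\hatx$: write $\epsilon_s(x_s)-\epsilon_s(\hatx)$ and $(\ell_{s,\hatx}+\epsilon_s(\hatx))\big(1-\one\{x_s=\hatx\}/\tildep_{s,\hatx}\big)$ instead of your three pieces. Then \pref{lem: gap-dependent b} gives $1-\tildep_{s,\hatx}\leq\order(d\gtt_s/(s\hatDelta_{\min}^2))$, whose sum over $s$ is $\order(t_0)$ (using $\hatDelta_{\min}\geq 20\sqrt{f_TC_1/t_0}$) rather than $\order(t)$, and only then does the total fit under $20\sqrt{f_TC_1t_0}$.
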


Using this lemma, we show a proof sketch of \pref{thm: sto-and-corrupt} for the stochastic case (i.e. $C=0$). The full proof is deferred to \pref{app: sto-setting}. 

\begin{proof}[Proof sketch for \pref{thm: sto-and-corrupt} with $C=0$] 
By \pref{lem: phase-2-never-end}, we know that after roughly $\Theta\left(\frac{f_TC_1}{\Delta_{\min}^2}\right)$ rounds in Phase~1, the algorithm finds $\hatx=x^*$, estimates $\hatDelta_x$ up to a constant factor of $\Delta_x$, and 
enters Phase~2 without ever going back to Phase~1. By \pref{eqn: eqL guarantee 1}, the regret in Phase 1 can be upper bounded by $\order\left(\sqrt{C_1\cdot \frac{f_TC_1}{\Delta_{\min}^2}}\right)=\order\left(\frac{C_1}{\Delta_{\min}}\sqrt{f_T}\right)$.

To bound the regret in Phase 2, we show that as long as $t$ is larger than a problem-dependent constant $T^*$, there exist $\{N_x\}_{x\in \calX}$ satisfying $\sum_{x\in \calX}N_x\Delta_x\leq 2c(\calX, \theta)$ such that $\{p^*_{t,x}\}_{x\in \calX\setminus\{x^*\}} = \big\{\frac{\htt N_x}{2t}\big\}_{x\in\calX\setminus\{x^*\}}$ is a feasible solution of \pref{eqn: opt-2-constraint}. Therefore, we can bound the regret in this regime as follows:
\begin{align*}
    &\sum_{s=T^*+1}^t\sum_{x\in \calX} \widetilde{p}_{s,x}\Delta_x\\ &= \sum_{s=T^*+1}^t\sum_{x\in \calX} \frac{1}{2} \tag{$x^*=\hatx$} p_{s,x}\Delta_x \\
    &\leq \sum_{s=T^*+1}^t\sum_{x\in \calX} \frac{1}{1.4} p_{s,x}\hatDelta_x \tag{$\hatDelta_x\in [0.7\Delta_x, 1.3\Delta_x]$} \\
    &\leq \sum_{s=T^*+1}^t\sum_{x\in \calX} \frac{1}{1.4} p^*_{s,x}\hatDelta_x \tag{optimality of $p_s$}\\
    &\leq \sum_{s=T^*+1}^t\sum_{x\in \calX} p^*_{s,x}\Delta_x \tag{$\hatDelta_x\in [0.7\Delta_x, 1.3\Delta_x]$}\\
    &\leq \order\left(c(\calX, \theta)\log^2 T\right). \tag{definition of $p_s^*$}
\end{align*}

Combining the regret bounds in Phase~1 and Phase~2, we prove the results for the stochastic setting.
\end{proof}


\section{Conclusion}

In this work, we make significant progress on improving the robustness and adaptivity of linear bandit algorithms.
Our algorithms are the first to achieve near-optimal regret in various different settings, without having any prior knowledge on the environment.
Our techniques might also be useful for more general problems such as linear contextual bandits.

In light of the work~\citep{zimmert2019optimal} for multi-armed bandits that shows a simple Follow-the-Regularized-Leader algorithm achieves optimal regret in different settings, one interesting open question is whether there also exists such a simple Follow-the-Regularized-Leader algorithm for linear bandit with the same adaptivity to different settings.
In fact, it can be shown that their algorithm has a deep connection with \OP in the special case of multi-armed bandits, but we are unable to extend the connection to general linear bandits.

\section*{Acknowledgements}
We thank Tor Lattimore and Julian Zimmert for helpful discussions. HL thanks Ilias Diakonikolas and Anastasia Voloshinov for initial discussions in this direction.
The first four authors are supported by NSF Awards IIS-1755781 and IIS-1943607.

\bibliography{ref}
\bibliographystyle{icml2021}
\onecolumn
\appendix


\section{Auxiliary Lemmas for \OP}

\begin{proof}[\textbf{Proof of \pref{lem: optimization-property-main}}]
    Consider the minimizer $p^*$ of the following constrained minimization problem for some $\xi > 0$:
    \begin{align}
    \min_{p\in \Delta_{\mathcal{X}}} \sum_{x\in \mathcal{X}}p_x\hatDelta_{x}+\frac{2}{\xi}\left(-\ln(\det(S(p)))\right),\label{eq: constrained opt}
    \end{align}
    where $S(p)=\sum_{x\in \mathcal{X}}p_xxx^\top$. We will show that
    \begin{align}
        \sum_{x\in \mathcal{X}}p^*_x\hatDelta_{x} &\le \frac{2d}{\xi}, \label{eq: OP eq 1}\\
        \|x\|_{S(p^*)^{-1}}^2 &\le \frac{\xi \hatDelta_{x}}{2}+d, \;\forall x\in \mathcal{X}.  \label{eq: OP eq 2}
    \end{align}
    To prove this, first note that relaxing the constraints from $p\in \calP_\mathcal{X}$ to the set of sub-distributions $\{p: \sum_{x\in \mathcal{X}}p_x\leq 1 \text{ and } p_x \geq 0,\; \forall x\}$ does not change the solution of this problem.
    This is because for any sub-distribution, we can always make it a distribution by increasing the weight of some $x$ with $\hatDelta_{x}=0$ (at least one exists) while not increasing the objective value (since $\ln(\det(S(p)))$ is non-decreasing in $p_x$ for each $x$).
    Therefore, applying the KKT conditions, we have
    \begin{align}
        \hatDelta_{x} -\frac{2}{\xi}x^\top S(p^*)^{-1}x-\lambda_x+\lambda = 0,\label{eq: KKT}
    \end{align}
    where $\lambda_x, \lambda\ge 0$ are Lagrange multipliers.
    Plugging in the optimal solution $p^*$ and taking summation over all $x\in \mathcal{X}$, we have
    \begin{align*}
        0 &= \sum_{x\in \mathcal{X}}p^*_x\hatDelta_{x}-\frac{2}{\xi}\sum_{x\in \mathcal{X}}p^*_xx^\top S(p^*)^{-1}x -\sum_{x\in \mathcal{X}}\lambda_xp^*_x + \lambda \\
        &= \sum_{x\in \mathcal{X}}p^*_x\hatDelta_{x} - \frac{2}{\xi}\Tr(S(p^*)^{-1} S(p^*))+\lambda \tag{complementary slackness} \\
        &=\sum_{x\in \mathcal{X}}p_x^*\hatDelta_{x}-\frac{2d}{\xi}+\lambda \\
        &\ge \sum_{x\in \mathcal{X}}p_x^*\hatDelta_{x}-\frac{2d}{\xi}\tag{$\lambda\ge 0$}.
    \end{align*}
    Therefore, we have $\sum_{x\in \mathcal{X}}p_x^*\Delta_{x}\leq \frac{2d}{\xi}$ and $\lambda\le \frac{2d}{\xi}$ as $\sum_{x\in \calX}p_x^*\hatDelta_x\geq 0$.
    This proves \pref{eq: OP eq 1}.
    For \pref{eq: OP eq 2}, using \pref{eq: KKT}, we have 
    \begin{align*}
        \|x\|_{S(p^*)^{-1}}^2 = \frac{\xi}{2}\left(\hatDelta_{x}-\lambda_x+\lambda\right) \le \frac{\xi}{2}\left(\hatDelta_{x}+\lambda\right)\le \frac{\xi\hatDelta_{x}}{2}+d,
    \end{align*}
    where the first inequality is due to $\lambda_x\geq 0$, and the second inequality is due to $\lambda\leq \frac{2d}{\xi}$.

    Now we show how to transform $p^*$ into a distribution satisfying the constraint of \OP.
    Choose  $\xi=\frac{\sqrt{t}}{\htt}$.  Let $G = \{x: \hatDelta_{x}\le \frac{1}{\sqrt{t}}\}$. We construct the distribution $q = \frac{1}{2}p^* + \frac{1}{2}q^{G, \delt}$, where $q^{G, \delt}$ is defined in \pref{lem: exist-d-distribution} with $\delt=\frac{1}{\sqrt{t}}$,
    and prove that $q$ satisfies \pref{eqn: opt-2-constraint}.
    Indeed, for all $x\notin G$, we have by definition $\sqrt{t}\hatDelta_{x} \geq 1$ and thus
    \begin{align*}
        \|x\|_{S(q)^{-1}}^2\le \|x\|_{\frac{1}{2}S(p^*)^{-1}}^2 \le \xi\hatDelta_{x}+2d = \frac{\sqrt{t}\hatDelta_{x}}{\htt}+2d\le \frac{t\hatDelta_{x}^2}{\htt}+4d;
    \end{align*}
for $x\in G$, according to \pref{lem: exist-d-distribution} below, we have $\|x\|_{S(q)^{-1}}^2 \leq \|x\|_{\frac{1}{2}S(q^{G, \delt})^{-1}}^2 \leq 4d\leq \frac{t\hatDelta_x^2}{\htt}+4d$.
    
    Combining the two cases above, we prove that $q$ satisfies \pref{eqn: opt-2-constraint}. According to the optimality of $p$, we thus have 
    \begin{align*}
        \sum_{x\in \calX}p_x\hatDelta_x &\leq \sum_{x\in \calX}q_x\hatDelta_x \\
        &= \frac{1}{2}\sum_{x\in\calX}p_x^*\hatDelta_x+\frac{1}{2}\sum_{x\in \calX}q^{G, \delt}_x\hatDelta_x \tag{by the definition of $q$} \\
        &\leq \frac{1}{2}\frac{2d}{\xi} + \frac{1}{2\sqrt{t}}+\frac{1}{2\sqrt{t}} \tag{by \pref{eq: OP eq 1}, $\hatDelta_x \leq 1$, the definition of $G$, and the choice of $\delt$}\\
        &\leq \frac{d\htt+1}{\sqrt{t}} = \order\left(\frac{d\htt}{\sqrt{t}}\right)\tag{by the definition of $\xi$},
    \end{align*}
    proving the lemma.
\end{proof}

The following lemma shows that for any $G\subset \calX$, there always exists a distribution $p\in \calP_\calX$ that puts most weights on actions from $G$, such that $\|x\|_{(\sum_{x\in \calX}p_xxx^\top)^{-1}}^2\le \order(d)$ for all $x\in G$.
\begin{lemma}\label{lem: exist-d-distribution}
    Suppose that $\calX\subseteq \mathbb{R}^d$ spans $\mathbb{R}^d$
    and let $p_{\calX}$ be the uniform distribution over $\calX$.
    For any $G\subseteq \calX$ and $\delt \in (0,\frac{1}{2}]$, there exists a distribution $q\in \calP_{G}$ such that $\|x\|_{S(q^{G, \delt})^{-1}}^2\leq 2d$ for all $x\in G$, where $q^{G, \delt}\triangleq\delt\cdot p_\calX+(1-\delt)\cdot q$ and $S(p) = \sum_{x\in \calX}p_xxx^\top$.
\end{lemma}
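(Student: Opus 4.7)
The overall plan is to combine the classical Kiefer--Wolfowitz $G$-optimal design theorem with a block-matrix (Schur-complement) argument to handle the case when $G$ may fail to span $\mathbb{R}^d$. The role of the mixing term $\delt\cdot p_\calX$ is only to restore invertibility of the design matrix on all of $\mathbb{R}^d$; on vectors that already lie in $V \triangleq \text{span}(G)$, this mixing should inflate the quadratic form $\|x\|_{S(\cdot)^{-1}}^2$ by at most a factor of $1/(1-\delt)\leq 2$, and the Kiefer--Wolfowitz theorem already supplies a bound of $\dim(V)\leq d$ on the quadratic form under $q$ alone (within $V$).

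Concretely, I would first apply Kiefer--Wolfowitz within the subspace $V$ of dimension $d'\leq d$: since $G$ spans $V$, there exists $q\in\calP_G$ such that $x^\top S(q)^{\dagger} x \leq d'$ for all $x\in G$, where $\dagger$ denotes the Moore--Penrose pseudo-inverse (which agrees with the ordinary inverse on $V$). Writing $M = S(q^{G,\delt}) = \delt A + (1-\delt)B$ with $A = S(p_\calX) \succ 0$ and $B = S(q)\succeq 0$ supported on $V$, I would choose a basis adapted to the orthogonal decomposition $\mathbb{R}^d = V\oplus V^\perp$, so that $B$ is block-diagonal with upper-left block $B_{11} = B|_V \succ 0$. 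For any $x\in G$, written as $x = (x_1,0)$ in this basis, the block-inverse formula gives
\begin{align*}
x^\top M^{-1} x = x_1^\top\bigl[M/M_{22}\bigr]^{-1}x_1, \qquad M/M_{22} = (1-\delt)B_{11} + \delt\bigl(A_{11} - A_{12}A_{22}^{-1}A_{21}\bigr).
\end{align*}
Since $A\succ 0$, its own Schur complement $A_{11}-A_{12}A_{22}^{-1}A_{21}$ is PSD, so $M/M_{22}\succeq (1-\delt)B_{11}$; inverting this PSD inequality and using the Kiefer--Wolfowitz bound on $V$ yields
\begin{align*}
x^\top M^{-1} x \;\leq\; \tfrac{1}{1-\delt}\,x_1^\top B_{11}^{-1} x_1 \;\leq\; \tfrac{d'}{1-\delt} \;\leq\; 2d,
\end{align*}
using $\delt\leq 1/2$ in the last step. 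This is exactly the claim.

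The main obstacle is precisely the rank deficiency of $B = S(q)$ when $G$ fails to span $\mathbb{R}^d$: the naive inequality $S(q^{G,\delt})^{-1}\preceq\tfrac{1}{1-\delt}S(q)^{-1}$ is meaningless because $S(q)^{-1}$ does not exist. The Schur-complement identity in the adapted basis is what lets one replace the nonexistent full-space inverse by the pseudo-inverse restricted to $V$, while cleanly dropping the uncontrolled contribution of the $A$-block through the PSD lower bound $M/M_{22}\succeq(1-\delt)B_{11}$. Once this structural observation is in place, the rest of the proof is a direct invocation of the standard Kiefer--Wolfowitz bound within $V$, followed by the elementary inequality $1/(1-\delt)\leq 2$.
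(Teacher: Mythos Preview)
Your proof is correct, but it takes a genuinely different route from the paper. The paper does not invoke Kiefer--Wolfowitz at all: it linearizes $\max_{x\in G}\|x\|_{S(p)^{-1}}^2$ as $\max_{q\in\calP_G}\langle S(q),S(p)^{-1}\rangle$, applies Sion's minimax theorem to swap the $\min$ over $p\in\{\delt p_\calX+(1-\delt)q':q'\in\calP_G\}$ and the $\max$ over $q$, and then for each fixed $q$ simply picks $p=\delt p_\calX+(1-\delt)q$ and uses the trace identity $\langle S(p),S(p)^{-1}\rangle=d$ to conclude $\langle S(q),S(p)^{-1}\rangle\leq d/(1-\delt)\leq 2d$. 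This argument never touches the rank-deficiency issue you worked around with the Schur complement, because $S(p)$ is full rank throughout. Your approach, by contrast, is constructive: you exhibit the witness $q$ explicitly as the $G$-optimal design on the subspace $V=\text{span}(G)$, and the Schur-complement step is exactly what is needed to show that adding the $\delt$-mixing only helps on $V$. Both proofs arrive at the same constant $2d$ via $1/(1-\delt)\leq 2$; yours gives a more concrete $q$ (useful if one actually wants to compute it), while the paper's saddle-point argument is shorter and sidesteps any linear-algebraic case analysis.
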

\begin{proof}
    Let $\calP_G^{\delt}=\{p\in \calP_{\calX}\;|\; p=\delt\cdot p_{\calX}+(1-\delt)\cdot q, q\in \calP_G\}$. As $\calX$ spans the whole $\mathbb{R}^d$ space, $\|x\|_{S(p)^{-1}}^2$ is well-defined for all $p\in \calP_G^{\delt}$. Then we have
    \begin{align}
        &\min_{p\in \calP_G^\delt} \max_{x\in G} \|x\|_{S^{-1}(p)}^2 \nonumber\\
        &= \min_{p\in \calP_G^\delt} \max_{q\in \calP_G}\left\langle\sum_{x\in \calX}q_xxx^\top, \left(\sum_{x\in \calX}p_xxx^\top\right)^{-1}\right\rangle \nonumber\\
        &=\max_{q\in \calP_G}\min_{p\in \calP_G^{\delt}}\left\langle\sum_{x\in \calX}q_xxx^\top, \left(\sum_{x\in \calX}p_xxx^\top\right)^{-1}\right\rangle \label{eqn: minimax}\\
        &\leq \max_{q\in \calP_G}\left\langle\sum_{x\in \calX}q_xxx^\top, \left(\sum_{x\in \calX}\left(\frac{\delt}{|\calX|}+(1-\delt)q_x\right)xx^\top\right)^{-1}\right\rangle \nonumber\\
        &\leq 2\max_{q\in \calP_G}\left\langle(1-\delt)\sum_{x\in \calX}q_xxx^\top, \left(\sum_{x\in \calX}\left(\frac{\delt}{|\calX|}+(1-\delt)q_x\right)xx^\top\right)^{-1}\right\rangle \tag{$\delt\leq \frac{1}{2}$} \nonumber\\
        &\leq 2\max_{q\in \calP_G}\left\langle\frac{\delt}{|\calX|}\sum_{x\in \calX}xx^\top+(1-\delt)\sum_{x\in \calX}q_xxx^\top, \left(\sum_{x\in \calX}\left(\frac{\delt}{|\calX|}+(1-\delt)q_x\right)xx^\top\right)^{-1}\right\rangle \nonumber\\
        &=2d \nonumber,
    \end{align}
    where the second equality is by the Sion's minimax theorem as \pref{eqn: minimax} is linear in $q$ and convex in $p$.
\end{proof}

\begin{lemma}
    \label{lem: gap-dependent b}
    Given $\{\hatDelta\}_{x\in\calX}$, suppose there exists a unique $\hatx$ such that $\hatDelta_{\hatx}=0$, and $\hatDelta_{\min}=\min_{x\neq \hatx}\hatDelta_{x} >0$. Then $\sum_x p_x\hatDelta_x \le \frac{24d\htt}{\hatDelta_{\min}t}$ when $t\geq \frac{16d\htt}{\hatDelta_{\min}^2}$, where $p$ is the solution to $\OP(t,\hatDelta)$. 
\end{lemma}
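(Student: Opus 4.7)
The plan is to exhibit a feasible solution $q$ to $\OP(t,\hatDelta)$ whose objective value is at most $24d\htt/(\hatDelta_{\min}t)$; optimality of $p$ for the minimization then yields $\sum_x p_x\hatDelta_x\le\sum_x q_x\hatDelta_x$, which is the lemma. The guiding intuition is that $\OP(t,\hatDelta)$, once rescaled by a factor of $t/\htt$, is essentially the lower-bound optimization \pref{eqn: opt-c-obj-main}--\pref{eqn: opt-c-con-main}, so a scaled version of the lower-bound minimizer should be feasible with objective controlled by $c(\calX,\hatDelta)\le O(d/\hatDelta_{\min})$.

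Concretely, I first invoke \pref{lem: opt-constant} applied to the hypothetical instance $(\calX,\hatDelta)$ with $\hatx$ playing the role of the optimal arm to obtain $N^*\in\mathbb{R}_{\ge 0}^{\calX}$ satisfying $\|x\|_{H(N^*)^{-1}}^2\le \hatDelta_x^2/2$ for every $x\ne\hatx$ and $\sum_x N^*_x\hatDelta_x\le c(\calX,\hatDelta)\le C_0 d/\hatDelta_{\min}$. Since $\sum_{x\ne\hatx}N^*_x\le c/\hatDelta_{\min}\le C_0 d/\hatDelta_{\min}^2$, one can set $N^*_{\hatx}\leftarrow 0$ (adding, if needed, an infinitesimal G-optimal-design perturbation to keep $H(N^*)$ invertible) to ensure $\|N^*\|_1\le C_1 d/\hatDelta_{\min}^2$. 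I then define
\[
\alpha=\bigl(1+\tfrac{2\htt}{t}\|N^*\|_1\bigr)^{-1},\qquad q_x=\alpha\Bigl(\mathbb{I}[x=\hatx]+\tfrac{2\htt}{t}N^*_x\Bigr),
\]
so that $q\in\calP_{\calX}$. To verify feasibility, for $x\ne\hatx$ I use $S(q)\succeq \alpha\cdot\tfrac{2\htt}{t}H(N^*)$, giving $\|x\|_{S(q)^{-1}}^2\le \tfrac{t}{2\alpha\htt}\|x\|_{H(N^*)^{-1}}^2\le \tfrac{t\hatDelta_x^2}{4\alpha\htt}\le \tfrac{t\hatDelta_x^2}{\htt}+4d$, where the hypothesis $t\ge 16d\htt/\hatDelta_{\min}^2$ forces $\alpha\ge 1/2$ after tuning the universal constants. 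For $x=\hatx$ I apply the Sherman--Morrison identity to $S(q)=\alpha\hatx\hatx^\top+B$ with $B=\alpha\tfrac{2\htt}{t}H(N^*)$, which yields $\|\hatx\|_{S(q)^{-1}}^2=\tfrac{\|\hatx\|_{B^{-1}}^2}{1+\alpha\|\hatx\|_{B^{-1}}^2}\le 1/\alpha\le 2\le 4d$ (with the pseudoinverse variant if $B$ is singular). Thus $q$ is feasible and its objective is $\alpha\cdot\tfrac{2\htt}{t}\sum_x N^*_x\hatDelta_x\le \tfrac{2C_0 d\htt}{\hatDelta_{\min}t}\le \tfrac{24 d\htt}{\hatDelta_{\min}t}$, provided $C_0\le 12$ as expected from \pref{lem: opt-constant}.

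I expect the main obstacle to be the $x=\hatx$ constraint of $\OP$, which the lower-bound solution $N^*$ does not directly address: $H(N^*)$ need not have any coverage in the $\hatx$-direction, so $\|\hatx\|_{H(N^*)^{-1}}^2$ can be arbitrarily large. The fix is the Sherman--Morrison observation that depositing constant mass (approximately $\alpha\approx 1$) on $e_{\hatx}$ automatically forces $\|\hatx\|_{S(q)^{-1}}^2\le 1/\alpha=O(1)$ regardless of the rest of $S(q)$. A secondary nuisance is bounding $\|N^*\|_1$ (as opposed to only $\sum_{x\ne\hatx}N^*_x$), which is handled by the WLOG zero-out $N^*_{\hatx}\leftarrow 0$; this is legitimate because the lower-bound constraints involve only $x\ne\hatx$, so an optimal $N^*$ with $N^*_{\hatx}=0$ (plus an infinitesimal KW perturbation to retain rank) differs from the true optimum by an arbitrarily small amount.
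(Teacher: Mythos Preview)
Your approach---exhibit a feasible $q$ by scaling a lower-bound minimizer and mixing with $e_{\hatx}$---is natural and is in fact how the paper proves the related \pref{lem: close gap lemma}. The paper proves the present lemma differently, however, by a direct group construction: partition $\calX\setminus\{\hatx\}$ into level sets $G_i=\{x:\hatDelta_x^2\in[2^{i-1},2^i)\hatDelta_{\min}^2\}$, invoke \pref{lem: exist-d-distribution} on each $G_i$, weight group $i$ by $z_i=\tfrac{d\htt}{2^{i-2}\hatDelta_{\min}^2 t}$, and put the remaining mass on $\hatx$. The group-wise bound $S(\tildep)\succeq z_iS(q^{G_i,\kappa})$ (no normalizing factor) gives feasibility with exactly $\tfrac{t\hatDelta_x^2}{\htt}$, and the objective calculation lands at $24$.

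There is a genuine gap in your argument at the constants, and it is not just cosmetic. \pref{lem: opt-constant} gives $c(\calX,\hatDelta)\le 48d/\hatDelta_{\min}$, not $\le 12d/\hatDelta_{\min}$. Your bound $\|N^*\|_1\le c/\hatDelta_{\min}$ then only yields $\|N^*\|_1\le 48d/\hatDelta_{\min}^2$, and under the hypothesis $t\ge 16d\htt/\hatDelta_{\min}^2$ you get $\alpha\ge\bigl(1+\tfrac{2\cdot 48}{16}\bigr)^{-1}=1/7$. But your feasibility step $\tfrac{t\hatDelta_x^2}{4\alpha\htt}\le\tfrac{t\hatDelta_x^2}{\htt}+4d$ requires $\alpha\ge 1/4$ (the $4d$ slack is negligible for arms with large $\hatDelta_x$), so the constructed $q$ is \emph{not} feasible. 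No rescaling fixes this: writing $q_x=\alpha(\mathbb{I}[x=\hatx]+\tfrac{c\htt}{t}N^*_x)$, feasibility needs $c\alpha\ge 1/2$ while $\alpha\ge(1+3c)^{-1}$, which has no solution in $c>0$. Even if feasibility held, your objective bound would be at best $\alpha\cdot\tfrac{2\htt}{t}\cdot 48d/\hatDelta_{\min}\le 48d\htt/(\hatDelta_{\min}t)$, not $24$.

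The reason the paper's direct construction succeeds where yours fails is that it avoids the lossy step $\|N^*\|_1\le c/\hatDelta_{\min}$: working group-by-group, the total non-$\hatx$ mass is $\le 2\sum_i z_i=\tfrac{8d\htt}{\hatDelta_{\min}^2 t}\le 1/2$ directly, with no normalizing $\alpha$. Your route is salvageable only by either weakening the conclusion and the threshold on $t$, or by opening up the proof of \pref{lem: opt-constant} to extract the sharper bound $\|N^*\|_1\le 8d/\hatDelta_{\min}^2$ available for the specific $N^*$ built there---at which point you are essentially re-deriving the paper's group argument.
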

\begin{proof}
    We divide actions into groups $G_0, G_1, G_2, \ldots$ based on the following rule:
    \begin{align*}
        &G_0 = \{\hatx\}, \\
        &G_i = \left\{x: 2^{i-1}\hatDelta_{\min}^2\leq \hatDelta_{x}^2<2^{i}\hatDelta_{\min}^2\right\}.
    \end{align*}
     Let $n$ be the largest index such that $G_n$ is not empty and $z_i=\frac{d\htt}{2^{i-2} \hatDelta_{\min}^2 t}$ for $i\geq 1$. For each group $i$, by \pref{lem: exist-d-distribution}, we find a distribution $q^{G_i, \delt}$ with $\delt = \frac{1}{n\cdot 2^n|\calX|}$, such that $\|x\|_{\left(\sum_{y\in \calX}q^{G_i,\delt}_y yy^\top\right)^{-1}}^2\leq 2d$ for all $x\in G_i$. 
    Then we define a distribution $\tildep$ over actions as the following: 
    \begin{align*}
        \tildep_x 
        = \left\{\begin{aligned}
             &\sum_{j\geq 1} z_j q^{G_j,\delt}_x &&\text{if $x\ne \hatx$} \\
            &1-\sum_{x'\neq \hatx} \tildep_{x'} &&\text{if $x=\hatx$.}
        \end{aligned}\right.
    \end{align*}
    $\tildep$ is a valid distribution as
    \begin{align*}
        \tildep_{\hatx}&=1-\sum_{i\geq 1}\sum_{x\in G_i}\sum_{j\geq1}z_j q^{G_j,\delt}_x\\
        &= 1-\sum_{i\geq 1}\sum_{x\in G_i}z_iq^{G_i,\delt}_x - \sum_{i\geq 1}\sum_{x\in G_i}\sum_{j\ne i, j\geq 1}z_jq^{G_j,\delt}_x \\
        &\geq 1-\sum_{i\geq 1}z_i - \sum_{i\geq 1}\sum_{x\in G_i}\sum_{j\ne i, j\geq 1}\frac{z_j}{n\cdot 2^n\cdot |\calX|}\tag{$\sum_{x\in G_i}q^{G_i,\delt}_x\leq 1$ and $q^{G_j,\delt}_x=\frac{1}{n\cdot 2^n|\calX|}$ for $x\notin G_j$}\\
        &\geq 1-\sum_{i\geq 1}z_i - \sum_{i\geq 1}z_i \tag{by definition $\frac{z_j}{2^n}\leq z_i$ for all $i,j$}\\
        &\geq \frac{1}{2}\tag{condition $t\geq \frac{16d\htt}{\hatDelta_{\min}^2}$ and thus $\sum_{i\ge 1}2z_i\le\sum^\infty_{i=1}\frac{2}{2^{i-2}\cdot16}=\frac{1}{2}$}.
    \end{align*}
    Now we show that $\tildep$ also satisfies the constraint of $\OP(t,\hatDelta_x)$. 
    Indeed,
    for any $x\ne \hatx$ and $i$ such that $x\in G_i$, 
    we use the facts $\tildep_y \geq z_iq^{G_i,\kappa}_y$ for $y\neq \hatx$ by definition and $\tildep_{\hatx} \geq \frac{1}{2} \geq \frac{z_i}{n \cdot 2^n |\calX|} = z_iq^{G_i,\kappa}_{\hatx}$ as well to arrive at:
    \begin{align*}
        \|x\|_{S(\tildep)^{-1}}^2 \le \|x\|_{\left(\sum_{y\in \calX} z_iq^{G_i,\kappa}_y yy^\top\right)^{-1}}^2 = \frac{1}{z_i}\|x\|_{\left(\sum_{y\in \calX} q^{G_i,\kappa}_y yy^\top\right)^{-1}}^2  \leq\frac{2d}{z_i}= \frac{2^{i-1}\hatDelta_{\min}^2 t}{\htt} \leq \frac{t\hatDelta_x^2 }{\htt};
    \end{align*}
    for $x=\hatx$, we have $\tildep_{\hatx}\geq \frac{1}{2}$ as shown above and
    thus,
    \begin{align*}
        \|\hatx\|_{S(\tildep)^{-1}}^2=\|S(\tildep)^{-1}\hatx\|_{S(\tildep)}^2\ge \|S(\tildep)^{-1}\hatx\|_{\tfrac{1}{2}\hatx{\hatx}^\top}^2 = \frac{1}{2}\|\hatx\|_{S(\tildep)^{-1}}^4~\Longrightarrow~\|\hatx\|_{S^{-1}(\tildep)}^2\le 2.
    \end{align*}
     
    Thus, $\tildep$ satisfies  \pref{eqn: opt-2-constraint}. Therefore,
    \begin{align*}
        \sum_{x\in\calX}  p_{x}\hatDelta_{x}
        &\le 
        \sum_{x\in\calX}  \tildep_{x}\hatDelta_{x} \tag{by the feasibility of $\tildep$ and the optimality of $p$}\\
        &= \sum_{x \neq \hatx}  \tildep_{x}\hatDelta_{x} \\
        &\le \sum_{i\geq 1}\sum_{x\in G_i}\sum_{j\geq 1} \frac{d\htt}{2^{j-2}\hatDelta_{\min}^2t} q^{G_j,\delt}_x \sqrt{2^i}\hatDelta_{\min} \tag{by the definition of $\tildep_{x}$ and $G_i$}\\
        &= \sum_{i\geq 1}\sum_{x\in G_i}\sum_{j\ne i, j\geq 1} \frac{d\htt q^{G_j,\delt}_x}{2^{j-\nicefrac{i}{2}-2}\hatDelta_{\min}t} + \sum_{i\geq 1}\sum_{x\in G_i}\frac{d\htt q^{G_i,\delt}_x}{2^{\nicefrac{i}{2}-2}\hatDelta_{\min}t}\\
        &\leq \sum_{i\geq 1}\sum_{x\in G_i}\sum_{j\ne i, j\geq 1} \frac{d\htt }{|\calX|\cdot n\cdot 2^{n+j-\nicefrac{i}{2}-2}\hatDelta_{\min}t} + \sum_{i\geq 1}\sum_{x\in G_i}\frac{d\htt q^{G_i,\delt}_x}{2^{\nicefrac{i}{2}-2}\hatDelta_{\min}t} \tag{$q^{G_j,\delt}_x=\frac{1}{n\cdot 2^n\cdot |\calX|}$ for $x\notin G_j$}\\
        &\leq \sum_{i\geq 1}\frac{2d\htt }{2^{\nicefrac{i}{2}-2}\hatDelta_{\min}t}\leq  \frac{24d\htt}{\hatDelta_{\min} t} \tag{$n+j-\frac{i}{2}\geq \frac{i}{2}$},
    \end{align*}
    proving the lemma.
\end{proof}

\begin{lemma}
    \label{lem: close gap lemma looser}
    Let $\hatDelta_x \in \left[\frac{1}{\sqrt{r}}\Delta_x, \sqrt{r}\Delta_x\right]$ for all $x\in\calX$ for some $r>1$, and $p=\OP(t,\hatDelta)$ for some $t\geq \frac{16rd\htt}{\Delta_{\min}^2}$. Then $\sum_{x\in\calX} p_{x}\Delta_x \leq \frac{24rd\htt}{\Delta_{\min} t}$. 
\end{lemma}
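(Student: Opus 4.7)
The plan is to reduce this statement to \pref{lem: gap-dependent b} by using the two-sided multiplicative approximation between $\hatDelta_x$ and $\Delta_x$ to translate the hypotheses and the conclusion.

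First, I would verify that the hypotheses of \pref{lem: gap-dependent b} hold for the estimated gaps $\{\hatDelta_x\}$. Since $\Delta_{x^*}=0$ and $\hatDelta_x \in [\Delta_x/\sqrt{r}, \sqrt{r}\Delta_x]$, we have $\hatDelta_{x^*}=0$; moreover, for every $x\neq x^*$, $\hatDelta_x \geq \Delta_x/\sqrt{r} \geq \Delta_{\min}/\sqrt{r} > 0$, so $x^*$ is the unique minimizer and plays the role of $\hatx$ in \pref{lem: gap-dependent b}, with
\[
    \hatDelta_{\min} \;:=\; \min_{x\neq x^*}\hatDelta_x \;\geq\; \frac{\Delta_{\min}}{\sqrt{r}}.
\]
The required range condition then follows from the assumption $t \geq 16rd\beta_t/\Delta_{\min}^2$, since
\[
    \frac{16 d \beta_t}{\hatDelta_{\min}^2} \;\leq\; \frac{16rd\beta_t}{\Delta_{\min}^2} \;\leq\; t.
\]

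Next, I would apply \pref{lem: gap-dependent b} with the estimated gaps to conclude
\[
    \sum_{x\in\calX} p_x \hatDelta_x \;\leq\; \frac{24 d \beta_t}{\hatDelta_{\min}\, t}.
\]
Finally, to translate this bound back into a bound on $\sum_x p_x \Delta_x$, I would use the upper side $\Delta_x \leq \sqrt{r}\,\hatDelta_x$ of the approximation together with $\hatDelta_{\min} \geq \Delta_{\min}/\sqrt{r}$ from the lower side:
\[
    \sum_{x\in\calX} p_x \Delta_x \;\leq\; \sqrt{r}\sum_{x\in\calX} p_x \hatDelta_x \;\leq\; \sqrt{r}\cdot\frac{24 d\beta_t}{\hatDelta_{\min}\, t} \;\leq\; \frac{24 r d\beta_t}{\Delta_{\min}\, t},
\]
which is exactly the desired bound. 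There is no real obstacle here; the argument is a clean rescaling of \pref{lem: gap-dependent b}, with the factor $r$ in the conclusion arising from picking up one $\sqrt{r}$ from each side of the two-sided approximation, and the factor $r$ in the lower bound on $t$ coming from the same worst-case slackness applied to $\hatDelta_{\min}^2$.
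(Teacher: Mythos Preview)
Your proposal is correct and follows essentially the same approach as the paper: verify that $\hatDelta_{x^*}=0$, $\hatDelta_x>0$ for $x\neq x^*$, and $t\geq 16d\beta_t/\hatDelta_{\min}^2$, then apply \pref{lem: gap-dependent b} and convert back using $\Delta_x\leq\sqrt{r}\,\hatDelta_x$ and $\hatDelta_{\min}\geq\Delta_{\min}/\sqrt{r}$.
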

\begin{proof}
    By the condition on $\hatDelta_x$, we have $t\geq \frac{16rd\htt}{\Delta_{\min}^2}\geq \frac{16d\htt}{\hatDelta_{\min}^2}$. Also, the condition implies that $\hatDelta_{x^*}=\Delta_{x^*}=0$ and $\hatDelta_{x}>0$ for all $x\neq x^*$. Therefore,  
    \begin{align*}
        \sum_{x\in\calX}p_x\Delta_x \leq \sqrt{r}\sum_{x\in\calX}p_x\hatDelta_x \le \sqrt{r}\frac{24d\htt}{\hatDelta_{\min}t} \le  \frac{24rd\htt}{\Delta_{\min}t},
    \end{align*}
    where the second equality is due to \pref{lem: gap-dependent b} and the other inequalities follow from  $\hatDelta_x \in \left[\frac{1}{\sqrt{r}}\Delta_x, \sqrt{r}\Delta_x\right]$ for all $x$.
 \end{proof}

In the following lemma, we define a problem-dependent quantity $M$. 

\begin{lemma}\label{lem: finite-N-star}
	Consider the optimization problem:
	\begin{align*}
		&\min_{\{N_x\}_{x\in \calX}, N_x\geq 0} \sum_{x\in \calX}N_x\Delta_x \\
		s.t. \qquad &\|x\|_{H(N)^{-1}}^2\leq \frac{\Delta_x^2}{2}, \forall x\in \calX^{-},
	\end{align*}
	where $H(N)=\sum_{x\in \calX}N_xxx^\top$ and $\calX^- = \calX\backslash\{x^*\}$. Define its optimal objective value as $c(\calX, \theta)$ (same as in \pref{sec:prelim}). Then, there exist $\{N^*_x\}_{x\in \calX}$ satisfying the constraint of this optimization problem with $\sum_{x\in \calX} N^*_x\Delta_x\leq 2c(\calX,\theta)$ and $N^*_{x^*}$ being finite.
	(Define $M=\sum_{x\in \calX}N^*_x$.)
\end{lemma}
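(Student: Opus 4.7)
The plan is to invoke the definition of the infimum $c(\calX,\theta)$ directly. For the generic case $c(\calX,\theta)>0$, by definition of an infimum over $N\in[0,\infty)^\calX$, taking $\epsilon=c(\calX,\theta)$ yields a feasible $\tilde N\in[0,\infty)^\calX$ with $\sum_x \tilde N_x\Delta_x\le c(\calX,\theta)+\epsilon = 2c(\calX,\theta)$. Setting $N^*:=\tilde N$ gives the desired objective bound, and since the feasibility domain is $[0,\infty)^\calX$ (not the compactified $[0,\infty]^\calX$) every coordinate of $N^*$ is finite, in particular $N^*_{x^*}$. The parenthetical quantity $M=\sum_x N^*_x$ is then finite as well, since $N^*_x\le 2c(\calX,\theta)/\Delta_{\min}$ for every $x\neq x^*$ by the objective bound.

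The only subtlety is the degenerate case $c(\calX,\theta)=0$, in which the bound $\sum_x N^*_x\Delta_x\le 0$ forces the objective to equal zero exactly and hence requires the infimum to be attained by a finite $N^*$. The plan here is to observe that this case can only arise when $d=1$: any minimizing sequence $N^{(k)}$ must satisfy $N^{(k)}_x\to 0$ for every $x\neq x^*$ (since $\Delta_x>0$), so $H(N^{(k)})$ concentrates on the rank-one matrix $x^*x^{*\top}$, and feasibility of $\|x\|_{H(N^{(k)})^{-1}}^2\le \Delta_x^2/2$ then forces every $x\in\calX^-$ to lie in $\mathrm{span}(x^*)$; combined with the assumption that $\calX$ spans $\mathbb{R}^d$, this forces $d=1$. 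In that one-dimensional case one sets $N^*_x=0$ for $x\neq x^*$ and $N^*_{x^*}$ equal to the maximum of the finite scalar lower bounds dictated by the constraints, producing a feasible point with objective $0$ and finite $N^*_{x^*}$.

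The substantive step is thus just the appeal to the definition of the infimum, and the finiteness of $N^*_x$ for $x\neq x^*$ is automatic from $\Delta_{\min}>0$ together with $N^*\in[0,\infty)^\calX$. The only potential obstacle is the boundary case $c(\calX,\theta)=0$, but the structural argument above reduces it to an explicit one-dimensional construction.
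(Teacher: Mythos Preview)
Your argument is correct and in fact more direct than the paper's. You exploit the simple observation that, since the optimization is posed over $N\in[0,\infty)^{\calX}$ (not over the compactification $[0,\infty]^{\calX}$), any feasible point already has every coordinate finite; so whenever $c(\calX,\theta)>0$, picking $\epsilon=c(\calX,\theta)$ in the definition of the infimum immediately yields the desired $N^*$. Your handling of the degenerate case $c(\calX,\theta)=0$ via the variational lower bound on $\|x\|_{H(N)^{-1}}^2$ (forcing $\calX\subset\mathrm{span}(x^*)$ and hence $d=1$) is also sound, and the explicit one-dimensional construction closes it out.

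The paper instead argues by case-splitting on whether the infimum is \emph{attained}: if it is, take that minimizer; if not, it posits a formal optimum $\widetilde N$ with $\widetilde N_{x^*}=\infty$, shows that the constraint $\lim_{N\to\infty}\|x\|^2_{(Nx^*x^{*\top}+\sum_{y\neq x^*}\widetilde N_y yy^\top)^{-1}}\le \Delta_x^2/2$ holds, then truncates $N_{x^*}$ at a finite $M_\epsilon$ (incurring an additive $\epsilon=\Delta_{\min}^2/2$ slack in the constraint), and finally scales everything by $2$ to restore the original constraint while doubling the objective. This route is more constructive in spirit (it explicitly builds $N^*$ from a ``limiting'' optimum) but is heavier and slightly informal about the existence of $\widetilde N$. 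Your approach sidesteps all of that by never needing the infimum to be attained, at the negligible cost of a separate treatment of the $c(\calX,\theta)=0$ boundary.
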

\begin{proof}
	    If there exists an assignment of $\{N_x\}_{x\in \calX}$ for the optimal objective value which has finite $N_{x^*}$, then the lemma trivially holds. Otherwise, consider the optimal solution $\{\widetilde{N}_x\}_{x\in \calX}$ with $\widetilde{N}_{x^*} = \infty$. According to the constraints, the following holds for all $x\in \calX^{-}$:
	    \begin{align*}
	        \lim_{N\rightarrow \infty}\|x\|_{\left(Nx^*x^{*^\top}+\sum_{y\in \calX^{-}}\widetilde{N}_yyy^\top\right)^{-1}}^2 \leq  \frac{\Delta_x^2}{2}.
	    \end{align*}
	    As $|\calX|$ is finite, by definition, we know for any $\epsilon$, there exists a positive value $M_\epsilon$ such that for all $N\geq M_\epsilon$, $\|x\|^2_{\left(N x^*x^{*^\top}+\sum_{y\in \calX^{-}}\widetilde{N}_yyy^\top\right)^{-1}}\leq \frac{\Delta_x^2}{2}+\epsilon$. Choosing $\epsilon = \frac{\Delta_{\min}^2}{2}$, we have when $N\geq M_\epsilon$, for all $x\in \calX^-$, \begin{align*}
	    \|x\|_{\left(Nx^*x^{*^\top}+\sum_{x\in \calX^{-}}\widetilde{N}_xxx^\top\right)^{-1}}^2 < \frac{\Delta_x^2}{2} + \frac{\Delta_{\min}^2}{2}\leq \Delta_{x}^2.
	    \end{align*}
	    Therefore, consider the solution $\{N^*_x\}_{x\in\calX}$ where $N^*_x=2\widetilde{N}_x$ if $x\in \calX^-$ and $N^*_{x^*}=2M_\epsilon$. We have $\|x\|_{H(N)^{-1}}^2\leq \frac{\Delta_x^2}{2}$. Moreover, the objective value is bounded by $\sum_{x\in \calX}N^*_x\Delta_x = 2\sum_{x\in \calX^{-1}}\widetilde{N}_x\Delta_x = 2c(\calX, \theta)$.
\end{proof}

\begin{lemma}
    \label{lem: close gap lemma}
    Suppose $\hatDelta_x \in  \left[\frac{1}{\sqrt{r}}\Delta_x, \sqrt{r}\Delta_x\right]$ for all $x\in\calX$ for some $r>1$, and $p=\OP(t,\hatDelta)$ for some $t\geq r\htt M$, where $M$ is defined in \pref{lem: finite-N-star}.
    Then $\sum_{x\in\calX} p_{x}\Delta_x \leq \frac{ r^2 \htt}{t}c(\calX,\theta)$. 
\end{lemma}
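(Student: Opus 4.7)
The plan is to build a feasible point $\tildep$ for $\OP(t,\hatDelta)$ from the problem-dependent solution $\{N^*_x\}$ promised by \pref{lem: finite-N-star}, and then combine the optimality of $p$ with the sandwich $\hatDelta_x\in[\Delta_x/\sqrt{r},\sqrt{r}\Delta_x]$, absorbing one factor of $\sqrt{r}$ each time we swap $\Delta$ for $\hatDelta$. Concretely I will set $\tildep_x = \tfrac{r\htt N^*_x}{2t}$ for every $x\ne x^*$ and place all the remaining mass on $x^*$, so that $\tildep_{x^*} = 1 - \sum_{x\ne x^*}\tildep_x$. The hypothesis $t\ge r\htt M$ with $M = \sum_x N^*_x$ immediately gives $\sum_{x\ne x^*}\tildep_x \le \tfrac{r\htt M}{2t}\le \tfrac{1}{2}$, so $\tildep$ is a valid probability distribution with $\tildep_{x^*}\ge \tfrac{1}{2}$, and in particular $\tildep_{x^*}\ge \tfrac{r\htt N^*_{x^*}}{2t}$ as well.

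Next I verify that $\tildep$ satisfies the OP constraint \pref{eqn: opt-2-constraint} case by case. For $x\ne x^*$, since $\tildep_y \ge \tfrac{r\htt N^*_y}{2t}$ now holds for \emph{every} $y\in\calX$ (including $y=x^*$ by the previous observation), we obtain $S(\tildep)\succeq \tfrac{r\htt}{2t}H(N^*)$ and hence
\[
\|x\|_{S(\tildep)^{-1}}^2 \le \tfrac{2t}{r\htt}\|x\|_{H(N^*)^{-1}}^2 \le \tfrac{t\Delta_x^2}{r\htt} \le \tfrac{t\hatDelta_x^2}{\htt},
\]
using the constraint given by \pref{lem: finite-N-star} together with $\hatDelta_x^2 \ge \Delta_x^2/r$. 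For $x=x^*$ the right-hand side of \pref{eqn: opt-2-constraint} collapses to $4d$; here I plan to reuse the rank-one trick from the tail of \pref{lem: gap-dependent b}, namely $\|x^*\|_{S(\tildep)^{-1}}^2 = \|S(\tildep)^{-1}x^*\|_{S(\tildep)}^2 \ge \tildep_{x^*}\|x^*\|_{S(\tildep)^{-1}}^4$, which yields $\|x^*\|_{S(\tildep)^{-1}}^2 \le 1/\tildep_{x^*}\le 2\le 4d$.

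With $\tildep$ feasible, the optimality of $p=\OP(t,\hatDelta)$ together with the two-sided bound on $\hatDelta_x$ gives
\[
\sum_{x\in\calX} p_x\Delta_x \le \sqrt{r}\sum_{x\in\calX} p_x\hatDelta_x \le \sqrt{r}\sum_{x\in\calX}\tildep_x\hatDelta_x \le r\sum_{x\ne x^*}\tildep_x\Delta_x = \tfrac{r^2\htt}{2t}\sum_{x\ne x^*}N^*_x\Delta_x \le \tfrac{r^2\htt}{t}c(\calX,\theta),
\]
where the last inequality uses $\sum_x N^*_x\Delta_x \le 2c(\calX,\theta)$ from \pref{lem: finite-N-star} and the equality uses that $\tildep_{x^*}\hatDelta_{x^*}=0$. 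The step I expect to require the most care is feasibility at $x=x^*$: the lower-bound program of \pref{lem: finite-N-star} provides no explicit bound on $\|x^*\|_{H(N^*)^{-1}}^2$, so one cannot piggyback on the $H(N^*)$-comparison there, and both the rank-one identity and the additive $4d$ slack built into \pref{eqn: opt-2-constraint} are essential in order to close the argument without any assumption on $N^*_{x^*}$ beyond its finiteness.
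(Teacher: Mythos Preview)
Your proposal is correct and follows essentially the same construction and argument as the paper's proof: the same feasible point $\tildep$, the same comparison $S(\tildep)\succeq \tfrac{r\htt}{2t}H(N^*)$ for $x\ne x^*$, the same rank-one trick at $x^*$, and the same chain of inequalities (with only a harmless reordering of when you swap $\hatDelta$ for $\Delta$ versus when you substitute $\tildep_x=\tfrac{r\htt N^*_x}{2t}$). Your remark that the $4d$ slack is essential precisely at $x=x^*$ is also on point, since there $\hatDelta_{x^*}=0$ and the OP constraint reduces to $\|x^*\|_{S(\tildep)^{-1}}^2\le 4d$.
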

\begin{proof}
    Recall $N^*$ defined in \pref{lem: finite-N-star}.
    Define $\tildep$, a distribution over $\calX$, as the following: 
    \begin{align*}
        \tildep_x = \begin{cases}
            \frac{r\htt N_x^*}{2t},   &x\neq x^*\\
            1-\sum_{x'\neq x^*}\tildep_{x'}    &x=x^*.
        \end{cases}
    \end{align*}
    It is clear that $\tildep$ is a valid distribution since $t\geq r\htt M$.
    Also, note that by the definition of $M$ and the condition of $t$,
    \begin{align*}
        \tildep_{x^*}=1-\sum_{x'\neq x^*}\frac{r\htt N_x^*}{2t}\geq 1-\frac{r\htt M}{2t}\geq \frac{r\htt M}{2t}\geq \frac{r\htt N_{x^*}^*}{2t}.
    \end{align*}
    Below we show that $\tildep$ satisfies the constraint of $\OP(t,\hatDelta_x)$. 
    Indeed, for any $x\neq x^*$, 
    \begin{align*}
        \|x\|_{S(\tildep)^{-1}}^2 &\le \|x\|_{\left(\sum_{y\in \calX}\frac{r\htt N^*_y}{2t}yy^\top\right)^{-1}}^2\tag{$\tildep_{x^*}\geq \frac{r\htt N_{x^*}^*}{2t}$}\\
        &= \frac{2t}{r\htt}\|x\|_{\left(\sum_{y\in \calX} N^*_y yy^\top\right)^{-1}}^2\\
        &\leq \frac{t\Delta_x^2}{r\htt}\tag{by the constraint in the definition of $\{N_x^*\}_{x\in \calX}$}\\
        &\leq \frac{t\hatDelta_x^2}{\htt}; \tag{$\hatDelta_x \in  \left[\frac{1}{\sqrt{r}}\Delta_x, \sqrt{r}\Delta_x\right]$}
    \end{align*}
    for $x=x^*$, we have $\tildep_{x^*}\geq 1-\frac{r\htt M}{2t}\geq \frac{1}{2}$ by the condition of $t$:
    \begin{align*}
        \|x^*\|_{S(\tildep)^{-1}}^2=\|S(\tildep)^{-1}x^*\|_{S(\tildep)}^2\ge \|S(\tildep)^{-1}x^*\|_{\tfrac{1}{2}x^*{x^*}^\top}^2 = \frac{1}{2}\|x^*\|_{S(\tildep)^{-1}}^4~\Longrightarrow~\|x^*\|_{S^{-1}(\tildep)}^2\le 2.
    \end{align*}
     
    Notice that $\hatDelta_{x^*} \in  \left[\frac{1}{\sqrt{r}}\Delta_{x^*}, \sqrt{r}\Delta_{x^*}\right]$ implies $\Delta_{x^*}=\hatDelta_{x^*}=0$. Thus,  
    \begin{align*}
        \sum_{x\in\calX} p_{x}\Delta_x
        &\le\sqrt{r}\sum_{x\in\calX}  p_{x}\hatDelta_{x}\tag{$\hatDelta_x \in  \left[\frac{1}{\sqrt{r}}\Delta_x, \sqrt{r}\Delta_x\right]$}\\
        &\le\sqrt{r}\sum_{x\in\calX}  \tildep_{x}\hatDelta_{x}\tag{by the feasibility of $\tildep$ and the optimality of $p$}\\
        &\le r\sqrt{r} \htt \sum_{x\in\calX}\frac{N_x^*}{2t}\hatDelta_{x}\tag{by the definition of $\tildep$ and  $\hatDelta_{x^*}=0$}\\
        &\le r^2 \htt \sum_{x\in\calX}\frac{N_x^*}{2t}\Delta_x \tag{$\hatDelta_x \in  \left[\frac{1}{\sqrt{r}}\Delta_x, \sqrt{r}\Delta_x\right]$}\\
        &\le \frac{r^2 \htt}{t}c(\calX,\theta)\tag{$\sum_x N_x^*\Delta_x\leq 2c(\calX,\theta)$ proven in \pref{lem: finite-N-star}}, 
    \end{align*}
    finishing the proof.
\end{proof}

\begin{lemma}\label{lem: opt-constant}
    We have $c(\calX,\theta) \leq \frac{48d}{\Delta_{\min}}$.
\end{lemma}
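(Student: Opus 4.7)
The plan is to exhibit an explicit feasible solution $\{N_x\}_{x\in\calX}$ whose objective value is at most $\frac{48d}{\Delta_{\min}}$. A naive attempt -- taking $N$ proportional to a single John-type distribution over all of $\calX$ (i.e.\ invoking \pref{lem: exist-d-distribution} with $G=\calX$) and scaling by $\frac{4d}{\Delta_{\min}^2}$ to enforce the constraint -- only yields a bound of order $\frac{d}{\Delta_{\min}^2}$, which is not enough. To save a factor of $\Delta_{\min}$, one needs to charge arms with larger gaps less, so I will use a geometric bucketing of the arms by their gaps, much as in the proof of \pref{lem: gap-dependent b}.

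Concretely, let $n$ be the smallest integer with $2^n \Delta_{\min}^2 > 4$ (finite since $\Delta_x\le 2$ for all $x$), and define the groups
\begin{align*}
G_i = \bigl\{x\in\calX\setminus\{x^*\}: 2^{i-1}\Delta_{\min}^2 \le \Delta_x^2 < 2^i \Delta_{\min}^2\bigr\}, \qquad i=1,\dots,n.
\end{align*}
For each non-empty $G_i$, apply \pref{lem: exist-d-distribution} with $\delt = \min\{1/2,\,\Delta_{\min}/4\}$ to obtain a distribution $q^{G_i,\delt}\in\calP_\calX$ with $\|x\|_{S(q^{G_i,\delt})^{-1}}^2 \le 2d$ for every $x\in G_i$. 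Then set
\begin{align*}
N_x = \sum_{i=1}^n z_i\, q^{G_i,\delt}_x, \qquad z_i = \frac{8d}{2^i \Delta_{\min}^2}.
\end{align*}
Since $S(N) = \sum_i z_i S(q^{G_i,\delt}) \succeq z_i S(q^{G_i,\delt})$ for each $i$, any $x\in G_i$ satisfies
\begin{align*}
\|x\|_{S(N)^{-1}}^2 \le \frac{1}{z_i}\|x\|_{S(q^{G_i,\delt})^{-1}}^2 \le \frac{2d}{z_i} = \frac{2^i \Delta_{\min}^2}{4} \le \frac{\Delta_x^2}{2},
\end{align*}
so $N$ is feasible for the lower-bound optimization.

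It remains to bound $\sum_{x\neq x^*} N_x \Delta_x = \sum_i z_i \sum_{x\neq x^*} q^{G_i,\delt}_x \Delta_x$. Splitting the inner sum, arms in $G_i$ contribute at most $\max_{x\in G_i}\Delta_x \le \sqrt{2^i}\,\Delta_{\min}$, while arms outside $G_i$ (all having uniform weight $\delt/|\calX|$ under $q^{G_i,\delt}$) contribute at most $2\delt$ in total. Hence
\begin{align*}
\sum_{x\neq x^*}N_x\Delta_x \;\le\; \sum_{i=1}^n z_i\bigl(\sqrt{2^i}\,\Delta_{\min} + 2\delt\bigr) \;\le\; \frac{8d}{\Delta_{\min}}\sum_{i=1}^\infty 2^{-i/2} + \frac{16d\,\delt}{\Delta_{\min}^2}\;\le\;\frac{8d(\sqrt{2}+1)}{\Delta_{\min}} + \frac{4d}{\Delta_{\min}},
\end{align*}
where the last step uses $\delt \le \Delta_{\min}/4$. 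The right-hand side is at most $\frac{24d}{\Delta_{\min}} \le \frac{48d}{\Delta_{\min}}$, proving the lemma. There is no real obstacle here beyond noticing that the $2^{-i/2}$ weighting in $z_i$ is exactly what converts one factor of $\Delta_{\min}^{-1}$ (from the constraint) into a geometrically summable series; the small-$\delt$ bookkeeping for the off-group contribution is routine.
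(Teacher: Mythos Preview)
Your proof is correct and follows essentially the same route as the paper: bucket the suboptimal arms geometrically by $\Delta_x^2$, invoke \pref{lem: exist-d-distribution} on each bucket, and take $N$ as a weighted combination with weights $z_i \propto 2^{-i}/\Delta_{\min}^2$. The only noteworthy difference is your choice of $\delt$: the paper takes $\delt=\frac{1}{|\calX|\cdot n\cdot 2^n}$ (and additionally sets $N_{x^*}=\infty$), which forces a slightly more involved bookkeeping of the off-group contribution, whereas your choice $\delt=\Delta_{\min}/4$ makes that contribution trivially $\order(d/\Delta_{\min})$ and yields a tighter constant ($24$ instead of $48$). One small point worth making explicit: your sum $N_x=\sum_i z_i\,q^{G_i,\delt}_x$ should range only over non-empty buckets (so that $q^{G_i,\delt}$ is defined), but this does not affect any of the estimates.
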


\begin{proof}
The idea is similar to that of \pref{lem: gap-dependent b}.
    Define $G_0=\{x^*\}$, $G_i=\left\{x: \Delta_x^2\in [2^{i-1}, 2^i)\Delta_{\min}^2\right\}$ and $n$ be the largest index such that $G_n$ is not empty. For each $i\geq 1$, let $q^{G_i, \delt}\in \calP_\calX$ with $\delt = \frac{1}{|\calX|\cdot n\cdot 2^{n}}$ be the distribution such that $\|x\|_{S(q^{G_i, \delt})^{-1}}^2\leq 2d$ for all $x\in G_i$ (see \pref{lem: exist-d-distribution}). Let $N_{x^*}=\infty$ and for $x\in G_i$, we let $N_{x}=\sum_{j\geq 1}\frac{4dq^{G_j,\delt}_x}{2^{j-1}\Delta_{\min}^2}$. Next we show that $\{N_x\}_{x\in \calX}$ satisfies  the constraint \pref{eqn: opt-c-con-main}.
    
    In fact, fix $x\in G_i\subseteq \calX^-$, by definition of $\{N_x\}_{x\in \calX}$, we have
    \begin{align*}
        \|x\|_{\left(\sum_{x\in \calX}N_xxx^\top\right)^{-1}}^2\leq \|x\|_{\left(\sum_{x\in \calX}\frac{4dq^{G_i,\delt}_xxx^\top}{2^{i-1}\Delta_{\min}^2}\right)^{-1}}^2 = \frac{2^{i-1}\Delta_{\min}^2}{4d}\|x\|_{S(q^{G_i, \delt})^{-1}}^2\leq 2^{i-2}\Delta_{\min}^2 \leq \frac{\Delta_x^2}{2},
    \end{align*}
    where the first inequality is because $S(q^{G_i, \delt})$ is invertible. 
    Therefore, the objective value of \pref{eqn: opt-c-obj-main} is bounded as follows:
    \begin{align*}
        \sum_{x\in \calX}N_x\Delta_x
        &=\sum_{i\geq 1}\sum_{x\in G_i}\sum_{j\geq 1}\frac{4dq^*_{G_j,x}}{2^{j-1}\Delta_{\min}^2}\Delta_x \tag{by the definition of $N_x$}\\
        &\leq \sum_{i\geq 1}\sum_{x\in G_i}\sum_{j\geq 1}\frac{4dq^*_{G_j,x}}{2^{j-\nicefrac{i}{2}-1}\Delta_{\min}} \tag{by the definition of $G_i$}\\
        & = \sum_{i\geq 1}\sum_{x\in G_i}\sum_{j\ne i, j\geq 1}\frac{4dq^*_{G_j,x}}{2^{j-\nicefrac{i}{2}-1}\Delta_{\min}} + \sum_{i\geq 1}\sum_{x\in G_i} \frac{4dq^{G_i,\delt}_x}{2^{\nicefrac{i}{2}-1}\Delta_{\min}}\\
        & \leq \sum_{i\geq 1}\sum_{x\in G_i}\sum_{j\ne i, j\geq 1}\frac{4d}{|\calX|\cdot n\cdot 2^{n+j-\nicefrac{i}{2}-1}\Delta_{\min}} + \sum_{i\geq 1} \frac{4d}{2^{\nicefrac{i}{2}-1}\Delta_{\min}} \tag{$q^{G_j,\delt}_x = \frac{1}{|\calX|\cdot n\cdot 2^n}$ for $j\ne i$, $x\in G_i$}\\
        &\leq \sum_{i\geq 1}\frac{4d}{2^{n-\nicefrac{i}{2}-1}\Delta_{\min}} + \sum_{i\geq 1} \frac{4d}{2^{\nicefrac{i}{2}-1}\Delta_{\min}} \tag{$j\geq 1$} \\
        &\leq \sum_i \frac{8d}{2^{\nicefrac{i}{2}-1}\Delta_{\min}} \leq \frac{48d}{\Delta_{\min}} \tag{$i\leq n$}. 
    \end{align*}
    Therefore, we have $c(\calX,\theta)\leq \frac{48d}{\Delta_{\min}}$.
\end{proof}

\section{Analysis for \pref{alg:REOLB}}\label{app: sto}
In this section, we analyze the performance of \pref{alg:REOLB} in the stochastic or corrupted setting. For \pref{thm: sto-alg-guarantee}, we decompose the proof into two parts. First, we show in \pref{lem: regret_part2} (a more concrete version of \pref{lem: regret_part2-main}) that for some constant $T^*$ specified later, we have $\sum_{t=T^*+1}^T p_{t,x}\Delta_x=\order(c(\calX, \theta)\log T\log(T|\calX|/\delta))$. Second, using \pref{lem: optimization-property-main}, we know that \pref{alg:REOLB} also enjoys a regret bound of $\order(d\sqrt{T}\log(T|\calX|/\delta)+C)$, which gives $\order(d\sqrt{T^*}\log(T^*|\calX|/\delta)+C)$ for the first $T^*$ rounds and proves \pref{thm: regret_part1-main}.

To prove \pref{lem: regret_part2}, we first show in \pref{lem:twice-main} that $\Delta_x$ and $\hatDelta_{m,x}$ are close within some multiplicative factor with some additional terms related to the corruption. This holds with the help of \pref{eqn: opt-2-constraint} and the use of robust estimators. For notational convenience, first recall the following definitions from \pref{lem:twice-main}.
\begin{definition}
    \begin{align*}
        \hmm &\triangleq  \optconst\log(2^m|\calX|/\delta)=\gtt_{2^m}, \\
        C_m&\triangleq \sum_{\tau\in\calB_m} \max_x |c_{\tau,x}|, \\
        \rho_m &\triangleq 
        \begin{cases}
            \sum_{k=0}^m\frac{2^kC_k}{4^{m-1}}, &\mbox{if $m\geq 0$},\\
            0, &\mbox{$m=-1$}.
        \end{cases}
    \end{align*}
\end{definition}


\begin{proof}[\textbf{Proof of \pref{lem:twice-main}}]
    We prove this by induction. For base case $m=0$, $\hatDelta_{m,x}= 0$ by definition, and 
    Also, $\Delta_x\le 2\le \sqrt{\frac{d\hmm}{4}}$ and \pref{eqn: hyp1-double} holds. 
 
    If both inequalities hold for $m$, then
    \begin{align}
        \|x\|_{S_m^{-1}}^2 &\le \frac{2^m\hatDelta_{m,x}^2}{\hmm}+4d 
        \le \frac{2^m\left(2\Delta_x+\sqrt{\frac{d \hmm}{4\cdot 2^m}} + 2\rho_{m-1}\right)^2}{\hmm} + 4d 
        \le \frac{16\cdot 2^m\Delta_x^2}{\hmm }+ 8d + \frac{16\cdot 2^m \rho_{m-1}^2}{\hmm}, \label{eqn: expansing variance single step}
    \end{align}
    where the first inequality is because of \pref{eqn: opt-2-constraint} of $\OP(2^m, \hatDelta_{m,x})$.
    Next, we show that the expectation of $\ellhat_{\tau,x}$ is $\inner{x, \theta+c_\tau}$ and the variance of $\ellhat_{\tau,x}$ is upper bounded by $\|x\|_{S_m^{-1}}^2$ for $\tau\in \calB_m$. In fact, we have 
    \begin{align}
        &\mathbb{E}\left[\ellhat_{\tau,x}\right] = \mathbb{E}\left[x^\top S_m^{-1}x_\tau \left(\inner{x_\tau, \theta+c_\tau}+\epsilon_\tau\right)\right] = \inner{x,\theta+c_\tau},\\
        &\mathbb{E}\left[\hatell_{\tau,x}^2\right] \leq \mathbb{E}[x^\top S_{m}^{-1}x_{\tau}x_{\tau}^\top S_{m}^{-1}x\cdot y_\tau^2]
        \le x^\top S_{m}^{-1}x = \|x\|_{S_m^{-1}}^2.   \label{eqn: variance upper bound single step}
    \end{align}
    
    Now we are ready to prove the relation. Let $\bar{c}_{m,x}=\frac{1}{2^{m}}\sum_{\tau\in\calB_m} c_{\tau,x}$. Under the induction hypothesis for the case of $m$, using \pref{lem: catoni concentration} with $\mu_i = \inner{x, \theta+c_i}$, with probability at least $1-\frac{\delta}{4^{m}}$, we have for all $x\in \calX$:
    \begin{align*}
        &\Delta_x - \hatDelta_{m+1,x}\\
        &= \inner{x, \theta} - \inner{x^*, \theta} - \Rob_{m,x} + \min_x \Rob_{m,x}\\
        &\leq \inner{x, \theta} - \inner{x^*, \theta} - \Rob_{m,x} + \Rob_{m,x^*} \\
        &\leq \left|\Rob_{m,x^*}-\inner{x^*, \theta}-\frac{\sum_{\tau\in \calB_m}\inner{x^*,c_\tau}}{2^m}\right| + \left|\Rob_{m,x}-\inner{x,\theta}-\frac{\sum_{\tau\in \calB_m}\inner{x, c_{\tau}}}{2^m}\right| + \frac{2C_m}{2^m}\\
        &\leq \frac{1}{2^m}\left(\alpha_x\left(2^m\|x\|^2_{S_m^{-1}}+\sum_{\tau\in \calB_m}(c_{\tau,x}-\bar{c}_{m,x})^2\right)+\frac{4\log(2^m|\calX|/\delta)}{\alpha_x}\right)\\
        &\quad + \frac{1}{2^m}\left(\alpha_{x^*}\left(2^m\|x^*\|^2_{S_m^{-1}}+\sum_{\tau\in \calB_m}(c_{\tau,x^*}-\bar{c}_{m,x^*})^2\right)+\frac{4\log(2^m|\calX|/\delta)}{\alpha_{x^*}}\right)+\frac{2C_m}{2^m} \tag{by \pref{eqn: variance upper bound single step} and \pref{lem: catoni concentration}}\\
        &\leq \frac{1}{2^m}\left(\alpha_x\left(2^{m}\|x\|_{S_m^{-1}}^2+2^m\right)+ \frac{\hmm}{2^{12}\alpha_x} \right) +  \frac{1}{2^m}\left(\alpha_{x^*}\left(2^{m}\|x^*\|_{S_m^{-1}}^2+2^m\right)+\frac{\hmm}{2^{12}\alpha_{x^*}}\right) +\frac{2C_m}{2^m}\tag{using the definition of $\gamma_m$ and $\sum_{\tau\in\calB_m}(c_{\tau,x}-\bar{c}_{m,x})^2 \leq \sum_{\tau\in\calB_m}c_{\tau,x}^2\leq 2^m$}\\
        &= \frac{2}{64\cdot 2^m}\sqrt{\left(2^{m}\|x\|_{S_m^{-1}}^2+2^{m}\right)\hmm } + \frac{2}{64\cdot 2^m}\sqrt{\left(2^{m}\|x^*\|_{S_m^{-1}}^2+2^{m}\right)\hmm } + \frac{C_m}{2^{m-1}}   \tag{by the choice of $\alpha_x$}\\
       &\leq \frac{4}{64\cdot 2^m}\sqrt{\left(\frac{16\cdot 2^{2m}\Delta_x^2}{\hmm }+16\cdot 2^m d + \frac{16\cdot 2^{2m}\rho_{m-1}^2}{\hmm }\right)\hmm } + \frac{C_m}{2^{m-1}} \tag{by \pref{eqn: expansing variance single step}} \\
       &\leq \frac{\Delta_x}{2} + \sqrt{\frac{d\hmm }{16\cdot 2^m}} + \frac{\rho_{m-1}}{4} + \frac{C_m}{2^{m-1}}  \tag{$\sqrt{a+b}\leq \sqrt{a}+\sqrt{b}$} \\
       &\leq \frac{\Delta_x}{2} + \sqrt{\frac{d\hmm }{16\cdot 2^m}} + \rho_m. \tag{by definition of $\rho_m$}
    \end{align*}
    Therefore, $\Delta_x\le 2\hatDelta_{m+1,x}+\sqrt{\frac{d\hmm }{4\cdot 2^m}} + 2\rho_m$, 
    which proves \pref{eqn: hyp1-double}. The other claim \pref{eqn: hyp2-double} can be proven using similar analysis. Taking a union bound over all $m$ finishes the proof.
\end{proof}



\begin{lemma}[A detailed version of \pref{lem: regret_part2-main}]\label{lem: regret_part2}
    Let $T^*\triangleq \frac{32C}{\Delta_{\min}} +  4M'\log\left(\frac{2M'|\calX|}{\delta}\right)$, where $M'=2^{20}\left(M+\frac{d}{\Delta_{\min}^2}\right)$ and $M$ is defined in \pref{lem: finite-N-star}. Then \pref{alg:REOLB} guarantees with probability at least $1-\delta$:
    \begin{align*}
        \sum^{T}_{t=T^*+1}\sum_xp_{m_t,x}\Delta_x\le \order\left( c(\calX, \theta) \log(T)\log(T/\delta)\right).
    \end{align*}
\end{lemma}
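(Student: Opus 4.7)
The plan is to condition on the $1-\delta$ probability event of \pref{lem:twice-main} and show that for every block $\calB_m$ containing a round $t>T^*$, the two slack terms $\sqrt{d\hmm/(4\cdot 2^m)}$ and $\rho_{m-1}$ appearing in that lemma are small relative to $\Delta_{\min}$. This will sandwich $\hatDelta_{m,x}$ between constant multiples of $\Delta_x$ for every $x\ne x^*$ (and force $x^*$ to be the empirical argmin so that $\hatDelta_{m,x^*}=0$), after which \pref{lem: close gap lemma} supplies the correct per-block regret, and summation over $\order(\log T)$ blocks yields the claim.

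First I would note that $t>T^*$ and $t\in\calB_m$ imply $2^m\ge (T^*+1)/2$. The $4M'\log(2M'|\calX|/\delta)$ contribution to $T^*$, with $M'=2^{20}(M+d/\Delta_{\min}^2)$, is tailored so that this lower bound on $2^m$ dominates both $4d\hmm/\Delta_{\min}^2$ (which forces the concentration slack $\sqrt{d\hmm/(4\cdot 2^m)}\le \Delta_{\min}/8$) and $16\hmm M$ (which is exactly the second hypothesis of \pref{lem: close gap lemma} with $r=16$). For the corruption slack, I would use the crude estimate
\begin{align*}
\rho_{m-1}=\frac{1}{4^{m-2}}\sum_{k=0}^{m-1}2^k C_k\le \frac{2^{m-1}}{4^{m-2}}\sum_{k=0}^{m-1}C_k\le \frac{8C}{2^m},
\end{align*}
obtained from $2^k\le 2^{m-1}$ for $k\le m-1$ together with $\sum_k C_k\le C$. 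Combined with the $32C/\Delta_{\min}$ contribution to $T^*$ (modulo a routine, inconsequential adjustment of the constant), this yields $2\rho_{m-1}\le \Delta_{\min}/8$.

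Plugging both slack bounds into \pref{eqn: hyp1-double} and \pref{eqn: hyp2-double} gives $\hatDelta_{m,x}\in[\tfrac{1}{4}\Delta_x,\,4\Delta_x]$ for every $x\ne x^*$. The concentration of $\Rob_{m-1,x}$ around $\inner{x,\theta}$ at scale at most $\Delta_{\min}/4$ (the block-average corruption of block $m-1$ is absorbed into $\rho_{m-1}$) then forces $\Rob_{m-1,x^*}<\Rob_{m-1,x}$ for every $x\ne x^*$, so $x^*$ is the empirical minimizer and therefore $\hatDelta_{m,x^*}=0=\Delta_{x^*}$. Both hypotheses of \pref{lem: close gap lemma} with $r=16$ are thus satisfied, and it yields $\sum_x p_{m,x}\Delta_x\le 256\hmm\, c(\calX,\theta)/2^m$.

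Summing the per-block regret over every block containing a round after $T^*$,
\begin{align*}
\sum_{t=T^*+1}^{T}\sum_x p_{m_t,x}\Delta_x \le \sum_m |\calB_m|\cdot \frac{256\hmm\, c(\calX,\theta)}{2^m} \le 256\,c(\calX,\theta)\sum_{m=0}^{\lceil \log_2 T\rceil}\hmm = \order\bigl(c(\calX,\theta)\log T\log(T|\calX|/\delta)\bigr),
\end{align*}
using $\hmm=\order(\log(T|\calX|/\delta))$ and that the outer sum has $\order(\log T)$ terms. The main obstacle is the book-keeping around the corruption term $\rho_{m-1}$: one must turn the cumulative bound $\sum_k C_k\le C$ into a bound on the exponentially weighted sum $\sum_k 2^k C_k$ (a worst-case estimate, since the adversary can concentrate all corruption in the most recent block) and then match constants with the $32C/\Delta_{\min}$ piece of $T^*$. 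Once that is done, the rest is a clean block-by-block application of \pref{lem: close gap lemma}.
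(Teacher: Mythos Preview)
Your proposal is correct and follows the same route as the paper: control the additive slack in \pref{lem:twice-main} by a fraction of $\Delta_{\min}$, deduce $\hatDelta_{m,x}\in[\tfrac14\Delta_x,4\Delta_x]$ (hence $\hatDelta_{m,x^*}=0$), then apply \pref{lem: close gap lemma} with $r=16$ block-by-block and sum over $\order(\log T)$ blocks. The one step you leave implicit is the bootstrapping from $2^m\ge T^*/2$ to $2^m\ge c\,\hmm\,(M+d/\Delta_{\min}^2)$---note that $\hmm=\optconst\log(2^m|\calX|/\delta)$ itself depends on $m$---which the paper handles cleanly via \pref{lem: tech-2}; your ``tailored so that'' is right in spirit but skips this routine self-referential inequality.
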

\begin{proof}
    First, note that according to \pref{lem: tech-2}, $t\geq 4M'\log\left(\frac{2M'|\calX|}{\delta}\right)$ implies $t\ge M'\log\left(\frac{t|\calX|}{\delta}\right)\geq \frac{32 d\htt}{\Delta_{\min}^2}$. 
    Let $m_t$ be the epoch such that $t\in \calB_{m_t}$, which means $t\in [2^{m_t}, 2^{m_t+1})$ and thus $\htt\geq \gmm_{m_t}$. Then we have,
    \begin{align}
        \sqrt{\frac{d\gmm_{m_t}}{4\cdot 2^{m_t}}}+2\rho_{m_t} &\leq \sqrt{\frac{d\gmm_{m_t}}{4\cdot 2^{m_t}}} + \frac{2C}{2^{m_t-1}} \leq \frac{1}{4}\Delta_{\min}+ \frac{1}{4}\Delta_{\min}\leq \frac{1}{2}\Delta_{x}, \label{eqn: additive-term-vanish}
    \end{align}
    where the first inequality is by definition of $\rho_m$ and the second inequality is because $2^{m_t+1}\geq t\geq T^*\geq \frac{32C}{\Delta_{\min}}$ and $t\geq \frac{32d\gamma_{m_t}}{\Delta_{\min}^2}$. 
    Then by \pref{lem:twice-main}, we have for all $x\neq x^*$, $\Delta_{x} \leq 2\hatDelta_{m_t,x} + \frac{\Delta_x}{2}$, $\hatDelta_{m_t,x} \leq 2\Delta_{x} + \frac{\Delta_x}{2}$,
    which gives
    \begin{align}
        \Delta_{x} \leq 4\hatDelta_{m_t,x},\qquad \hatDelta_{m_t,x} \leq 4\Delta_{x} \label{eqn: pure-multiplicative-bound}.
    \end{align}
    Since $\hatDelta_{m_t,x}\geq \frac{1}{4}\Delta_x$ for all $x\neq x^*$, we must have $\hatDelta_{m_t,x^*}=\Delta_{x^*}=0$. 
    Moreover, according to the definition of $T^*$, we have $t\geq 2^{20}M\log(\frac{t|\calX|}{\delta})\geq 16\beta_t M$. Therefore, the conditions of \pref{lem: close gap lemma} hold. Applying \pref{lem: close gap lemma} with $r=16$, we get 
    \begin{align*}
        \sum_x p_{m_t,x}\Delta_x\leq \order\left(\frac{\htt}{2^{m_t}}c(\calX,\theta)\right). 
    \end{align*}
    Summing over $t\geq T^*+1$, we get with probability at least $1-\delta$,
    \begin{align*}
        \sum_{t=T^*+1}^T \sum_x p_{m_t,x}\Delta_x= \order\left( \htT c(\calX,\theta)\log (T)\right) = \order\left(c(\calX,\theta)\log(T)\log(T|\calX|/\delta)\right). 
    \end{align*}

\end{proof}

Now we are ready to prove the main result \pref{thm: sto-alg-guarantee}.



\begin{proof}[\textbf{Proof of \pref{thm: sto-alg-guarantee}}]

    Let $\E_t[\cdot]$ denote the conditional expectation given the history up to time $t$.
    By Freedman's inequality, we have
    \begin{align}
        \Reg(T) &= \sum_{t=1}^T\sum_{x\in \calX}\one\{x_t=x\}\Delta_x \nonumber\\
        &\leq\sum_{t=1}^T\sum_{x\in \calX}p_{m_t,x}\Delta_x+ 2\sqrt{\log(1/\delta)\sum_{t=1}^T\E_t\left[\left(\sum_{x\ne x^*}\one\{x_t=x\}\Delta_x\right)^2\right]}+\log(1/\delta) \nonumber\\
        &\leq\sum_{t=1}^T\sum_{x\in \calX}p_{m_t,x}\Delta_x+ 2\sqrt{\log(1/\delta)\sum_{t=1}^T\E_t\left[\sum_{x\ne x^*}\one\{x_t=x\}\Delta_x\right]}+\log(1/\delta) \tag{$\Delta_x\leq 1$}\\
        &\leq \sum_{t=1}^T\sum_{x\in \calX}p_{m_t,x}\Delta_x+ 2\sqrt{\log(1/\delta)\sum_{t=1}^T\sum_{x\in \calX}p_{m_t,x}\Delta_x}+\log(1/\delta) \nonumber\\
        &\leq 2\sum_{t=1}^T\sum_{x\in\calX}p_{m_t, x}\Delta_x+2\log(1/\delta).   \label{eqn: relating true and fake regret}
    \end{align}

    Fix a round $t$ and the corresponding epoch $m_t$. According to \pref{lem: optimization-property-main}, we know that 
    \begin{align*}
        \sum_{x\in \calX}p_{m_t,x}\hatDelta_{m_t,x}\leq \order\left(\frac{d\log(2^{m_t}|\calX|/\delta)}{2^{\nicefrac{m_t}{2}}}\right)
    \end{align*}
    
    Therefore, combining \pref{lem:twice-main}, the regret in epoch $m$ is bounded by
    
\begin{align*}
    2^m\sum_{x\in\calX} p_{m,x}\Delta_x  &\leq 2^m\left(2\sum_{x} p_{m,x}\widehat{\Delta}_{m,x} + \order\left(\sqrt{ \frac{d \hmm }{2^m}} + \rho_{m-1}\right)\right) \\
    &\leq \order\left(d\cdot 2^{\nicefrac{m}{2}}\log(2^m/\delta)\right) + \order\left(\sqrt{2^md\hmm }+\sum_{k=0}^{m-1} \frac{2^kC_k}{4^{m-2}} \right) \\
    &\leq \order\left(d\cdot 2^{\nicefrac{m}{2}}\log(2^m/\delta)+\sqrt{2^md\hmm }+\sum_{k=0}^m \frac{C_k}{2^{m-k}}\right).
\end{align*}
    Summing up the regret till round $T$, we have
    \begin{align}
        \sum_{t=1}^T \sum_{x\in \mathcal{X}}p_{m_t,x}\Delta_{x}   
        &\leq \sum_{m=0}^{\log_2 T}\order\left(d\cdot 2^{\nicefrac{m}{2}}\log(2^m|\calX|/\delta)+\sqrt{2^md\hmm }\right) + \order\left(\sum_{k=0}^{\log_2 T}C_k\sum_{m=k}^{\log_2 T}\frac{1}{2^{m-k}}\right) \nonumber  \\
        &= \order\left(d\sqrt{T}\log(T|\calX|/\delta)+\sum_{k=0}^{\log_2 T}C_k\right) \nonumber  \\
        &= \order\left(d\sqrt{T}\log(T|\calX|/\delta)+C\right).   \label{eqn: small T*}
    \end{align}

    Using \pref{eqn: small T*} for $t\leq T^*$ and using \pref{lem: regret_part2} for the rest, we know that
    \begin{align*}
        \sum_{t=1}^T\sum_{x\in \calX}p_{m_t,x}\Delta_x \leq \order\left(c(\calX,\theta)\log(T)\log(T|\calX|/\delta)+d\sqrt{T^*}\log(T^*|\calX|/\delta) + C\right).
    \end{align*}
    Combining this with \pref{eqn: relating true and fake regret}, we get
    \begin{align}
        \Reg(T) 
        &= \order\left(\sum_{t=1}^T\sum_{x\in \calX}p_{m_t,x}\Delta_x + \log(1/\delta)\right) \nonumber\\
        &=\order\left(c(\calX, \theta)\log(T)\log(T|\calX|/\delta)+d\sqrt{T^*}\log(T^*|\calX|/\delta)+C\right)\nonumber\\
        &=\order\left(c(\calX, \theta)\log(T)\log(T|\calX|/\delta) + d\sqrt{\frac{C}{\Delta_{\min}}}\log\left(\frac{C|\calX|}{\Delta_{\min}\delta}\right)+ d\sqrt{M'\log\left(\frac{M'|\calX|}{\delta}\right)}\log\left(\frac{M'|\calX|}{\delta}\right)  + C  \right) \nonumber\\
        &=\order\left(c(\calX, \theta)\log(T)\log(T|\calX|/\delta) + d\sqrt{\frac{C}{\Delta_{\min}}}\log\left(\frac{C|\calX|}{\Delta_{\min}\delta}\right)  + C + M^*\left(\log(1/\delta)\right)^{\frac{3}{2}} \right)\label{eqn: M-star}
    \end{align}
    for some constant $M^*$ that depends on $M'=2^{18}\left(M+\frac{d}{\Delta_{\min}^2}\right)$ and $\log|\calX|$. 
\end{proof}

In the following, we prove an alternative bound of $\order(\frac{d^2(\log T)^2}{\Delta_{\min}}+C)$, which is independent of $M^*$. The following lemma is an analogue of \pref{lem: regret_part2}, but the constant $T'$ is independent of $M^*$. 

\begin{lemma}\label{lem: regret_part2_noM}
    Let $T'\triangleq \frac{32C}{\Delta_{\min}} +  \frac{2^{25}d}{\Delta_{\min}^2}\log\left(\frac{2^{24}d|\calX|}{\delta\Delta_{\min}^2}\right)$. Then \pref{alg:REOLB} guarantees with probability $1-\delta$
    \begin{align*}
        \sum^{T}_{t=T'+1}\sum_{x\in \calX}p_{m_t,x}\Delta_x\le \order\left(\frac{d\log(T)\log(T|\calX|/\delta)}{\Delta_{\min}}\right).
    \end{align*}
\end{lemma}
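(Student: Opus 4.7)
The plan is to mirror the proof of \pref{lem: regret_part2} almost verbatim, but to invoke \pref{lem: close gap lemma looser} (whose threshold and bound only involve $d/\Delta_{\min}$) instead of \pref{lem: close gap lemma} (whose threshold involves $M$). This yields a slightly weaker per-round bound but removes the instance-dependent constant $M^*$ entirely, at the cost of a larger warm-up time $T'$ written purely in terms of $d,\Delta_{\min},C,|\calX|,\delta$.

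First, I would check that the choice $T'=\frac{32C}{\Delta_{\min}}+\frac{2^{25}d}{\Delta_{\min}^2}\log\!\left(\frac{2^{24}d|\calX|}{\delta\Delta_{\min}^2}\right)$ is tuned so that $t\geq T'$ implies $t\geq \frac{256\,d\,\htt}{\Delta_{\min}^2}$, which is the threshold $16rd\htt/\Delta_{\min}^2$ of \pref{lem: close gap lemma looser} with $r=16$. This is a standard $x\geq A\log(x|\calX|/\delta)$ comparison (the same trick used in \pref{lem: regret_part2} via \pref{lem: tech-2}) and is the only slightly fiddly computation; all the logarithmic slack is absorbed in the $2^{25}$ factor.

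Next, exactly as in \pref{lem: regret_part2}, I would use $T'\geq 32C/\Delta_{\min}$ together with $t\geq 256\,d\htt/\Delta_{\min}^2$ to get the additive-term-vanish estimate
\begin{align*}
\sqrt{\frac{d\gmm_{m_t}}{4\cdot 2^{m_t}}}+2\rho_{m_t}\leq \tfrac14\Delta_{\min}+\tfrac14\Delta_{\min}\leq \tfrac12\Delta_x,
\end{align*}
and then apply \pref{lem:twice-main} to obtain $\frac14\Delta_x\leq \hatDelta_{m_t,x}\leq 4\Delta_x$ for every $x$; in particular $\hatDelta_{m_t,x^*}=0$. Thus $\hatDelta_{m_t}$ sits in $[\tfrac{1}{\sqrt{16}}\Delta,\sqrt{16}\,\Delta]$, so the hypotheses of \pref{lem: close gap lemma looser} with $r=16$ are satisfied.

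Applying \pref{lem: close gap lemma looser} gives $\sum_{x}p_{m_t,x}\Delta_x\leq \frac{24\cdot 16\,d\,\htt}{\Delta_{\min}\,t}=\order\!\left(\frac{d\log(t|\calX|/\delta)}{\Delta_{\min}\,t}\right)$. Summing this over $t=T'+1,\dots,T$ and using $\sum_{t\leq T}1/t=\order(\log T)$ with $\htt\leq \htT$ yields the claimed bound
\begin{align*}
\sum_{t=T'+1}^T\sum_{x\in\calX} p_{m_t,x}\Delta_x=\order\!\left(\frac{d\log(T)\log(T|\calX|/\delta)}{\Delta_{\min}}\right).
\end{align*}
Everything here follows from results already established; there is no genuine obstacle, only the constant-chasing needed to verify the threshold in Step 1 and to confirm that the probability-$1-\delta$ event of \pref{lem:twice-main} is the only one being used (so no extra union bound is incurred).
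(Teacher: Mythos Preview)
Your proposal is correct and follows essentially the same route as the paper: use \pref{lem: tech-2} to convert the choice of $T'$ into the threshold $t\geq 256\,d\htt/\Delta_{\min}^2$, combine with $T'\geq 32C/\Delta_{\min}$ to obtain the additive-term-vanish bound and hence the multiplicative estimate $\hatDelta_{m_t,x}\in[\tfrac14\Delta_x,4\Delta_x]$ via \pref{lem:twice-main}, then apply \pref{lem: close gap lemma looser} with $r=16$ and sum over $t$. The only cosmetic difference is that the paper writes the per-round bound as $\order(d\htt/(2^{m_t}\Delta_{\min}))$ (since $p_{m_t}=\OP(2^{m_t},\hatDelta_{m_t})$) rather than $\order(d\htt/(t\Delta_{\min}))$, but these agree up to a factor of two because $t\in[2^{m_t},2^{m_t+1})$.
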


\begin{proof}
    First, note that according to \pref{lem: tech-2}, $t\geq \frac{2^{25}d}{\Delta_{\min}^2}\log\left(\frac{2^{24}d|\calX|}{\Delta_{\min}^2\delta}\right)$ implies $t\ge \frac{2^{23}d}{\Delta_{\min}^2}\log\left(\frac{t|\calX|}{\delta}\right)$. 
    Let $m_t$ be the epoch such that $t\in \calB_{m_t}$, which means $t\in [2^{m_t}, 2^{m_t+1})$ and thus $\htt\geq \gmm_{m_t}$. Therefore, we still have \pref{eqn: additive-term-vanish}, which further shows that we have \pref{eqn: pure-multiplicative-bound} and $\hatDelta_{m_t,x^*}=\Delta_{x^*}=0$. 
    Moreover, according to the definition of $T'$, we have $t\geq \frac{2^{23}d}{\Delta_{\min}^2}\log(\frac{t|\calX|}{\delta})\geq 256d\beta_t/\Delta_{\min}^2$. Therefore, the conditions of \pref{lem: close gap lemma looser} hold. Applying \pref{lem: close gap lemma looser} with $r=16$, we get 
    \begin{align*}
        \sum_x p_{m_t,x}\Delta_x\leq \order\left(\frac{d\htt}{2^{m_t}\Delta_{\min}}\right). 
    \end{align*}
    
    Summing over $t\geq T'+1$, we get with probability at least $1-\delta$,
    \begin{align*}
        \sum_{t=T'+1}^T \sum_x p_{m_t,x}\Delta_x= \order\left( \frac{d\htT\log (T)}{\Delta_{\min}}\right) = \order\left(\frac{d\log(T)\log(T|\calX|/\delta)}{\Delta_{\min}}\right). 
    \end{align*}
\end{proof}

\begin{theorem}\label{thm: no-M-dependency}
\pref{alg:REOLB} guarantees that with probability at least $1-\delta$,
\begin{align*}
    \Reg(T) &= \order\left(\frac{d^2}{\Delta_{\min}}\log^2\left(\frac{T|\calX|}{\Delta_{\min}\delta}\right)+C \right).
\end{align*}
\end{theorem}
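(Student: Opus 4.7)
The plan is to follow the template of \pref{thm: sto-alg-guarantee}, replacing the instance-optimal tail bound from \pref{lem: regret_part2-main} with the coarser but $M^*$-free bound of \pref{lem: regret_part2_noM}. Let $T' \triangleq \tfrac{32C}{\Delta_{\min}} + \tfrac{2^{25}d}{\Delta_{\min}^2}\log\!\left(\tfrac{2^{24}d|\calX|}{\delta\Delta_{\min}^2}\right)$ be the threshold from \pref{lem: regret_part2_noM}. First I would bound the initial segment $1 \leq t \leq T'$ using the universal per-epoch estimate \pref{eqn: small T*}, which, as derived in the proof of \pref{thm: sto-alg-guarantee} from \pref{lem:twice-main} and \pref{lem: optimization-property-main}, holds for any horizon and yields
\[
\sum_{t=1}^{T'}\sum_{x\in\calX}p_{m_t,x}\Delta_x \;\leq\; \order\!\left(d\sqrt{T'}\log(T'|\calX|/\delta) + C\right).
\]
Next I would bound the tail $T' < t \leq T$ directly by \pref{lem: regret_part2_noM}, giving
\[
\sum_{t=T'+1}^{T}\sum_{x\in\calX}p_{m_t,x}\Delta_x \;\leq\; \order\!\left(\frac{d\log(T)\log(T|\calX|/\delta)}{\Delta_{\min}}\right).
\]
Summing the two pieces and converting the pseudo-regret to the realized regret via Freedman's inequality exactly as in \pref{eqn: relating true and fake regret} yields, with probability at least $1-\delta$, a regret bound of order $d\sqrt{T'}\log(T'|\calX|/\delta) + d\log^2(T|\calX|/\delta)/\Delta_{\min} + C$.

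The final step is an AM-GM simplification of $d\sqrt{T'}\log(T'|\calX|/\delta)$. Substituting $T'$ and using $\sqrt{a+b}\leq \sqrt{a}+\sqrt{b}$ gives $d\sqrt{T'} \leq d\sqrt{32C/\Delta_{\min}} + 2^{13}d^{3/2}\sqrt{\log(\cdot)}/\Delta_{\min}$. For the first summand I apply AM-GM in the form $\sqrt{xy}\leq \tfrac12(x+y)$ with $x=32 d^2\log^2(\cdot)/\Delta_{\min}$ and $y=C$, producing $\order\!\left(\tfrac{d^2 \log^2(\cdot)}{\Delta_{\min}} + C\right)$. The second summand, multiplied by $\log(\cdot)$, is at most $\order\!\left(\tfrac{d^2\log^2(\cdot)}{\Delta_{\min}}\right)$ using $d \geq 1$ (so $d^{3/2}\leq d^2$) and $\log^{3/2}(\cdot)\leq \log^2(\cdot)$. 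All logarithmic arguments are bounded by $\log(T|\calX|/(\Delta_{\min}\delta))$ up to constants (treating $d,1/\Delta_{\min}$ polynomially inside the logarithm), yielding the stated bound $\order\!\left(\tfrac{d^2}{\Delta_{\min}}\log^2(T|\calX|/(\Delta_{\min}\delta)) + C\right)$.

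The heart of the argument, and thus the main obstacle, lies in establishing \pref{lem: regret_part2_noM} itself, which parallels \pref{lem: regret_part2-main} but with two modifications: the threshold $T'$ is chosen just large enough to force, via \pref{lem:twice-main} and the corruption-taming inequality \pref{eqn: additive-term-vanish}, the multiplicative bound $\hatDelta_{m_t,x}\in [\tfrac14\Delta_x, 4\Delta_x]$ and $\hatDelta_{m_t,x^*}=0$; and in place of \pref{lem: close gap lemma} (which requires the problem-dependent $t \gtrsim \htt M$ and yields $c(\calX,\theta)$) one invokes \pref{lem: close gap lemma looser} (which requires only $t \gtrsim d\htt/\Delta_{\min}^2$ and pays $d/\Delta_{\min}$). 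The remaining work is bookkeeping: ensuring the logs combine cleanly through AM-GM and confirming, via the block-wise decomposition $\sum_k C_k \sum_{m\geq k} 2^{k-m}$ in \pref{eqn: small T*}, that the $\order(C)$ contribution does not pick up an extra $\log$ factor.
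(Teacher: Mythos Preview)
Your proposal is correct and follows essentially the same route as the paper: split at the threshold $T'$ of \pref{lem: regret_part2_noM}, bound the initial segment via \pref{eqn: small T*}, the tail via \pref{lem: regret_part2_noM} (whose proof you correctly sketch as substituting \pref{lem: close gap lemma looser} for \pref{lem: close gap lemma} in the argument of \pref{lem: regret_part2}), convert via \pref{eqn: relating true and fake regret}, and simplify with AM-GM. The paper's only additional remark is to invoke $C\leq T$ and $T\geq d$ explicitly when collapsing the logarithmic arguments into $\log\!\big(T|\calX|/(\Delta_{\min}\delta)\big)$.
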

\begin{proof}

    
    

    Using \pref{eqn: small T*} for $t\leq T'$ and using \pref{lem: regret_part2_noM} for the rest, we know that
    \begin{align*}
        \sum_{t=1}^T\sum_{x\in \calX}p_{m_t,x}\Delta_x \leq \order\left(\frac{d\log(T)\log(T|\calX|/\delta)}{\Delta_{\min}}+d\sqrt{T'}\log(T'|\calX|/\delta) + C\right).
    \end{align*}
    Combining this with \pref{eqn: relating true and fake regret}, we get
    \begin{align*}
        \Reg(T) 
        &= \order\left(\sum_{t=1}^T\sum_{x\in \calX}p_{m_t,x}\Delta_x + \log(1/\delta)\right) \\
        &=\order\left(\frac{d\log(T)\log(T|\calX|/\delta)}{\Delta_{\min}}+d\sqrt{T'}\log(T'|\calX|/\delta)+C\right)\\
        &=\order\left(\frac{d\log(T)\log(T|\calX|/\delta)}{\Delta_{\min}} + d\sqrt{\frac{C}{\Delta_{\min}}}\log\left(\frac{C|\calX|}{\Delta_{\min}\delta}\right)+ d\sqrt{\frac{d}{\Delta_{\min}^2}\log\left(\frac{d|\calX|}{\delta\Delta_{\min}^2}\right)}\log\left(\frac{d|\calX|}{\delta\Delta_{\min}^2}\right)  + C  \right) \\
        &=\order\left(\frac{d\log(T)\log(T|\calX|/\delta)}{\Delta_{\min}} + d\sqrt{\frac{C}{\Delta_{\min}}}\log\left(\frac{C|\calX|}{\Delta_{\min}\delta}\right)  + C + \frac{d^{\frac{3}{2}}\log^{\frac{3}{2}}(\frac{d|\calX|}{\delta\Delta_{\min}})}{\Delta_{\min}} \right) \\
        &\leq \order\left(\frac{d^2}{\Delta_{\min}}\log^2\left(\frac{T|\calX|}{\Delta_{\min}\delta}\right)+C\right) \tag{using AM-GM inequality and $C\leq T$, $T\geq d$}.
    \end{align*}
\end{proof}

Finally, we prove \pref{thm: regret_part1-main}, which is a direct result by combining \pref{eqn: small T*} and \pref{eqn: relating true and fake regret}.
\begin{proof}[\textbf{Proof of \pref{thm: regret_part1-main}}]
    Combining \pref{eqn: small T*} and \pref{eqn: relating true and fake regret}, we have
    \begin{align*}
        \Reg(T) = \order\left(\sum_{t=1}^T\sum_{x\in \calX}p_{m_t,x}\Delta_x+\log(1/\delta)\right) =\order\left(d\sqrt{T}\log(T|\calX|/\delta)+C\right).
    \end{align*}
\end{proof}

\section{Analysis of \pref{alg: BOBW}}\label{app: adv-setting}




In this section, we show that \pref{alg: BOBW} achieves both minimax-optimality in the adversarial setting and near instance-optimality in the stochastic setting. In \pref{app: adv-setting-sub}, we prove \pref{thm: fully adversarial}, showing that \pref{alg: BOBW} enjoys $\Otil(\sqrt{T})$ regret in the adversarial setting. In \pref{app: sto-setting}, we prove \pref{thm: sto-and-corrupt}, showing that \pref{alg: BOBW} also enjoys nearly instance-optimal regret in the stochastic setting and slightly worse regret in the corrupted setting.

\subsection{Analysis of \pref{alg: BOBW} in the adversarial setting}\label{app: adv-setting-sub}

To prove the guarantee in the adversarial setting, we first prove \pref{lem: SAO-argmin-v2}, which shows that at any time in Phase 2, $\hatx$ has the smallest cumulative loss within $[1,t]$. 


\begin{proof}[\textbf{Proof of \pref{lem: SAO-argmin-v2}}]
    By \pref{assum:  adversarial alg}, for any $x$, and any $t$ in Phase 1, 
    \begin{align*}
        \sum_{s=1}^{t}(\ell_{s,x_s} - \ell_{s,x})
        &\leq \sqrt{C_1t} - C_2\left|\sum_{s=1}^{t}(\ell_{s,x} - \hatell_{s,x})\right| \\
        &\leq \sqrt{C_1t} - (C_2-1)\left|\sum_{s=1}^{t}(\ell_{s,x} - \hatell_{s,x})\right| - \sum_{s=1}^{t}(\ell_{s,x} - \hatell_{s,x}), 
    \end{align*}
    which implies 
    \begin{align}
        \left\vert \sum_{s=1}^{t} (\ell_{s,x} - \hatell_{s,x})  \right\vert &\le \frac{1}{C_2-1}\left(\sqrt{C_1t}  - \sum_{s=1}^{t}(\ell_{s,x} - \hatell_{s,x})- \sum_{s=1}^{t}(\ell_{s,x_s} - \ell_{s,x})\right)\nonumber\\
        &\leq \frac{1}{C_2-1}\left(\sqrt{C_1t} + \sum_{s=1}^t \hatell_{s,x} - \sum_{s=1}^t  \ell_{s,x_s}\right). \label{eqn: guarantee 3}
    \end{align}
    At time $t_0$, we have with probability at least $1-2\delta$,
    \begin{align}
        \left\vert \sum_{s=1}^{t_0} (\ell_{s,x} - \hatell_{s,x})  \right\vert 
        &\leq \frac{1}{C_2-1}\left(\sqrt{C_1t_0} + \sum_{s=1}^{t_0} \hatell_{s,x} - \sum_{s=1}^{t_0} \ell_{s,x_s}\right)    \tag{by \pref{eqn: guarantee 3}}\\
        &\leq \frac{1}{C_2-1}\left(2\sqrt{C_1t_0} + \sum_{s=1}^{t_0} \hatell_{s,x} - \sum_{s=1}^{t_0} y_s\right)    \tag{by Azuma's inequality}\\
        &\leq \frac{1}{C_2-1}\left(2\sqrt{C_1t_0}+5\sqrt{f_TC_1t_0} + \sum_{s=1}^{t_0} \hatell_{s,x} - \sum_{s=1}^{t_0} \hatell_{s,\hatx}\right)  \tag{by \pref{eqn: alg 7 jump condition 1}} \\
        &= \frac{1}{C_2-1}\left(7\sqrt{f_TC_1t_0} + t_0 \hatDelta_x \right).   \label{eqn: useful3} 
   \end{align}

   
\textbf{Bounding the deviation of $(t-t_0)\Rob_{t,x}$ for $x\ne \hatx$:}
   
For all $x\ne \hatx$, the variance of $\ellhat_{\tau,x}$ is bounded as follows:
\begin{align}
    \text{Var}(\ellhat_{\tau,x})\leq \mathbb{E}\left[\ellhat_{\tau,x}^2\right] \leq \E\left[(x^\top\widetilde{S}_{\tau}^{-1}x_tx_t^\top\widetilde{S}_{\tau}^{-1}x)^2\right]\leq \|x\|_{\widetilde{S}_\tau^{-1}}^2\leq 2\|x\|_{S_{\tau}^{-1}}^2,   \label{eqn: tmp variance bound}
\end{align}
where the last inequality is due to $\widetilde{S}_\tau=\frac{1}{2}\hatx\hatx^\top+\frac{1}{2}S_\tau\succeq \frac{1}{2}S_\tau$. Therefore, using \pref{lem: catoni concentration} with $\mu_i =\ell_{i, x}$, with probability at least $1-2\delta$, for all $t$ in Phase 2 and all $x\ne \hatx$, 
\begin{align}
    \left|(t-t_0)\cdot \Rob_{t,x}-\sum_{\tau=t_0+1}^t\ell_{\tau,x}\right| &\leq \alpha_x\left(2\sum_{\tau=t_0+1}^t\|x\|_{S_\tau^{-1}}^2 + \sum_{\tau=t_0+1}^t\left(\ell_{\tau,x}-\frac{1}{t-t_0}\sum_{\tau'=t_0+1}^t\ell_{\tau',x}\right)^2\right) + \frac{2\log\frac{t^2|\calX|}{\delta}}{\alpha_x}\nonumber\\
    &\leq \alpha_x\sum_{\tau=t_0+1}^t\left(2\|x\|_{S_{\tau}^{-1}}^2+1\right) + \frac{4\log\frac{t|\calX|}{\delta}}{\alpha_x}\nonumber\\
    &\leq 2\sqrt{4\sum_{\tau=t_0+1}^t \left(2\|x\|_{S_\tau^{-1}}^2+1\right)\log\frac{t|\calX|}{\delta}} \tag{Choose $\alpha_x$ optimally}\\
    &\leq 2\sqrt{4\log\frac{t|\calX|}{\delta}\sum_{\tau=t_0+1}^t\left(\frac{2\tau\hatDelta_x^2}{\beta_\tau}+9d\right)} \label{eqn: catoni-con},
\end{align}
where the last inequality is due to \pref{eqn: opt-2-constraint}. For $\tau\ge t_0$, since
\begin{align}
    \hatDelta_x = \frac{1}{t_0}\left(\sum_{s=1}^{t_0}\ellhat_{s,x}-\ellhat_{s,\hatx}\right)\geq 20\sqrt{\frac{f_TC_1}{t_0}} \label{eqn: hatDelta-geq-sqrt},
\end{align}
we have 
\begin{align*}
    \hatDelta_x \geq 20\sqrt{\frac{f_TC_1}{\tau}}\ge20\sqrt{\frac{f_Td\htT}{\tau}}\ge 20\sqrt{\frac{d\htT}{\tau}}\ge 20\sqrt{\frac{d\beta_\tau}{\tau}}\ge 3\sqrt{\frac{d\beta_\tau}{2\tau}}
\end{align*}
and $9d\leq \frac{2\tau\hatDelta_x^2}{\beta_\tau}$. Note that $h(\tau)=\frac{\tau}{\log(\tau|\calX|/\delta)}$ an increasing function when $\delta\leq 0.1$.
Using \pref{eqn: catoni-con} and $9d\leq \frac{2\tau\hatDelta_x^2}{\beta_\tau}$, we have
\begin{align}
    \left|(t-t_0)\cdot \Rob_{t,x}-\sum_{s=t_0+1}^t\ell_{s,x}\right|\leq 2\sqrt{4\log\frac{t|\calX|}{\delta}\sum_{\tau=t_0+1}^t\frac{4\tau\hatDelta_x^2}{\beta_\tau}}\leq 2\sqrt{16t^2\hatDelta_x^2\log\frac{t|\calX|}{\delta}\frac{1}{\beta_t}} \leq \frac{t\hatDelta_x}{16}.\label{eqn: rob_l_gap}
\end{align}

For the first $t_0$ rounds, according to \pref{eqn: useful3}, we have
\begin{align}
    \left|\sum_{s=1}^{t_0}(\ell_{s,x}-\hatell_{s,x})\right| \leq \frac{1}{C_2-1}\left(7\sqrt{f_TC_1t_0}+t_0\hatDelta_x\right)\leq  \frac{1.4t_0\hatDelta_x}{C_2-1},\label{eqn: l_hatl_gap}
\end{align}
where the last inequality is due to \pref{eqn: hatDelta-geq-sqrt}.
Combining \pref{eqn: rob_l_gap} and \pref{eqn: l_hatl_gap} and noticing that $C_2\geq 20$, we have for all $x\ne \hatx$,
\begin{align}\label{eqn: sub-opt-x-est}
    \left|\sum_{s=1}^{t_0}(\ell_{s,x}-\ellhat_{s,x})+\sum_{s=t_0+1}^t\left(\ell_{s,x}-\Rob_{t,x}\right)\right| \leq \frac{1.7t\hatDelta_x}{10}.
\end{align}

\textbf{Bounding the deviation of $\sum_{s=1}^t\ellhat_{s,\hatx}$ (recall that we use the standard average estimator for $\hatx$):}

For the first $t_0$ rounds, according to \pref{eqn: useful3}, since $\hatDelta_{\hatx}=0$, we have
\begin{align*}
    \left|\sum_{s=1}^{t_0}(\ell_{s,\hatx}-\ellhat_{s,\hatx})\right| \leq \sqrt{f_TC_1t_0}.
\end{align*}

For $t\geq t_0+1$, according to Freedman's inequality and the fact that $\E_t\left[\hatell_{s,\hatx}^2\right] = \mathbb{E}_t\left[\frac{y_s^2}{\widetilde{p}^2_{s,\hatx}}\cdot\one\{x_s=\hatx\}\right]\leq \frac{1}{\widetilde{p}_{s,\hatx}}\leq 2$ as $\widetilde{p}_{s,\hatx}\geq \frac{1}{2}$, we have with probability at least $1-\delta$,
\begin{align*}
    \left|\sum_{s=t_0+1}^t \ell_{s,\hatx} - \sum_{s=t_0+1}^t\hatell_{s,\hatx}\right|\leq 2\sqrt{2t\log(t|\calX|/\delta)} + 2\log(t|\calX|/\delta) \leq \sqrt{C_1t}.
\end{align*}
Combining the above two inequalities, we have
\begin{align}\label{eqn: opt-x-est}
        \left|\sum_{s=1}^{t}(\ell_{s,\hatx}-\ellhat_{s,\hatx})\right| \leq 3\sqrt{f_TC_1t}.
\end{align}

\textbf{In sum:}
combining the bounds for $x\ne \hatx$ and $x=\hatx$, we have for all $x\ne \hatx$,
\begin{align*}
    \sum_{s=1}^t(\ell_{s,x}-\ell_{s,\hatx})
    &\geq \sum_{s=1}^{t-1}(\ell_{s,x}-\ell_{s,\hatx})-2 \\
    &\geq \sum_{s=1}^{t_0}\left(\ellhat_{s,x}-\ellhat_{s,\hatx}\right) + \left((t-t_0-1)\Rob_{t-1,x} - \sum_{s=t_0+1}^{t-1}\hatell_{s,\hatx}\right) \\
    &\qquad - 3\sqrt{f_TC_1(t-1)} - \frac{1.7(t-1)\hatDelta_x}{10} - 2   \tag{\pref{eqn: sub-opt-x-est} and \pref{eqn: opt-x-est}}\\
    &\geq (t-1)\hatDelta_{t-1,x}-4\sqrt{f_TC_1(t-1)}-\frac{1.7(t-1)\hatDelta_x}{10}  \tag{by the definition of $\hatDelta_{t-1,x}$ in \pref{eqn: hat_delta}} \\
    &\geq (t-1)\hatDelta_{t-1,x}-\frac{3.7(t-1)\hatDelta_x}{10}    \tag{by \pref{eqn: hatDelta-geq-sqrt}}\\
    &\geq 0.02(t-1)\hatDelta_x > 0,
\end{align*}
where the last inequality is because $t$ belongs to Phase $2$, which means that at time $t-1$, \pref{eqn: alg_condition_1} is satisfied.
\end{proof}

Now we are ready to prove our main lemma in the adversarial setting.

\begin{proof}[\textbf{Proof of \pref{lem: adversarial thm}}]
    By \pref{lem: SAO-argmin-v2}, we know that the regret comparator is $\hatx$. By the regret bound of $\calA$ and the fact that $t_0\ge L_0$ (recall $L_0$ from \pref{assum:  adversarial alg}), we have 
    \begin{align*}
        \sum_{s=1}^{t_0}(\ell_{s,x_s} - \ell_{s,\hatx}) \leq O\left(\sqrt{C_1t_0}\right).  
    \end{align*}
    For the regret in Phase $2$, first note that it suffices to consider $t$ not being the last round of this phase (since the last round contributes at most $2$ to the regret). Then, consider the following decomposition: 
    \begin{align*}
        \sum_{s=t_0+1}^t(\ell_{s,x_s}-\ell_{s,\hatx}) 
        &\leq \sum_{s=t_0+1}^{t-1}(y_s-\epsilon_s(x_s)-\ell_{s,\hatx})+2\\ &=\underbrace{\sum_{s=t_0+1}^{t-1} \left(y_s - \hatell_{s,\hatx}\right)}_{\term{1}} + \underbrace{\sum_{s=t_0+1}^{t-1} \left( \hatell_{s,\hatx} -\ell_{s,\hatx} - \epsilon_s(\hatx)\right)}_{\term{2}} + \underbrace{\sum_{s=t_0+1}^{t-1} \left( \epsilon_s(\hatx) - \epsilon_s(x_s) \right)}_{\term{3}} + 2.  
    \end{align*}
    
    \term{1} is upper bounded by $\order\left(\sqrt{f_T C_1 t_0}\right)$ since it corresponds to the termination condition \pref{eqn: alg_condition_2}. 
    
    \term{2} is a martingale difference sequence since 
    \begin{align*}
        \E_t\left[ \hatell_{s,\hatx} -\ell_{s,\hatx} - \epsilon_s(\hatx) \right] 
        = \E_t\left[\frac{(\ell_{s,\hatx} + \epsilon_s(\hatx))\mathbb{I}\{x_s=\hatx\}}{\tildep_{s,\hatx}} - (\ell_{s,\hatx} + \epsilon_s(\hatx)) \right] = 0. 
    \end{align*}
    The variance is upper bounded by 
    \begin{align}
        \E_t\left[\left(\hatell_{s,\hatx} -\ell_{s,\hatx} - \epsilon_s(\hatx)\right)^2\right]&= \E_t\left[(\ell_{s,\hatx}+\epsilon_s(\hatx))^2\left(\frac{\mathbb{I}\{x_s=\hatx\}}{\tildep_{s,\hatx}} - 1\right)^2\right] \nonumber\\
        &\leq \tildep_{s,\hatx}\left(\frac{1}{\tildep_{s,\hatx}}-1\right)^2 + (1-\tildep_{s,\hatx})\nonumber\\
        &\leq 2(1-\tildep_{s,\hatx}),
        \label{eqn: reuse calculation 1}
    \end{align}
    where the last term is because $\tildep_{s,\hatx}\geq \frac{1}{2}$.
    
    \term{3} is also a martingale difference sequence. As $\epsilon_s(x)\in [-2,2]$, its variance can be upper bounded by 
    \begin{align}
        \E_t\left[ \left(\epsilon_s(\hatx) - \epsilon_s(x_s)\right)^2 \right] \leq 16\E_t\left[\mathbb{I}\{x_s\neq \hatx\}\right] = 16(1-\tildep_{s,\hatx}). \label{eqn: reuse calculation 2}
    \end{align}
    
    Therefore, with probability at least $1-\delta/t$, we have $\term{2}+\term{3}=\order\left(\sqrt{\sum_{s=t_0+1}^t (1-\tildep_{s,\hatx})\log(t/\delta)} + \log(t/\delta)\right)$ by Freedman's inequality.

    
    As $p_t=\OP(t, \hatDelta)$ and $t\geq t_0\geq \frac{400C_1\fT}{\hatDelta_{\min}^2}\geq \frac{16d\htt}{\hatDelta_{\min}^2}$, by \pref{lem: gap-dependent b}, we have 
    \begin{align}
        1-\tildep_{t,\hatx} = \frac{1}{2}\left(1-p_{t,\hatx}\right) \leq \frac{1}{2}\sum_x p_{t,x}\frac{\hatDelta_x}{\hatDelta_{\min}} \leq \frac{12d\htt}{t\hatDelta_{\min}^2}. \label{eqn: 1-p bound}  
    \end{align}

    Combining the above with $\hatDelta_{\min}\geq 20\sqrt{\frac{C_1\fT}{t_0}}$, we get 
    \begin{align*}
        \term{2}+\term{3} 
        &= \order\left(\sqrt{\log(t/\delta)\sum_{s=t_0+1}^t \frac{d\htt}{s\hatDelta_{\min}^2}}+ \log(t/\delta)\right) \\
        &= \order\left(\sqrt{\frac{dt_0\htt\log(t/\delta)\log(t)}{f_TC_1}}+\log(t/\delta)\right) \\
        &= \order\left(\sqrt{t_0\log(t/\delta) } + \log(t/\delta)\right)
    \end{align*}
    where the last step uses the definition of $\htt$ and $C_1\geq \optconst d\log(T|\calX|/\delta)$ from \pref{assum:  adversarial alg}.
    
    Combining all bounds above, we have shown
    \begin{align*}
        \sum_{s=1}^{t}(\ell_{s,x_s} - \ell_{s,\hatx}) = \order\left(\sqrt{C_1t_0f_T}\right),
    \end{align*}
    proving the lemma.
 \end{proof}   
    

\pref{thm: fully adversarial} can then be proven by directly applying \pref{lem: adversarial thm} to each epoch and using the fact that the number of epochs is at most $\order(\log T)$.

\subsection{Analysis of \pref{alg: BOBW} in the corrupted stochastic setting}\label{app: sto-setting}


In this section, we prove our results in the corrupted setting. To prove the main lemma \pref{lem: phase-2-never-end}, we separate the proof into two parts, \pref{lem: phase-1-v1} and \pref{lem: phase-2-no-end-v2}.

\begin{lemma}\label{lem: phase-1-v1}
    In the stochastic setting with corruptions, within a single epoch,
    \begin{enumerate}
        \item with probability at least $1-4\delta$,  $t_0\leq   \max\left\{\frac{900f_TC_1}{\Delta_{\min}^2}, \frac{900C^2}{f_TC_1}, L\right\}$;    
        \item if $C\leq \frac{1}{30}\sqrt{f_TC_1L}$, then with probability at least $1-\delta$, $\hatx=x^*$; 
        \item if $C\leq \frac{1}{30}\sqrt{f_TC_1L}$, then with probability at least $1-2\delta$, $t_0\geq \frac{64f_TC_1}{\Delta_{\min}^2}$;
        \item if $C\leq \frac{1}{30}\sqrt{f_TC_1L}$, then with probability at least $1-3\delta$, $\hatDelta_x\in  [0.7\Delta_{x}, 1.3\Delta_x]$ for all $x\neq x^*$.
    \end{enumerate}
\end{lemma}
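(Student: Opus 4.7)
All four claims share a common engine that I would assemble once at the outset. The three ingredients are: (i) an Azuma bound $|\sum_{s=1}^t(y_s-\ell_{s,x_s})|=O(\sqrt{t\log(t/\delta)})$; (ii) a corruption bound $|\sum_{s=1}^t\langle x-x',c_s\rangle|\leq 2C$ for any two arms; and (iii) an estimator deviation bound obtained from \pref{assum:  adversarial alg}. For (iii), instantiating the assumption at $x^*$ and using $\sum\Delta_{x_s}\geq 0$ first gives the regret bound $\sum\Delta_{x_s}\leq \sqrt{C_1t}+2C$, and then re-instantiating at a general $x$ yields
\[\left|\sum_{s=1}^t(\ell_{s,x}-\hatell_{s,x})\right|\leq \tfrac{1}{C_2}\bigl(\sqrt{C_1 t}+t\Delta_x+O(C)\bigr),\]
the $t\Delta_x$ term being absent when $x=x^*$.

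\textbf{Parts 1 and 2.} I decompose the quantity tested by the algorithm as $\sum y_s-\sum\hatell_{s,x}=\sum(y_s-\ell_{s,x_s})+\sum(\ell_{s,x_s}-\ell_{s,x})+\sum(\ell_{s,x}-\hatell_{s,x})$. Plugging in (i)--(iii), for $x=x^*$ it is bounded below by $-2C-\sqrt{C_1 t}/C_2-O(\sqrt{t\log(t/\delta)})$, and for $x\neq x^*$ it is bounded above by $-(1-1/C_2)t\Delta_x+(1+1/C_2)\sqrt{C_1 t}+O(C+\sqrt{t\log(t/\delta)})$. Once $t\geq \max\{900f_TC_1/\Delta_{\min}^2,\,900C^2/(f_TC_1),\,L\}$, the first bound is $\geq -5\sqrt{f_TC_1 t}$ and the second (with $\hatx:=x^*$) is $\leq -25\sqrt{f_TC_1 t}$, so both termination conditions hold and Phase~1 must have ended---this gives Part~1. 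For Part~2 I argue by contradiction: if $\hatx\neq x^*$, the termination condition for suboptimal actions applied at $x^*$ forces $\sum y_s-\sum\hatell_{s,x^*}\leq -25\sqrt{f_TC_1 t_0}$, yet ingredient~(iii) lower-bounds the same quantity by $-O(\sqrt{C_1 t_0}/C_2+C+\sqrt{t_0\log(t_0/\delta)})$. With $C_2\geq 20$, $t_0\geq L$, and the hypothesis $C\leq \tfrac{1}{30}\sqrt{f_T C_1 L}\leq \tfrac{1}{30}\sqrt{f_T C_1 t_0}$, the lower bound strictly exceeds the upper bound---a contradiction.

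\textbf{Parts 3 and 4.} Conditioned on $\hatx=x^*$, the second termination condition at any $x\neq x^*$ gives $\sum\hatell_{s,x}-\sum y_s\geq 25\sqrt{f_TC_1 t_0}$. Expanding with (i)--(iii) and absorbing $C$ via the hypothesis isolates $(1-1/C_2)t_0\Delta_x\geq 25\sqrt{f_TC_1 t_0}-O(\sqrt{C_1 t_0}/C_2+\sqrt{t_0\log(t_0/\delta)})$, which after routine arithmetic forces $t_0\geq 64f_TC_1/\Delta_{\min}^2$ (Part~3). For Part~4 I expand
\[t_0\hatDelta_x-t_0\Delta_x=\sum_{s=1}^{t_0}\langle x-x^*,c_s\rangle-\sum_{s=1}^{t_0}(\ell_{s,x}-\hatell_{s,x})+\sum_{s=1}^{t_0}(\ell_{s,x^*}-\hatell_{s,x^*}),\]
bound each summand via ingredient~(ii) or (iii), divide by $t_0$, and substitute $\sqrt{C_1/t_0}\leq \Delta_x/(8\sqrt{f_T})$ (from Part~3) and $C/t_0\leq \Delta_x/240$ (from the hypothesis, since $t_0\geq L$). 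Each piece is well under $0.1\Delta_x$, comfortably placing $\hatDelta_x$ inside $[0.7\Delta_x,1.3\Delta_x]$.

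\textbf{Main obstacle.} The subtle technical point is the self-reference in ingredient~(iii): the deviation bound at a suboptimal $x$ carries a $t\Delta_x/C_2$ term that propagates through every manipulation and must be absorbed by the matching $(1-1/C_2)$ factor on the leading $t\Delta_x$. The numerical constants---the $5$ and $25$ in the termination rule, the $\tfrac{1}{30}$ in the corruption hypothesis, and the $64$ in the lower bound on $t_0$---are calibrated precisely so that the final perturbation of $\hatDelta_x/\Delta_x$ lands inside $[0.7,1.3]$. The Azuma term $\sqrt{t\log(t/\delta)}$ is automatically absorbed into $\sqrt{f_TC_1 t}$ since $C_1\geq \optconst\,d\log(T|\calX|/\delta)$ by \pref{assum:  adversarial alg}.
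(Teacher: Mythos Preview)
Your proposal is correct and follows essentially the same approach as the paper: both assemble the same three ingredients---Azuma on $\sum(y_s-\ell_{s,x_s})$, the corruption bound, and the deviation bound $|\sum(\ell_{s,x}-\hatell_{s,x})|\leq \tfrac{1}{C_2}(\sqrt{C_1t}+t\Delta_x+2C)$ extracted from \pref{assum: adversarial alg}---and then verify or contradict the termination conditions \pref{eqn: alg 7 jump condition 1}--\pref{eqn: alg 7 jump condition 2} via the same decompositions. The only cosmetic difference is that for Claims~2 and~3 the paper works with the combined gap $\sum(\hatell_{s,x}-\hatell_{s,\hatx})\geq 20\sqrt{f_TC_1t_0}$ rather than condition~\pref{eqn: alg 7 jump condition 2} alone as you do, but both routes give the needed contradictions with room to spare (note also your factor $(1-1/C_2)$ in Part~3 should read $(1+1/C_2)$, which only helps).
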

\begin{proof}
    In the corrupted setting, we can identify $\ell + c_t$ as $\ell_{t}$ in the adversarial setting.  
    We first show the following property: at any $t$ in Phase $1$ and with probability at least $1-\delta$, for any $x$,
    \begin{align}
        C_2\left|\sum_{s=1}^t (\ell_{s,x}-\hatell_{s,x})\right| \leq \sqrt{C_1t} + t\Delta_x + 2C. \label{eqn: bound dev}
    \end{align}
    By the guarantee of $\calA$, we have with probability at least $1-\delta$, for any $x$ and $t\in [T]$
    \begin{align}
        C_2\left|\sum_{s=1}^t (\ell_{s,x}-\hatell_{s,x})\right| \leq \sqrt{C_1t} + \sum_{s=1}^t \ell_{s,x} - \sum_{s=1}^t \ell_{s,x_s}.   \label{eqn: useful 7}
    \end{align}
    Since $\ell_{t,x_t}\geq \ell_{t,x^*}-\max_{x\in \calX}|c_{t,x}|$, we have for any $t$,
    \begin{align}
        \sum_{s=1}^t \ell_{s,x_s} \geq \sum_{s=1}^t \ell_{s,x^*} - C.  \label{eqn: useful2}
    \end{align}
    Combining \pref{eqn: useful 7} and \pref{eqn: useful2}, and using $\ell_{s,x}-\ell_{s,x^*}\leq \Delta_x + \max_{x'\in \calX}|c_{s,x'}|$ for any $x\in \calX$, we get
    \pref{eqn: bound dev}. 
    
    Below, we define $\dev_{t,x}\triangleq\left|\sum_{s=1}^t (\ell_{s,x}-\hatell_{s,x}) \right|$. 
    
    \paragraph{Claim 1's proof:}
    Let $t= \max\left\{\frac{900f_TC_1}{\Delta_{\min}^2}, \frac{900C^2}{f_TC_1}, L\right\}$. Below we prove that if Phase 1 has not finished before time $t$, then for the choice of $\hatx=x^*$, both \pref{eqn: alg 7 jump condition 1} and \pref{eqn: alg 7 jump condition 2} hold with high probability at time $t$. 
    
    Consider \pref{eqn:  alg 7 jump condition 1}. With probability at least $1-2\delta$, 
    \begin{align*}
        \sum_{s=1}^t y_s 
        &\geq \sum_{s=1}^t  \ell_{s,x_s} - \sqrt{C_1t} \tag{by Azuma's inequality} \\
        &\geq
        \sum_{s=1}^t \ell_{s,x^*} - \sqrt{C_1t} - C \tag{by \pref{eqn: useful2}} \\
        &\geq \sum_{s=1}^t \hatell_{s,x^*} - 2\sqrt{C_1t} - 3C \tag{by \pref{eqn: bound dev} and $\Delta_{x^*}=0$} \\ &\geq  \sum_{s=1}^t \hatell_{s,x^*} - 3\sqrt{f_TC_1t}, \tag{$t\ge \frac{900C^2}{f_TC_1}$ and $\sqrt{f_TC_1t} \geq 30C$}
    \end{align*}
    
    showing that \pref{eqn: alg 7 jump condition 1} holds for $\hatx=x^*$. 
    
    For \pref{eqn: alg 7 jump condition 2}, by the regret bound of $\calA$, with probability at least $1-2\delta$, for $x\neq x^*$,  
    \begin{align*}
        &\sum_{s=1}^t y_s - \sum_{s=1}^t \hatell_{s,x}\\ 
        &= \sum_{s=1}^t (y_s - \ell_{s,x_s}) +  \sum_{s=1}^t (\ell_{s,x_s} - \ell_{s,x^*}) + \sum_{s=1}^t (\ell_{s,x^*} - \ell_{s,x}) + \sum_{s=1}^t (\ell_{s,x}-\hatell_{s,x})  \\
        &\leq \sqrt{C_1t} + \left(\sqrt{C_1t} - C_2\dev_{t,x^*}\right) + \left( -t\Delta_x + C\right) + \dev_{t,x}  \tag{by the regret bound of $\calA$ and Azuma's inequality}\\
        &\leq \left(2 + \frac{1}{30}\right)\sqrt{f_TC_1t} -t\Delta_x + \frac{1}{C_2}\left(\sqrt{C_1t} + t\Delta_x + 2C\right)  \tag{by \pref{eqn: bound dev} and that $30C\leq \sqrt{f_TC_1t}$}\\
        &\leq -0.95 t\Delta_x + 2.1\sqrt{f_TC_1t}.   \tag{$C_2\geq 20$}
    \end{align*} 
    By the condition of $t$, we have $t\Delta_x \geq 30\sqrt{f_TC_1t}$ for all $x\neq x^*$. 
    Thus, the last expression can further be upper bounded by $(-30\times 0.95+2.1)\sqrt{f_TC_1t}\leq -25\sqrt{f_TC_1t}$, indicating that \pref{eqn: alg 7 jump condition 2} also holds for all $x \neq x^*$. Combining the two parts above finishes the proof.

    \paragraph{Claim 2's proof:} Note that \pref{eqn: alg 7 jump condition 1} and \pref{eqn: alg 7 jump condition 2} jointly imply that
    \begin{align}
        \sum_{s=1}^{t_0} (\hatell_{s,x} - \hatell_{s,\hatx}) \geq 20\sqrt{f_TC_1t_0}\qquad  \forall x\neq \hatx.  \label{eqn: useful contradiction}
    \end{align}
    However, with probability at least $1-\delta$, for any $x\neq x^*$,  
    \begin{align*} 
        \sum_{s=1}^{t_0} (\hatell_{s,x^*} - \hatell_{s,x}) 
        &= \sum_{s=1}^{t_0} (\hatell_{s,x^*} - \ell_{s,x^*}) + \sum_{s=1}^{t_0} (\ell_{s,x^*} - \ell_{s,x}) + \sum_{s=1}^{t_0} (\ell_{s,x} - \hatell_{s,x}) \\
        &\leq \dev_{t_0,x^*} +\left(- t_0\Delta_x + C\right) + \dev_{t_0,x} \\
        &\leq \frac{1}{C_2}\left(\sqrt{C_1t_0} + 2C\right) + \left(- t_0\Delta_x +  C\right) + \frac{1}{C_2}\left(\sqrt{C_1t_0} + t_0\Delta_x + 2C\right)   \tag{by \pref{eqn: bound dev}} \\
        &\leq 5\sqrt{f_TC_1t_0}.     \tag{using $C\leq \frac{1}{30}\sqrt{f_TC_1t_0}$ and $C_2\geq 20$}
    \end{align*}
    Therefore, to make \pref{eqn: useful contradiction} hold, it must be that $\hatx=x^*$. 
    
    \paragraph{Claim 3's proof:} Suppose that $t_0\leq \frac{64f_TC_1}{\Delta_{\min}^2}$, and let $x$ be such that $\Delta_x=\Delta_{\min}$. Then we have
    \begin{align*}
        \sum_{s=1}^{t_0} \left(\hatell_{s,x} - \hatell_{s,x^*}\right) 
        &\leq \sum_{s=1}^{t_0} \left(\ell_{s,x} - \ell_{s,x^*}\right) + \dev_{t_0,
        x} + \dev_{t_0,x^*} \tag{hold  w.p. $1-\delta$}\\
        &\leq \left(t_0\Delta_{\min} + C\right) +  \frac{1}{C_2}\left(2\sqrt{C_1t_0} + t_0\Delta_{\min} + 4C\right)   \tag{hold  w.p. $1-\delta$ by \pref{eqn: bound dev}}\\
        &\leq 2t_0\Delta_{\min} + 2\sqrt{f_TC_1t_0}   \tag{by $C\leq \frac{1}{30}\sqrt{f_TC_1t_0}$ and $C_2=20$}\\
        &\leq 16\sqrt{f_TC_1t_0} + 2\sqrt{f_TC_1t_0} \tag{$t_0\leq \frac{64f_TC_1}{\Delta_{\min}^2}$}\\
        &= 18\sqrt{f_TC_1t_0}.
    \end{align*}
    Recall that \pref{eqn: useful contradiction} needs to hold, and recall from Claim 2 that $\hatx=x^*$ holds with probability $1-\delta$. Thus, the bound above is a contradiction. Therefore, with probability $1-2\delta$, $t_0\geq \frac{64f_TC_1}{\Delta_{\min}^2}$.
    
    \paragraph{Claim 4's proof. }
    For notational convenience, denote the set $[a-b,a+b]$ by $[a\pm b]$.
    We have
    \begin{align*}
        t_0\hatDelta_{x} 
        &= \sum_{s=1}^{t_0} \left(\hatell_{s,x} - \hatell_{s,x^*}\right) \in \left[\sum_{s=1}^{t_0} (\ell_{s,x}-\ell_{s,x^*}) \pm (\dev_{t_0,x} + \dev_{t_0,x^*})\right] \\
        &\subseteq \left[t_0\Delta_x \pm \left(C + \frac{1}{C_2}\left(2\sqrt{C_1t_0} + t_0\Delta_x + 4C\right)\right)\right] \tag{hold w.p. $1-\delta$ by \pref{eqn: bound dev}} \\
        &\subseteq \left[t_0\Delta_x \pm \left(\frac{1}{C_2}t_0\Delta_x + \sqrt{f_TC_1t_0}\right)\right]    \tag{using $C\leq \frac{1}{30}\sqrt{f_TC_1t_0}$ and $C_2\geq 20$}\\
        &\subseteq \left[t_0\Delta_x \pm \left(\frac{1}{C_2}t_0\Delta_x + \frac{1}{8}t_0\Delta_x\right)\right]  \tag{by Claim 3, $t_0\geq \frac{64f_TC_1}{\Delta_{\min}^2}$ holds w.p. $1-2\delta$} \\
        &\subseteq \left[\left(1\pm 0.3\right)t\Delta_x\right], 
    \end{align*}
    which finishes the proof.
    \end{proof}
    
    The next lemma shows that when $L$ grows large enough compared to the total corruption $C$, the termination condition \pref{eqn: alg_condition_2} will never be satisfied once the algorithm enters Phase $2$.
    
\begin{lemma}\label{lem: phase-2-cond-2-v2}
  \pref{alg: BOBW} guarantees that with probability at least $1-10\delta$, for any $t$ in Phase $2$, when $0\leq C\leq \frac{1}{30}\sqrt{f_TC_1L}$, we have
    \begin{align*}
        \sum_{s=t_0+1}^t\left(y_s-\ellhat_{s,\hatx}\right) &\leq 20\sqrt{\fT C_1t_0}. \\
    \end{align*}
    Furthermore, when $t\geq M'=10\htt M$ ($M$ is the constant defined in \pref{lem: finite-N-star}), 
    we have
    \begin{align*}
         \sum_{s=t_0+1}^t\left(y_s-\ellhat_{s,\hatx}\right) &=\order\left(c(\calX, \theta)\log T\log(T|\calX|/\delta) + \sqrt{f_TC_1t_0}+d\gtt_{M'}\sqrt{M'}\right).
    \end{align*}
\end{lemma}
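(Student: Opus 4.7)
The plan is to decompose $\sum_{s=t_0+1}^t(y_s-\hatell_{s,\hatx})$ into a conditional-mean ``pseudo-regret'' against $\hatx$ plus a martingale remainder, and control each piece on the high-probability events established in \pref{lem: phase-1-v1} (under which $\hatx=x^*$, $\hatDelta_x\in[0.7\Delta_x,1.3\Delta_x]$ for all $x$, and $t_0\ge 64f_TC_1/\Delta_{\min}^2\ge L$). Writing $\E_s$ for conditional expectation given $\mathcal{F}_{s-1}$, one has $\E_s[\hatell_{s,\hatx}]=\ell_{s,\hatx}$ and $\E_s[y_s]=\sum_x\tildep_{s,x}\ell_{s,x}$; using $\hatx=x^*$ the conditional mean equals $\tfrac12\sum_xp_{s,x}\Delta_x+\sum_x\tildep_{s,x}(c_{s,x}-c_{s,\hatx})$, which is at most $\tfrac12\sum_xp_{s,x}\Delta_x+2\max_x|c_{s,x}|$.

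For Part 1, I would bound the pseudo-regret by invoking \pref{lem: close gap lemma looser} with $r=4$: thanks to the lower bound on $t_0$ and the assumption $C_1\ge\optconst d\log(T|\calX|/\delta)$, the hypothesis $s\ge 16rd\htt_s/\Delta_{\min}^2$ holds for every $s\ge t_0+1$, giving $\sum_xp_{s,x}\Delta_x\le \order(d\htt_s/(s\Delta_{\min}))$ and, after summation, $\order(f_TC_1/\Delta_{\min})=\order(\sqrt{f_TC_1 t_0})$. The corruption piece $2\sum_s\max_x|c_{s,x}|\le 2C$ is $\order(\sqrt{f_TC_1 L})=\order(\sqrt{f_TC_1 t_0})$ by the hypothesis on $C$ and $t_0\ge L$. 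For the martingale, I would split $X_s=\epsilon_s(x_s)+(\ell_{s,x_s}-\E_s[\ell_{s,x_s}])+(\hatell_{s,\hatx}-\ell_{s,\hatx})$: the first has $\order(1)$ per-round variance; the second has variance $\le\sum_x\tildep_{s,x}(\ell_{s,x}-\ell_{s,\hatx})^2=\order(\sum_xp_{s,x}\Delta_x+\max_x|c_{s,x}|)$ by boundedness; the third has variance $\le 2(1-\tildep_{s,\hatx})$ exactly as in \pref{eqn: reuse calculation 1}. The crucial reduction is $1-\tildep_{s,\hatx}=\tfrac12(1-p_{s,\hatx})\le\tfrac12\sum_{x\ne\hatx}p_{s,x}\le\tfrac{1}{2\Delta_{\min}}\sum_xp_{s,x}\Delta_x$, so the cumulative variance is dominated by (pseudo-regret)/$\Delta_{\min}=\order(f_TC_1/\Delta_{\min}^2)=\order(t_0)$; Freedman's inequality then yields $|\sum_s X_s|=\order(\sqrt{t_0\log(T/\delta)})=\order(\sqrt{f_TC_1 t_0})$. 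Tuning constants gives the claimed bound $20\sqrt{f_TC_1t_0}$.

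For Part 2, the martingale and corruption contributions are unchanged at $\order(\sqrt{f_TC_1 t_0})$, but the pseudo-regret is sharpened by splitting at $M'$. On $[t_0+1,M']$ I apply \pref{lem: optimization-property-main} to get $\sum_xp_{s,x}\hatDelta_x=\order(d\htt_s/\sqrt s)$ and then use $\Delta_x\le(1/0.7)\hatDelta_x$ (valid for all $x$, trivially so at $x^*$), summing to $\order(d\gtt_{M'}\sqrt{M'})$. On $[M'+1,t]$ the threshold $M'=10\htt M$ validates the hypothesis $s\ge r\htt_s M$ of \pref{lem: close gap lemma} (with $r$ chosen so that $0.7\ge 1/\sqrt r$ and $1.3\le\sqrt r$, e.g.\ $r=3$), giving $\sum_xp_{s,x}\Delta_x\le\order(c(\calX,\theta)\htt_s/s)$, which sums to $\order(c(\calX,\theta)\log T\log(T|\calX|/\delta))$. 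The main obstacle in the entire argument is controlling the variance of the inverse-propensity estimator for $\hatx$: a priori $1-\tildep_{s,\hatx}$ can be of order one, and only by trading it against $\sum_xp_{s,x}\Delta_x/\Delta_{\min}$ do we recover the $\sqrt{f_TC_1 t_0}$ scale instead of an unacceptable $\sqrt t$ term; keeping this chain of bounds consistent with the pseudo-regret estimate is the core bookkeeping challenge.
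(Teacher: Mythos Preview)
Your overall plan---split the sum into a conditional mean (pseudo-regret against $\hatx$) plus a martingale, then control the pseudo-regret via \pref{lem: close gap lemma looser} (and, for Part~2, via \pref{lem: optimization-property-main} and \pref{lem: close gap lemma})---matches the paper's argument, as does your reduction $1-\tildep_{s,\hatx}\le \tfrac{1}{2\Delta_{\min}}\sum_x p_{s,x}\Delta_x$.

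There is, however, a genuine gap in the martingale step. You split $X_s=\epsilon_s(x_s)+(\ell_{s,x_s}-\E_s[\ell_{s,x_s}])-(\hatell_{s,\hatx}-\ell_{s,\hatx})$ and say ``the first has $\order(1)$ per-round variance.'' That already breaks the argument: a standalone $\epsilon_s(x_s)$ contributes $\order(t-t_0)$ to the cumulative predictable variance, so Freedman only gives $\order(\sqrt{(t-t_0)\log(T/\delta)})$, which is \emph{not} $\order(\sqrt{f_TC_1t_0})$ for large $t$. Your claim that ``the cumulative variance is dominated by $\order(t_0)$'' is therefore unjustified for the first piece. Relatedly, your citation of \pref{eqn: reuse calculation 1} for the third piece is off: that bound applies to $\hatell_{s,\hatx}-\ell_{s,\hatx}-\epsilon_s(\hatx)$ (with the noise subtracted), not to $\hatell_{s,\hatx}-\ell_{s,\hatx}$, which has variance of order one, not $\order(1-\tildep_{s,\hatx})$.

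The missing idea is that the noise in $y_s$ and the noise in $\hatell_{s,\hatx}$ must be kept together so they cancel on the high-probability event $x_s=\hatx$. Concretely, when $x_s=\hatx$ one has $y_s-\hatell_{s,\hatx}=y_s(1-1/\tildep_{s,\hatx})$, and when $x_s\neq\hatx$ one has $y_s-\hatell_{s,\hatx}=y_s$; together these give $\E_s[(y_s-\hatell_{s,\hatx})^2]\le \order(1-\tildep_{s,\hatx})$. The paper realises this by decomposing $y_s-\hatell_{s,\hatx}$ into four terms, pairing the noise as $\epsilon_s(x_s)-\epsilon_s(\hatx)$ (which vanishes when $x_s=\hatx$, variance $\order(1-\tildep_{s,\hatx})$) and $\epsilon_s(\hatx)(1-\one\{x_s=\hatx\}/\tildep_{s,\hatx})$ (also variance $\order(1-\tildep_{s,\hatx})$). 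Either keep $X_s$ undivided and bound its variance directly, or adopt this pairing; once that is fixed, the rest of your sketch goes through.
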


\begin{proof}
Recall that $y_s = \ell_{s,x_s} + \epsilon_s(x_s)$ and $\hatell_{s,\hatx}=\frac{\ell_{s,\hatx} + \epsilon_s(\hatx)}{\tildep_{s,\hatx}}\one\{x_s=\hatx\}$.  Thus, 
    \begin{align}
        &\sum_{s=t_0+1}^t\left(y_s-\ellhat_{s,\hatx}\right) \nonumber \\
        &= \underbrace{\sum_{s=t_0+1}^t  \left(\ell_{s,x_s} - \ell_{s, \hatx} \right)}_{\term{1}}  + \underbrace{\sum_{s=t_0+1}^t\left(\ell_{s,\hatx}-\frac{\ell_{s,\hatx}}{\widetilde{p}_{s,\hatx}}\one\{x_s=\hatx\}\right)}_{\term{2}} + \underbrace{\sum_{s=t_0+1}^t\left(\epsilon_s(x_s)-\epsilon_s(\hatx)\right)}_{\textsc{Term 3}} + \underbrace{\sum_{s=t_0+1}^t\left(\epsilon_s(\hatx)-\frac{\epsilon_s(\hatx)}{\widetilde{p}_{s,\hatx}}\one\{x_s=\hatx\}\right)}_{\textsc{Term 4}} \nonumber\\
    \end{align}
    Except for \term{1}, all terms are martingale difference sequences. Let $\E_0$ be the expectation taken over the randomness before Phase 2. Similar to the calculation in \pref{eqn: reuse calculation 1} and \pref{eqn: reuse calculation 2}, we have
    \begin{align*}
        \E_s\left[\left(\ell_{s,\hatx}-\frac{\ell_{s,\hatx}}{\widetilde{p}_{s,\hatx}}\one\{x_s=\hatx\}\right)^2\right]\leq 2\E_0[1-\tildep_{s,\hatx}], \quad \qquad 
        \E_s\left[\left(\epsilon_s(\hatx)-\frac{\epsilon_s(\hatx)}{\widetilde{p}_{s,\hatx}}\one\{x_s=\hatx\}\right)^2\right]\leq 8\E_0[1-\tildep_{s,\hatx}]
    \end{align*}
    and
    \begin{align*}
        \E_s\left[\left(\epsilon_s(x_s)-\epsilon_s(\hatx)\right)^2\right]\leq 16\E_0\left[1-\tildep_{s,\hatx}\right].
    \end{align*}
    By Freedman's inequality, we have with probability at least $1-3\delta$, for all $t$ in Phase $2$,
    \begin{align*}
        &\term{2}+\term{3}+\term{4}\\
        &\leq 2\sqrt{2\sum_{s=s_0+1}^t\E_0[1-\widetilde{p}_{s,\hatx}]\log(T/\delta)} + \log(T/\delta) + 2\sqrt{16\sum_{s=s_0+1}^t\E_0[1-\widetilde{p}_{s,\hatx}]\log(T/\delta)} + 4\log(T/\delta) \\
        &\qquad + 2\sqrt{8\sum_{s=s_0+1}^t\E_0[1-\widetilde{p}_{s,\hatx}]\log(T/\delta)} + 2\log(T/\delta) \\
        &\leq 20\sqrt{\sum_{s=s_0+1}^t\E_0[1-\widetilde{p}_{s,\hatx}]\log(T/\delta)} + 7\log(T/\delta).
    \end{align*}

    Then we deal with \term{1}. Again, by Freeman's inequality with probability at least $1-\delta$, for all $t$ in Phase 2,   
    \begin{align*}
        \sum_{s=t_0+1}^t (\ell_{s,x_s} - \ell_{s,\hatx}) 
        &\leq \sum_{s=t_0+1}^t \sum_{x\neq \hatx}\tildep_{s,x}(\ell_{s,x} - \ell_{s,\hatx}) + 4\sqrt{\sum_{s=t_0+1}^t\E_0 [1-\tildep_{s,\hatx}]\log(T/\delta)} + 2\log(T/\delta) \\
        & \leq C+\sum_{s=t_0+1}^t \sum_{x\neq \hatx}\tildep_{s,x}(\Delta_{x} - \Delta_{\hatx}) + 4\sqrt{\sum_{s=t_0+1}^t \E_0[1-\tildep_{s,\hatx}]\log(T/\delta)} + 2\log(T/\delta)\\
        &\leq C+\frac{1}{2}\sum_{s=t_0+1}^t \sum_{x\neq \hatx}p_{s,x}\Delta_{x} + 4\sqrt{\sum_{s=t_0+1}^t \E_0[1-\tildep_{s,\hatx}]\log(T/\delta)} + 2\log(T/\delta).\tag{$\widetilde{p}_{s,x}=\frac{1}{2}p_{s,x}$ for $x\ne \hatx$}
    \end{align*}
    
    When $C\in [0, \frac{1}{30}\sqrt{f_TC_1L}]\subseteq [0, \frac{1}{30}\sqrt{f_TC_1t_0}]$, according to \pref{lem: phase-1-v1}, we know that with probability $1-4\delta$, $\hatx=x^*$ and $\hatDelta_x\in [0.7\Delta_x, 1.3\Delta_x]$.

    Also by \pref{lem: phase-1-v1}, with probability $1-2\delta$, for any $s\geq t_0$, we have $s\geq t_0\geq \frac{64f_TC_1}{\Delta_{\min}^2}\geq \frac{48d\gtt_s}{\Delta_{\min}^2}$. These conditions satisfy the requirement in \pref{lem: close gap lemma looser} with $r=3$. Therefore we can apply \pref{lem: close gap lemma looser} and get 
    \begin{align}
        \sum_{x\in\calX} p_{s,x}\Delta_x \leq \frac{72d\gtt_s}{\Delta_{\min} s}\label{eqn: p2-regret}
    \end{align}
    for all $s\geq t_0$. Combining all the above, we get 
    \begin{align*}
        \sum_{s=t_0+1}^t\left(y_s - \hatell_{s,\hatx}\right) 
        = C + \frac{72d\htt\log t}{\Delta_{\min}}+ 24\sqrt{\sum_{s=t_0+1}^t \E_0[1-\tildep_{s,\hatx}]\log(T/\delta)} + 9\log(T/\delta).
    \end{align*}
    As argued in \pref{eqn: 1-p bound}, $1-\tildep_{s,\hatx}\leq \frac{12d\gtt_s}{\hatDelta_{\min}^2s}$. Therefore, the above can be further upper bounded by 
    \begin{align*}
        \sum_{s=t_0+1}^t\left(y_s - \hatell_{s,\hatx}\right)
        &\leq C + \frac{96d\htt\log t}{\hatDelta_{\min}} + 24\sqrt{\sum_{s=t_0+1}^t\frac{12d\gtt_s}{\hatDelta_{\min}^2s}\log (T/\delta)} \tag{$\hatDelta_x\in [0.7\Delta_x, 1.3\Delta_x]$, $1-\widetilde{p}_{s,\hatx}\leq \frac{12d\gtt_s}{\hatDelta_{\min}^2s}$} \\
        &\leq C + \frac{96d\htt\log t}{\hatDelta_{\min}} + \frac{144d\beta_T\log T}{\hatDelta_{\min}} \tag{by definition of $\htT$}\\
        &\leq C + 10\sqrt{\frac{t_0}{f_TC_1}}d\htT \log T \tag{$t_0\geq \frac{64\fT C_1}{\Delta_{\min}^2} \geq \frac{24\fT C_1}{\hatDelta_{\min}^2}$}\\
        &\leq \frac{1}{30}\sqrt{\fT C_1t_0} + 10\sqrt{\frac{t_0}{f_TC_1}}d\htT \log T \tag{$C\leq \frac{1}{30}\sqrt{f_T C_1t_0}$}\\
        &\leq 20\sqrt{f_TC_1t_0}. \tag{$C_1\geq d\htT$}
    \end{align*}
    Below, we use an alternative way to bound $\sum_{x\in \calX}p_{s,x}\Delta_x$. 
    Let $M'\geq 20\gtt_{M} M$, which implies $M'\geq 10\gtt_{M'} M$. For $s\in[t_0+1, M']$, we use \pref{lem: optimization-property-main}, and bound 
    \begin{align}\label{eqn: sto-p2}
        \sum_{s=t_0+1}^{M'}\sum_{x\in\calX} p_{s,x}\Delta_x \leq \frac{1}{0.7}\sum_{s=t_0+1}^{M'}\sum_{x\in\calX} p_{s,x}\hatDelta_x \leq \frac{1}{0.7}\sum_{s=t_0+1}^{M'}\frac{d\gtt_s}{\sqrt{s}} \leq \order\left(d\gtt_{M'}\sqrt{M'}\right).
    \end{align}
    For $s>M'$, we use \pref{lem: close gap lemma} and bound
    \begin{align}\label{eqn: sto-p3}
        \sum_{s=M'+1}^{t}\sum_{x\in\calX}p_{s,x}\Delta_x \leq \sum_{s=M'+1}^t \order\left(\frac{\gtt_s}{s}c(\calX,\theta)\right) = \order\left(c(\calX,\theta)\gtt_t\log t\right). 
    \end{align}
    
    Combining \pref{eqn: sto-p2} and \pref{eqn: sto-p3} and following a similar analysis in the previous case, we have
    \begin{align*}
        \sum_{s=t_0+1}^t\left(\ell_{s,x_s}-\ell_{s,\hatx}\right) &\leq C + \order\left(d\gtt_{M'}\sqrt{M'}+c(\calX;\theta)\htt\log t\right) + 4\sqrt{\sum_{s=t_0+1}^t\E_0[1-\tildep_{s,\hatx}]\log(T/\delta)}+2\log(T/\delta) \\
        &\leq \order\left(c(\calX, \theta)\log T\log(T|\calX|/\delta) + \sqrt{f_TC_1t_0}+d\gtt_{M'}\sqrt{M'}\right).
    \end{align*}

\end{proof}

Now we are ready to show that once $L$ grows large enough, Phase $2$ never ends. 

\begin{lemma}\label{lem: phase-2-no-end-v2}
If $C\leq \frac{1}{30}\sqrt{f_TC_1L}$, then with probability at least $1-15\delta$, Phase 2 never ends. 
\end{lemma}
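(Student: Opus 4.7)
The plan is to show that under $C\leq \tfrac{1}{30}\sqrt{\fT C_1 L}$, neither of the two Phase~2 termination tests, \pref{eqn: alg_condition_1} and \pref{eqn: alg_condition_2}, ever fires with high probability. The second test is already handled by the first bound of \pref{lem: phase-2-cond-2-v2}, which gives $\sum_{s=t_0+1}^t (y_s-\hatell_{s,\hatx})\leq 20\sqrt{\fT C_1 t_0}$ with probability at least $1-10\delta$, so this test is never triggered. All remaining work is to verify $\hatDelta_{t,x}\in[0.39\,\hatDelta_x,\,1.81\,\hatDelta_x]$ for every $x\neq\hatx$ and every $t$ in Phase~2.

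The first step is to invoke \pref{lem: phase-1-v1}. Since Phase~1 only terminates for $t_0\geq L$, the hypothesis $C\leq \tfrac{1}{30}\sqrt{\fT C_1 L}$ already implies $C\leq \tfrac{1}{30}\sqrt{\fT C_1 t_0}$, so Claims~2--4 of that lemma give, with probability at least $1-3\delta$, the stochastic-friendly conclusions $\hatx=x^*$, $t_0\geq 64\,\fT C_1/\Delta_{\min}^2$, and $\hatDelta_x\in[0.7\,\Delta_x, 1.3\,\Delta_x]$ for all $x\neq x^*$. Subtracting the two Phase~1 termination conditions \pref{eqn:  alg 7 jump condition 1} and \pref{eqn: alg 7 jump condition 2} additionally yields the deterministic lower bound $\hatDelta_x\geq 20\sqrt{\fT C_1/t_0}$ (as in \pref{eqn: hatDelta-geq-sqrt}); this is the ``currency'' that will be used repeatedly to convert every $\sqrt{\fT C_1 t}$-type deviation into a small fraction of $t\hatDelta_x$.

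The core step is the decomposition
\[
 t\,\hatDelta_{t,x}-\sum_{s=1}^t(\ell_{s,x}-\ell_{s,\hatx}) =\sum_{s=1}^{t_0}(\hatell_{s,x}-\ell_{s,x}) +\sum_{s=t_0+1}^t(\Rob_{t,x}-\ell_{s,x}) -\sum_{s=1}^t(\hatell_{s,\hatx}-\ell_{s,\hatx}),
\]
for which I would reuse, almost verbatim, the three concentration bounds already proved inside \pref{lem: SAO-argmin-v2}: \pref{eqn: useful3} controls the Phase~1 piece via \pref{assum:  adversarial alg} and the Phase~1 termination condition; \pref{eqn: rob_l_gap} controls the Phase~2 Catoni piece via the Catoni concentration lemma together with the variance constraint \pref{eqn: opt-2-constraint}; and \pref{eqn: opt-x-est} controls the $\hatx$ piece via Azuma/Freedman using $\hatDelta_{\hatx}=0$. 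Converting each bound to a fraction of $t\hatDelta_x$ through $\hatDelta_x\geq 20\sqrt{\fT C_1/t_0}$ and summing, the aggregate deviation $|t\hatDelta_{t,x}-\sum_{s=1}^t(\ell_{s,x}-\ell_{s,\hatx})|$ becomes at most a small universal constant (below $0.3$) times $t\hatDelta_x$, and this holds with probability at least $1-5\delta$.

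The final step uses $\hatx=x^*$ and the corrupted-stochastic model $\ell_s=\theta+c_s$ to rewrite $\sum_{s=1}^t(\ell_{s,x}-\ell_{s,\hatx})=t\Delta_x+\sum_s(c_{s,x}-c_{s,x^*})\in[t\Delta_x \pm 2C]$; the budget $C\leq \tfrac{1}{30}\sqrt{\fT C_1 L}$ combined with $\hatDelta_x\geq 20\sqrt{\fT C_1/t_0}$ absorbs $2C$ into an extra $\tfrac{1}{300}\,t\hatDelta_x$. Dividing by $t\hatDelta_x$ and plugging in $\Delta_x/\hatDelta_x\in[1/1.3,\,1/0.7]$ from \pref{lem: phase-1-v1} then places $\hatDelta_{t,x}/\hatDelta_x$ strictly inside $[0.39,\,1.81]$, so \pref{eqn: alg_condition_1} never fires either, and a union bound over all failure events yields the claimed $1-15\delta$ probability. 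The main obstacle is purely numerical: the thresholds $0.39$ and $1.81$ in \pref{eqn: alg_condition_1} are tight, so the constants produced by each of the three deviation bounds and the corruption correction must be tracked carefully in order to achieve a proper containment with slack---this is exactly why the algorithm uses these particular thresholds rather than more generous ones.
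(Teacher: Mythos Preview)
Your proposal is correct and follows essentially the same approach as the paper: handle \pref{eqn: alg_condition_2} via \pref{lem: phase-2-cond-2-v2}, then for \pref{eqn: alg_condition_1} combine the deviation bounds \pref{eqn: sub-opt-x-est} and \pref{eqn: opt-x-est} (which subsume \pref{eqn: useful3}, \pref{eqn: l_hatl_gap}, \pref{eqn: rob_l_gap}) together with the corruption correction and the multiplicative relation $\hatDelta_x\in[0.7\Delta_x,1.3\Delta_x]$ from \pref{lem: phase-1-v1}. The paper's numerics land at an aggregate deviation of $0.372\,t\hatDelta_x$ (namely $0.17+0.15$ from the two deviation pieces plus $0.052$ from $C$), which is exactly what the thresholds $0.39=1/1.3-0.372$ and $1.81\approx 1/0.7+0.372$ are calibrated to; your ``below $0.3$'' is a slightly tighter constant than the paper bothers to track, but the argument is identical.
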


\begin{proof}
It suffices to verify the two termination conditions \pref{eqn: alg_condition_1} and \pref{eqn: alg_condition_2} are never satisfied.  \pref{eqn: alg_condition_2} does not hold because of \pref{lem: phase-2-cond-2-v2}. Consider \pref{eqn: alg_condition_1}. Let $t$ be in Phase $2$ and $x\ne \hatx$. According to \pref{eqn: sub-opt-x-est} and \pref{eqn: opt-x-est}, we have with probability $1-5\delta$,
    \begin{align*}
        \left|\sum_{s=1}^{t_0}(\ell_{s,x}-\ellhat_{s,x})+\sum_{s=t_0+1}^t\left(\ell_{s,x}-\Rob_{t,x}\right)\right| &\leq \frac{1.7t\hatDelta_x}{10}, \\
        \left|\sum_{s=1}^{t}(\ell_{s,\hatx}-\ellhat_{s,\hatx})\right| &\leq 3\sqrt{f_TC_1t} \leq 0.15t\hatDelta_x.
    \end{align*}
    Therefore, we have
    \begin{align*}
        \left|t\hatDelta_{t,x}-t\Delta_x\right|&\leq \left|\sum_{s=1}^{t_0}(\ell_{s,x}-\ellhat_{s,x})+\sum_{s=t_0+1}^t\left(\ell_{s,x}-\Rob_{t,x}\right)\right| 
        + \left|\sum_{s=1}^{t}(\ell_{s,\hatx}-\ellhat_{s,\hatx})\right| +C \\ 
        &\leq 0.32t\hatDelta_x +C \leq 0.372t\hatDelta_x. \tag{$C\leq \frac{1}{30}\sqrt{f_TC_1t}\leq 0.052t\hatDelta_{\min}$}
    \end{align*}
    This means that
    \begin{align*}
        t\hatDelta_{t,x} \leq t\Delta_x + 0.372t\hatDelta_x \leq \frac{1}{0.7}t\hatDelta_x + 0.372 t\hatDelta_x \leq 1.81t\hatDelta_x, \\
        t\hatDelta_{t,x} \geq t\Delta_x - 0.372t\hatDelta_x \geq \frac{1}{1.3}t\hatDelta_x - 0.372 t\hatDelta_x \geq 0.39t\hatDelta_x.
    \end{align*}
    Therefore, \pref{eqn: alg_condition_1} is not satisfied.
\end{proof}

Finally, we prove the regret bound for the corrupted stochastic setting.

\begin{proof}[\textbf{Proof of \pref{thm: sto-and-corrupt}}]
        First, we consider the pure stochastic setting with $C=0$. According to \pref{lem: phase-1-v1}, we know that the algorithm has only one epoch as $C\leq \frac{1}{30}\sqrt{f_TC_1L}$ is satisfied in the first epoch. Specifically, after at most $\frac{900f_TC_1}{\Delta_{\min}^2}$ rounds in Phase $1$, the algorithm goes to Phase $2$ and never goes back to Phase $1$. Then we can directly apply the second claim in \pref{lem: phase-2-cond-2-v2} to get the regret bound in the stochastic setting. Specifically, we bound the regret in Phase $1$ by $\order\left(\sqrt{C_1L_0}+\sqrt{C_1\cdot \frac{900\fT C_1}{\Delta_{\min}^2}}\right) = \order\left(\sqrt{C_1L_0}+\frac{C_1\sqrt{\log T}}{\Delta_{\min}}\right)$. For the regret in Phase $2$, according to the second claim in \pref{lem: phase-2-cond-2-v2}, we bound the regret by $\order\left(c(\calX;\theta)\log T\log\frac{T|\calX|}{\delta}+d\gtt_{M'}\sqrt{M'}\right) = \order\left(c(\calX;\theta)\log T\log\frac{T|\calX|}{\delta}+M^*\log^{\frac{3}{2}}\frac{1}{\delta}\right)$, where $M^*$ is the same as the one in \pref{eqn: M-star}. Combining them together proves the first claim.

        Now we consider the corrupted stochastic setting with $C>0$. Suppose that we are in the epoch with $L=L^*$, which is the first epoch such that $L^*\geq\max\left\{\frac{900f_TC_1}{\Delta_{\min}^2}, \frac{900C^2}{f_TC_1}\right\}$. Therefore, in previous epochs, we have $L\leq\max\left\{\frac{900f_TC_1}{\Delta_{\min}^2}, \frac{900C^2}{f_TC_1}\right\}$. According to \pref{lem: phase-1-v1}, we have $t_0\leq\max\left\{\frac{900f_TC_1}{\Delta_{\min}^2}, \frac{900C^2}{f_TC_1}\right\}$ in the previous epoch and $L^*=2t_0$.
        
        We bound the regret before this epoch, as well as the regret in the first phase of this epoch by the adversarial regret bound: 
        \begin{align*}
        \order\left(\sqrt{C_1 f_T L^*} + \sqrt{C_1L_0}\right)
        = \order\left(\sqrt{C_1L_0}+\sqrt{C_1f_T} \times \left(\frac{\sqrt{f_TC_1}}{\Delta_{\min}} + \frac{C}{\sqrt{f_TC_1}}\right)\right) = \order\left(\sqrt{C_1L_0}+\frac{f_TC_1}{\Delta_{\min}} +C\right). 
        \end{align*}
        If we use \ghp as the adversarial linear bandit algorithm and $f_T=\log T$, then the above is upper bounded by 
        \begin{align*}
            \order\left(\frac{d\log T\log(T|\calX|/\delta)}{\Delta_{\min}}+C\right). 
        \end{align*}

        For Phase $2$ of the epoch with $L=L^*$, according to \pref{lem: phase-1-v1}, we know that this phase will never end and by definition of $L^*$, we have $C\leq\frac{1}{30}\sqrt{f_TC_1L}$. Note that in this phase $\hatDelta_x\in [0.7\Delta_{x}, 1.3\Delta_x]$. 
        
        Therefore, by taking a summation over $t$ on \pref{eqn: p2-regret}, we bound the regret in this interval by
        \begin{align*}
            \order\left(\frac{d\gtt_T \log T}{\Delta_{\min}}\right) = \order\left(\frac{d\log T\log(T|\calX|/\delta)}{\Delta_{\min}}\right). 
        \end{align*}
        Combining the regret bounds finishes the proof of the second claim.
    \end{proof}

\newcommand{\PP}{\mathbb{P}}
\newcommand{\hatN}{\widehat{N}}
\newcommand{\KL}{\text{\rm KL}}
\newcommand{\reg}{\text{reg}}
\section{Lower Bound}\label{app:lower_bound}
In this section, we prove that the $\log^2 T$ factor in our bound for the stochastic setting is unavoidable if the same algorithm also achieves sublinear regret with high probability in the adversarial case.
The full statement is in \pref{thm: lower bound}, and we first present some definitions and related lemmas.
We fix a stochastic linear bandit instance (i.e., we fix the parameter $\theta$ and the action set $\calX$). Assume that $\|x\|\leq 1$  for all $x\in\calX$ and $\|\theta\|\leq \frac{1}{4}$. 
We call this instance the \emph{first environment}. The observation $y_t$ is generated according to the following Bernoulli distribution\footnote{ In the Bernoulli noise case, we are only looking at a subclass of problems (i.e., those with $\|\theta\|\leq\frac{1}{4}$ and $\|x\|\leq 1$). For problems that are outside this class, the Bernoulli noise case might be much easier than the Gaussian noise case.}: 
\begin{align*}
    y_t = \begin{cases}
        1 &\text{\ with probability\ } \frac{1}{2}+\frac{1}{2}\inner{x_t,\theta}, \\
        -1 & \text{\ with probability\ } \frac{1}{2}-\frac{1}{2}\inner{x_t,\theta}.
    \end{cases}
\end{align*}

Again, let $c(\calX, \theta)$ be the solution of the following optimization problem:
\begin{equation}\label{eqn: op-lower-bound}
    \begin{aligned}
    &\inf_{N\in[0, \infty)^\calX} \sum_{x\in \calX\backslash\{x^*\}}N_x\Delta_x \\
    s.t. \qquad & \|x\|^2_{H(N)^{-1}}\leq \frac{\Delta_x^2}{2}, \forall x\in \calX^-=\calX\backslash\{x^*\},
    \end{aligned}
\end{equation}
where $H(N)=\sum_{x\in\calX}N_x xx^\top $. We also define $\Delta_{\min}=\min_{x\neq x^*}\Delta_x$. 

For a fixed $\gamma\in(0,1)$ (which is chosen later), we divide the whole horizon into intervals of length 
\begin{align*}
    T^\gamma, \frac{4}{\Delta_{\min}}T^{\gamma}, \left(\frac{4}{\Delta_{\min}}\right)^2 T^{\gamma}, \ldots, \left(\frac{4}{\Delta_{\min}}\right)^{S-1}T^{\gamma},
\end{align*}
where $S=\Theta\left(\frac{1-\gamma}{\log\frac{4}{\Delta_{\min}}}\log T\right)$. We denote these intervals as $\calI_1, \ldots, \calI_S$. Observe that $|\calI_i|\geq \frac{3}{\Delta_{\min}}\sum_{j<i}|\calI_j|$ for all $i$. 

\begin{definition}
    Let $U=c_{\reg}S(\log T)^{1-\beta}$, $V=\frac{U}{S}=c_{\reg}(\log T)^{1-\beta}$ for some $\beta\geq 0$ and some universal constant $c_{\reg}$. 
\end{definition}

\begin{assumption}
    Let $\calA$ be a linear bandit algorithm with the following regret guarantee: there is a problem-dependent constant $T_0$ (i.e., depending on $\theta$ and $\calX$) such that for any $T\geq T_0$, 
    \begin{align*}
        \E\left[\Reg(T)\right] = \E\left[\sum_{t=1}^T \one[x_t=x]\Delta_x \right] \leq U\cdot c(\calX, \theta)
    \end{align*}
    for some $\beta\geq 0$.   
\end{assumption}

\begin{definition}
    Let $G_i\triangleq \E\left[\sum_{t\in\calI_i} x_tx_t^\top\right]$ where the expectation $\E$ is with respect to the environment of $\theta$ and the algorithm $\calA$. Let $G=\sum_{i=1}^S G_i$. 
\end{definition}

\begin{definition}
    Let $T_i(x) \triangleq \sum_{t\in \calI_i} \one[x_t=x]$. 
\end{definition}

\begin{lemma}
     Let $T\geq T_0$. There exists an action $x\neq x^*$ such that 
    \begin{align*} 
        \|x\|^2_{G^{-1}} \geq   \frac{\Delta_x^2}{2U}. 
    \end{align*}
\end{lemma}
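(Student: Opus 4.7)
The plan is to argue by contradiction. Suppose that
\[
\|x\|_{G^{-1}}^2 < \frac{\Delta_x^2}{2U}, \qquad \forall x\in\calX\setminus\{x^*\}.
\]
I will convert the occupancy matrix $G$ into a feasible point for the lower-bound program \pref{eqn: op-lower-bound} and show it strictly dominates $c(\calX,\theta)$, contradicting the regret hypothesis. Concretely, define
\[
N_x \;\triangleq\; \frac{1}{U}\,\E\!\left[\sum_{t=1}^T \one[x_t=x]\right], \qquad x\in\calX.
\]
A direct calculation gives $H(N)=\sum_x N_x xx^\top = G/U$, hence $H(N)^{-1}=U\,G^{-1}$, and the contradiction hypothesis rewrites as $\|x\|_{H(N)^{-1}}^2 < \Delta_x^2/2$ for every $x\neq x^*$. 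Thus $N$ is a \emph{strictly} feasible point for \pref{eqn: op-lower-bound}, and moreover $\sum_{x\neq x^*} N_x\Delta_x = \E[\Reg(T)]/U$ by definition.

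The next step is to shrink $N$ slightly to obtain a feasible point with strictly smaller objective. Let $N' = (1-\epsilon)N$. Then $H(N')^{-1} = \frac{1}{1-\epsilon} H(N)^{-1}$, so
\[
\|x\|_{H(N')^{-1}}^2 \;=\; \frac{\|x\|_{H(N)^{-1}}^2}{1-\epsilon}.
\]
Since $\calX$ is finite and each of the strict inequalities $\|x\|_{H(N)^{-1}}^2 < \Delta_x^2/2$ has a uniform slack, there exists $\epsilon>0$ for which $N'$ still satisfies all the constraints of \pref{eqn: op-lower-bound}. Consequently
\[
c(\calX,\theta) \;\leq\; \sum_{x\neq x^*} N'_x\Delta_x \;=\; (1-\epsilon)\sum_{x\neq x^*} N_x\Delta_x \;<\; \sum_{x\neq x^*} N_x\Delta_x \;=\; \frac{\E[\Reg(T)]}{U},
\]
provided $\sum_{x\neq x^*} N_x\Delta_x > 0$. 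Multiplying through by $U$ yields $\E[\Reg(T)] > U\cdot c(\calX,\theta)$, contradicting the assumption on $\calA$ (valid because $T\geq T_0$).

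The only step that requires care is ruling out $\sum_{x\neq x^*} N_x\Delta_x = 0$. This would force $N_x=0$ for all $x\neq x^*$ (since $\Delta_x>0$ there), so that $H(N) = N_{x^*} x^*(x^*)^\top$ is rank-one. But $\calX$ spans $\mathbb{R}^d$, so $H(N)$ cannot be invertible under this scenario and the constraint $\|x\|_{H(N)^{-1}}^2 < \Delta_x^2/2$ is vacuously violated; hence strict feasibility already prevents this degenerate case. Putting everything together, the assumption fails for at least one $x\neq x^*$, which is the statement of the lemma. The main conceptual point is the simple but crucial dictionary $H(N)=G/U$ between the empirical allocation of the algorithm and the lower-bound program; once this identification is made, the rest is a one-line scaling argument.
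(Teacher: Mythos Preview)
Your argument is correct and follows the same route as the paper: identify $N_x = \E[\sum_t \one[x_t=x]]/U$ so that $H(N)=G/U$, observe feasibility for \pref{eqn: op-lower-bound}, and contradict the regret assumption. Your $(1-\epsilon)$-shrinking step and treatment of the degenerate case are in fact more careful than the paper's own proof, which tacitly treats the non-strict inequality $c(\calX,\theta)\leq \E[\Reg(T)]/U$ as a contradiction.
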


\begin{proof}
We use contradiction to prove this lemma. Suppose that for all $x\neq x^*$ we have $\|x\|^2_{G^{-1}} \leq \frac{\Delta_x^2}{2U}$. Then observe that $\|x\|_{\overline{G}^{-1}}^2\leq \frac{\Delta_x^2}{2}$ where $\overline{G}=\frac{1}{U}\cdot G$. Therefore, 
    \begin{align*}
        \hatN_x = \E\left[\sum_{t=1}^T \frac{\one[x_t=x]}{U} \right] 
    \end{align*}
    satisfies the constraint of the optimization problem \pref{eqn: op-lower-bound}. Therefore, 
    \begin{align*}
        c(\calX, \theta)\leq \sum_{x\neq x^*} \hatN_x\Delta_x = \E\left[\sum_{t=1}^T \one[x_t=x]U^{-1}\right] = \frac{\Reg_T}{U}. 
    \end{align*}
    This contradicts with the assumption on the regret bound of $\calA$. 
\end{proof}

\begin{lemma}
    \label{lemma: global to local}
    If there exists an $x\neq x^*$ such that 
    \begin{align*}
        \|x\|^2_{G^{-1}} \geq   \frac{\Delta_x^2}{2U}, 
    \end{align*}
    then there exists $i\in[S]$ such that 
    \begin{align*}
        \|x\|^2_{G_i^{-1}} \geq \frac{\Delta_x^2}{2V}. 
    \end{align*}
\end{lemma}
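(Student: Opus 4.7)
The plan is to establish the operator-convexity style inequality
\[
\|x\|^2_{G^{-1}} \;\leq\; \frac{1}{S^2} \sum_{i=1}^S \|x\|^2_{G_i^{-1}},
\]
and then conclude by a pigeonhole argument: since $V = U/S$, if the average on the right is at most some threshold, then at least one of the $\|x\|^2_{G_i^{-1}}$ must be at least $S$ times the left-hand side, which is exactly what we want.

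To establish the key inequality, I would rely on the variational characterization of the inverse quadratic form. For a positive definite matrix $M$ and a vector $u$ in its column space,
\[
u^\top M^{-1} u \;=\; \max_{v}\, \bigl(2\langle v, u\rangle - v^\top M v\bigr).
\]
Applying this to $M = G = \sum_i G_i$ and to any decomposition $x = \sum_i x_i$ yields
\[
x^\top G^{-1} x \;=\; \max_v \sum_{i=1}^S \bigl(2\langle v, x_i\rangle - v^\top G_i v\bigr) \;\leq\; \sum_{i=1}^S \max_{v_i} \bigl(2\langle v_i, x_i\rangle - v_i^\top G_i v_i\bigr) \;=\; \sum_{i=1}^S x_i^\top G_i^{-1} x_i,
\]
where the inequality comes from optimizing each summand separately with its own $v_i$. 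Choosing the symmetric decomposition $x_i = x/S$ gives precisely $\|x\|^2_{G^{-1}} \le \tfrac{1}{S^2}\sum_i \|x\|^2_{G_i^{-1}}$.

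From this, the conclusion is immediate. Since the average of $\|x\|^2_{G_i^{-1}}$ over $i\in[S]$ is at least $S\cdot \|x\|^2_{G^{-1}}$, there must exist some $i^\star$ with
\[
\|x\|^2_{G_{i^\star}^{-1}} \;\geq\; S\cdot \|x\|^2_{G^{-1}} \;\geq\; S\cdot \frac{\Delta_x^2}{2U} \;=\; \frac{\Delta_x^2}{2V},
\]
where the last equality uses $V = U/S$. There is no real obstacle here; the only subtlety is ensuring $G_i$ is invertible so that $\|x\|^2_{G_i^{-1}}$ is well-defined, which can be handled by adding an infinitesimal regularizer $\epsilon I$ to each $G_i$, applying the inequality, and taking $\epsilon\to 0$ (with the convention $\|x\|^2_{G_i^{-1}} = +\infty$ when $x$ is not in the range of $G_i$, in which case the conclusion is trivial).
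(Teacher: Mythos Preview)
Your proof is correct and takes essentially the same approach as the paper: both establish the key inequality $\|x\|^2_{G^{-1}} \le \frac{1}{S^2}\sum_{i=1}^S \|x\|^2_{G_i^{-1}}$ and then conclude by contradiction/pigeonhole using $V = U/S$. The only difference is that the paper cites this inequality as a matrix arithmetic-mean--harmonic-mean inequality $\bigl(\tfrac{1}{S}\sum_i G_i\bigr)^{-1} \preceq \tfrac{1}{S}\sum_i G_i^{-1}$ from the literature, whereas you give a self-contained derivation via the variational characterization of $u^\top M^{-1}u$.
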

\begin{proof}
    We use contradiction to prove the lemma. Suppose that $\|x\|^2_{G_i^{-1}}< \frac{\Delta_x^2}{2V}$ for all $i\in[S]$. 
    
    Then 
    \begin{align*}
        \|x\|^2_{G^{-1}}&=\|x\|^2_{\left(G_1+\cdots+G_S\right)^{-1}}\leq \frac{1}{S^2}\left(\|x\|^2_{G_1^{-1}}+\cdots+\|x\|^2_{G_S^{-1}}\right)\\
        &< \frac{1}{S} \frac{\Delta_x^2}{2V} = \frac{\Delta_x^2}{2U}
    \end{align*}
    where in the first inequality we use 
    \begin{align*}
        \left(\frac{1}{S}(G_1+\cdots+G_S)\right)^{-1} \preceq \frac{1}{S}\left(G_1^{-1}+\cdots+G_S^{-1}\right),  
    \end{align*}
    which is a generalization of the ``arithmetic-mean-harmonic-mean inequality'' \citep{mond1996mixed}. 
\end{proof}

\begin{lemma}
    \label{lemma: global to local 2}
    Let $T\geq T_0$. If $\|x\|_{G_i^{-1}}^2 \geq \frac{\Delta_x^2}{2V}$, then $\|x-x^*\|_{G_i^{-1}}^2 \geq \frac{\Delta_x^2}{8V}$. 
\end{lemma}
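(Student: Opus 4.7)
The plan is to show that $x^*$ is pulled so often inside $\calI_i$ that $\|x^*\|_{G_i^{-1}}^2$ becomes negligible compared to $\Delta_x^2/V$, at which point a triangle inequality converts the hypothesis on $\|x\|_{G_i^{-1}}^2$ into the desired bound on $\|x-x^*\|_{G_i^{-1}}^2$.

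First, I would apply the elementary inequality
\begin{align*}
\|x\|_{G_i^{-1}}^2 \;\leq\; \left(\|x-x^*\|_{G_i^{-1}}+\|x^*\|_{G_i^{-1}}\right)^2 \;\leq\; 2\|x-x^*\|_{G_i^{-1}}^2+2\|x^*\|_{G_i^{-1}}^2.
\end{align*}
Combined with the hypothesis $\|x\|_{G_i^{-1}}^2 \geq \Delta_x^2/(2V)$, the conclusion $\|x-x^*\|_{G_i^{-1}}^2 \geq \Delta_x^2/(8V)$ reduces to showing $\|x^*\|_{G_i^{-1}}^2 \leq \Delta_x^2/(8V)$.

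Second, I would establish a Sherman--Morrison-type inequality $\|x^*\|_{G_i^{-1}}^2 \leq 1/\E[T_i(x^*)]$. Writing $G_i = M + \E[T_i(x^*)]\, x^* x^{*\top}$ where $M=\sum_{y\neq x^*}\E[T_i(y)]\, yy^\top \succeq 0$, and regularizing to $M+\varepsilon I$ so that Sherman--Morrison applies, one obtains
\begin{align*}
x^{*\top}(M+\varepsilon I+\E[T_i(x^*)]x^*x^{*\top})^{-1}x^* \;=\; \frac{f(\varepsilon)}{1+\E[T_i(x^*)]f(\varepsilon)} \;\leq\; \frac{1}{\E[T_i(x^*)]},
\end{align*}
with $f(\varepsilon)=x^{*\top}(M+\varepsilon I)^{-1}x^*$; letting $\varepsilon\to 0$ (and using that $G_i$ is assumed invertible, as required for $\|x\|_{G_i^{-1}}^2$ to even appear in the hypothesis) yields the claim.

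Third, I would lower-bound $\E[T_i(x^*)]$ using the assumed regret bound. Since cumulative regret is non-decreasing in time, the regret up to the end of interval $i$ is at most $U\cdot c(\calX,\theta)$, so
\begin{align*}
\sum_{y\neq x^*}\E[T_i(y)]\,\Delta_y \;\leq\; U\cdot c(\calX,\theta),\qquad \sum_{y\neq x^*}\E[T_i(y)]\;\leq\;\frac{U\,c(\calX,\theta)}{\Delta_{\min}},
\end{align*}
and hence $\E[T_i(x^*)] \geq |\calI_i| - U\,c(\calX,\theta)/\Delta_{\min} \geq T^\gamma - U\,c(\calX,\theta)/\Delta_{\min}$. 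Since $U\,c(\calX,\theta)/\Delta_{\min}$ is only polylogarithmic in $T$ while $|\calI_i|\geq T^\gamma$ is polynomial in $T$, for all $T$ larger than a problem-dependent constant we obtain $\E[T_i(x^*)]\geq 8V/\Delta_x^2$ (using in addition that $\Delta_x\leq 2$ and $V=c_{\reg}(\log T)^{1-\beta}$). Plugging into the Sherman--Morrison bound gives $\|x^*\|_{G_i^{-1}}^2\leq \Delta_x^2/(8V)$, which combined with the first step concludes the proof.

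The one place that requires slight care is verifying quantitatively that the ``large $T$'' threshold needed for Step~3 is compatible with the assumed $T\geq T_0$; this amounts to possibly enlarging $T_0$ by a problem-dependent constant depending on $c(\calX,\theta)/\Delta_{\min}$ and on $c_{\reg}$. Everything else is a routine application of PSD and Sherman--Morrison manipulations together with the monotonicity of cumulative regret.
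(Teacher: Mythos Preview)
Your proof is correct and follows the same skeleton as the paper: the triangle-type inequality $\|x\|_{G_i^{-1}}^2\leq 2\|x-x^*\|_{G_i^{-1}}^2+2\|x^*\|_{G_i^{-1}}^2$, a bound $\|x^*\|_{G_i^{-1}}^2\lesssim 1/\E[T_i(x^*)]$, and a lower bound on $\E[T_i(x^*)]$ from the regret assumption. The two minor differences are both in your favour. First, the paper states $\|x^*\|_{G_i^{-1}}^2\leq \|x^*\|^2/\E[T_i(x^*)]$, which is not literally true in general (e.g.\ $d=1$, $x^*=0.5$), whereas your Sherman--Morrison argument correctly gives $\|x^*\|_{G_i^{-1}}^2\leq 1/\E[T_i(x^*)]$; since $\|x^*\|\leq 1$ anyway, the paper's downstream arithmetic is unaffected. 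Second, to lower-bound $\E[T_i(x^*)]$ the paper goes through a Markov-inequality contradiction, while you use the nonnegativity of instantaneous regret to read off $\sum_{y\neq x^*}\E[T_i(y)]\Delta_y\leq U\,c(\calX,\theta)$ directly; your route is shorter and more transparent.
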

\begin{proof}
We have
    \begin{align}
        \|x\|_{G_i^{-1}}^2
        \leq 2\|x-x^*\|_{G_i^{-1}}^2 + 2\|x^*\|_{G_i^{-1}}^2
        \leq 2\|x-x^*\|_{G_i^{-1}}^2 + \frac{2\|x^*\|^2}{\E[T_i(x^*)]},  \label{eq: bound x G norm}
    \end{align}
    where the last inequality is because of the definition of $G_i$. For large enough $T$, we must have $\E\left[T_i(x^*)\right]\geq \frac{8\|x^*\|^2}{\Delta_{\min}^2}V$. Otherwise, by Markov's inequality, with probability at least $\frac{1}{2}$, in interval $i$ the algorithm draws $x^*$ at most $\frac{16\|x^*\|^2}{\Delta_{\min}^2}V$ times, and thus the regret of $\calA$ would be at least $\Delta_{\min}\left(|\calI_i| - \frac{16\|x^*\|^2}{\Delta_{\min}^2}V\right)=\Omega\left(\left(\frac{4}{\Delta_{\min}}\right)^{i-1}T^{\gamma}\Delta_{\min} - \frac{16\|x^*\|^2}{\Delta_{\min}}c_{\reg}(\log T)^{1-\beta}\right) = \Omega\left(T^{\gamma}\Delta_{\min}\right)$, violating the assumption on $\calA$. Therefore, for large enough $T$, we have 
    \begin{align*}
        \frac{2\|x^*\|^2}{\E[T_i(x^*)]} \leq \frac{\Delta_{\min}^2}{4V} \leq \frac{1}{2}\|x\|_{G_i^{-1}}^2,
    \end{align*}
    where the last inequality is by our assumption. Combining this with \pref{eq: bound x G norm}, we get 
    \begin{align*}
        \|x\|_{G_i^{-1}}^2 \leq 4\|x-x^*\|^2_{G_i^{-1}}
    \end{align*}
    and the conclusion follows based on our assumption. 
\end{proof}

Note that when the conclusion of \pref{lemma: global to local 2} holds, that is, $\|x-x^*\|_{G_i^{-1}}^2 \geq \Omega\left(\frac{\Delta_x^2}{V} \right) = \Omega\left(\frac{\Delta_x^2}{\log T}(\log T)^\beta\right)$, it means that the exploration in interval $i$ is not enough, since by the lower bound in \citep{lattimore2017end}, the amount of exploration should make $\|x-x^*\|^2_{G_i^{-1}}\leq O\left(\frac{\Delta_x^2}{\log T}\right)$. Therefore, the next natural idea is to change the parameter $\theta$ in this interval $i$, and argue that the amount of exploration $\calA$ is not enough to ``detect this change with high probability''. 

We now let $i$ be the first interval such that there exists $x$ with $\|x-x^*\|^2_{G_i^{-1}}\geq \frac{\Delta_x^2}{8V}$. Also, we use $x'$ to denote the $x$ that satisfies this condition. Define 
\begin{align*}
    \theta' = \theta - \frac{G_i^{-1}(x'-x^*)}{\|x'-x^*\|^2_{G_i^{-1}}}2\Delta_{x'}. 
\end{align*}
Notice that in this case,
\begin{align*}
    \inner{x'-x^*,\theta'} = \inner{x'-x^*,\theta} - 2\Delta_{x'} = -\Delta_{x'}. 
\end{align*}
That is, $x'$ is a better action than $x^*$ under the parameter $\theta'$. 

We now define the \emph{second environment} as follows: in intervals $1,\ldots, i-1$, the losses are generated according to $\theta$, but in intervals $i,\ldots, S$, the losses are generated according to $\theta'$. We use $\E$ and $\E'$ to denote the expectation under the first environment and the second environment, and $\PP$, $\PP'$ to denote the probability measures respectively. For now, we only focus on interval $i$ (so the probability measure is only over the sequence $(x_t, \ell_{t,x_t})_{t\in \calI_i}$). 

\begin{lemma}
    \label{lemma: KL lower bound}
        $\KL\left(\PP, \PP'\right)\leq 64V$.
\end{lemma}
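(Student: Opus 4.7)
My plan is to apply the chain rule of KL, bound per-round Bernoulli KLs, and then evaluate a quadratic form.

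The first step is to use the fact that $\PP$ and $\PP'$ agree on intervals $\calI_1,\ldots,\calI_{i-1}$, so the law of the history entering $\calI_i$ is identical under the two measures. The algorithm's conditional distribution of $x_t$ given the past is identical under both measures (it depends only on the history, not on which $\theta$ generated it). Hence the chain rule gives
\begin{align*}
    \KL(\PP,\PP') = \sum_{t\in\calI_i} \E_\PP\!\left[\KL\!\left(\mathrm{Ber}\!\left(\tfrac{1+\langle x_t,\theta\rangle}{2}\right)\,\Big\|\,\mathrm{Ber}\!\left(\tfrac{1+\langle x_t,\theta'\rangle}{2}\right)\right)\right].
\end{align*}

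Next, I would bound each Bernoulli KL using $\KL(\mathrm{Ber}(p)\|\mathrm{Ber}(q))\le (p-q)^2/(q(1-q))$. Provided $|\langle x_t,\theta'\rangle|\le 1/2$ (which gives $q(1-q)\ge 3/16$), each summand is at most $\tfrac{4}{3}\langle x_t,\theta-\theta'\rangle^2$. Summing over $t\in\calI_i$ and pulling the quadratic form out yields
\begin{align*}
    \KL(\PP,\PP') \;\le\; \tfrac{4}{3}\,\|\theta-\theta'\|_{G_i}^2.
\end{align*}

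Finally I would plug in the definition $\theta-\theta' = 2\Delta_{x'} G_i^{-1}(x'-x^*)/\|x'-x^*\|_{G_i^{-1}}^2$ and compute
\begin{align*}
    \|\theta-\theta'\|_{G_i}^2 \;=\; \frac{4\Delta_{x'}^2}{\|x'-x^*\|_{G_i^{-1}}^4}\,(x'-x^*)^\top G_i^{-1} G_i G_i^{-1}(x'-x^*) \;=\; \frac{4\Delta_{x'}^2}{\|x'-x^*\|_{G_i^{-1}}^2}.
\end{align*}
Combining this with the defining inequality $\|x'-x^*\|_{G_i^{-1}}^2\ge \Delta_{x'}^2/(8V)$ from \pref{lemma: global to local 2} gives $\|\theta-\theta'\|_{G_i}^2 \le 32V$, so $\KL(\PP,\PP')\le \tfrac{128}{3}V\le 64V$.

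The main obstacle is the technical step of verifying that $|\langle x_t,\theta'\rangle|$ stays bounded away from $\pm 1$ so that the Bernoulli KL bound applies and, more fundamentally, the second environment is well-defined. This should follow by bounding $|\langle x_t,\theta-\theta'\rangle|\le 2\Delta_{x'}\|x_t\|_{G_i^{-1}}/\|x'-x^*\|_{G_i^{-1}}$ via Cauchy--Schwarz in the $G_i^{-1}$ norm and arguing that on the ``exploration is adequate'' event implicit in the definition of $G_i$ one has $\|x_t\|_{G_i^{-1}}$ small; alternatively the proof can simply assume without loss of generality (by restricting the construction to such $x'$) that the second environment lies in the admissible class, absorbing the slack into the constant $64$.
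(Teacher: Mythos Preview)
Your argument is essentially the paper's: chain rule for the KL, a quadratic upper bound on each Bernoulli KL (the paper uses $\text{kl}(p,q)\le 2(p-q)^2$ for $|p|,|q|\le 3/4$ where you use $(p-q)^2/(q(1-q))$; both give the result), and then $\|\theta-\theta'\|_{G_i}^2=4\Delta_{x'}^2/\|x'-x^*\|_{G_i^{-1}}^2\le 32V$ via the choice of $x'$.

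The only real difference is how the obstacle you flag is handled. The paper does not punt on it: it rewrites $\Delta_{x'}=(x'-x^*)^\top\theta$, sets $\Phi=G_i^{-1}(x'-x^*)(x'-x^*)^\top$ (rank one, $\text{tr}(\Phi)=\|x'-x^*\|_{G_i^{-1}}^2$), and bounds
\[
|\inner{x,\theta-\theta'}|=\frac{2|x^\top\Phi\theta|}{\text{tr}(\Phi)}\le \frac{2\|x\|\,\|\Phi\|_{\text{op}}\,\|\theta\|}{\text{tr}(\Phi)}\le 2\|x\|\|\theta\|\le \tfrac12,
\]
using the standing assumption $\|\theta\|\le\tfrac14$. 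So the smallness comes from $\|\theta\|$, not from any control on $\|x_t\|_{G_i^{-1}}$; your Cauchy--Schwarz route in the $G_i^{-1}$ norm does not give a uniform bound, and the ``assume WLOG'' escape is not what the paper does.
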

\begin{proof}
    Note that for any $x$, 
    \begin{align*}
        |\inner{x,\theta'} - \inner{x,\theta}|
        &=  \left|\frac{x^\top G_i^{-1}(x'-x^*)}{\|x'-x^*\|^2_{G_i^{-1}}}2\Delta_{x'}\right|\\
        &= \left|\frac{2x^\top G_i^{-1}(x'-x^*)(x'-x^*)^\top \theta}{\|x'-x^*\|^2_{G_i^{-1}}} \right| \\
        &\leq \frac{2\|x\|\|\Phi\|_{\text{op}}\|\theta\|}{\text{tr}(\Phi)}  \tag{let $\Phi=G_i^{-1}(x'-x^*)(x'-x^*)^\top $}  \\
        &\leq 2\|x\|\|\theta\| \leq \frac{1}{2}  \tag{by the assumption $\|\theta\|\leq \frac{1}{4}$}.
    \end{align*}
    Therefore, $|\inner{x,\theta'}|\leq |\inner{x,\theta}|+\frac{1}{2}\leq \frac{3}{4}$. 
    
    Notice that for $p,q\in[-\frac{3}{4}, \frac{3}{4}]$, the KL divergence between the following two distributions: 
    \begin{align*}
        y = \begin{cases}
            1 &\text{\ with probability\ }\frac{1}{2}+\frac{1}{2}p \\
            -1 &\text{\ with probability\ }\frac{1}{2}-\frac{1}{2}p
        \end{cases}
        \qquad \text{and} \qquad 
        y = \begin{cases}
            1 &\text{\ with probability\ }\frac{1}{2}+\frac{1}{2}q \\
            -1 &\text{\ with probability\ }\frac{1}{2}-\frac{1}{2}q
        \end{cases}
    \end{align*}
    is 
    \begin{align}\label{eqn: bound-KL}
        \text{kl}(p,q) \triangleq \frac{1}{2}(1+p) \ln \frac{1+p}{1+q} + \frac{1}{2}(1-p) \ln \frac{1-p}{1-q} \leq 2(p-q)^2. 
    \end{align}
    Therefore,
    \begin{align*}
        \KL\left(\PP, \PP'\right) &\leq \sum_{x\in \calX} \E[T_i(x)]\text{kl}\left(\inner{x,\theta}, \inner{x,\theta'}\right) \tag{by Lemma 1 in \citep{gerchinovitz2016refined}}\\
        &\leq 2\sum_{x} \E[T_i(x)]\inner{x,\theta-\theta'}^2 \tag{by \pref{eqn: bound-KL}}\\
        &= 2\|\theta-\theta'\|_{G_i}^2 \tag{by definition of $G_i$}\\
        &= \frac{8\Delta_{x'}^2}{\|x'-x^*\|_{G_i^{-1}}^2} \tag{by definition of $\theta'$}\\
        &\leq 64V \tag{by the choice of $x'$}. 
    \end{align*}
\end{proof}

Finally, we are ready to present the lower bound.
Roughly speaking, it shows that if an algorithm achieves $\order(c(\calX,\theta)\log^z T)$ regret in the stochastic case with $z \in [1,2)$, then it cannot be robust in the adversarial setting, in the sense that it cannot guarantee a regret bound such as $o(T) \cdot  \text{\rm poly} (d, \ln(1/\delta))$ with probability at least $1-\delta$.

\begin{theorem}\label{thm: lower bound}
    For any $\gamma \in(0,1)$, if an algorithm guarantees a pseudo regret bound of  
    \begin{align*}
        c_{\reg}\cdot c(\calX,\theta)\frac{1-\gamma}{\log \frac{4}{\Delta_{\min}}}(\log T)^2   
    \end{align*} for constant $c_{\reg}=\frac{\gamma}{256}$ 
    in stochastic environments for all sufficiently large $T$, then there exists an adversarial environment such that with probability at least $\frac{1}{4} T^{-\frac{1}{4}\gamma}$, the regret of the same algorithm is at least $\frac{1}{6}T^{\gamma}\Delta_{\min}$.
\end{theorem}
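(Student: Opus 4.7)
My plan is to execute a standard change-of-measure argument, with the key twist that the factor of $\Delta_{\min}$ hidden in the interval growth rate $4/\Delta_{\min}$ makes the regret accounting go through for every value of $\Delta_{\min}$. First, I apply the lemma immediately following the regret hypothesis together with \pref{lemma: global to local} and \pref{lemma: global to local 2} to produce an interval index $i\in[S]$ and an arm $x'\neq x^*$ with $\|x'-x^*\|_{G_i^{-1}}^2\geq \Delta_{x'}^2/(8V)$, and I import the alternative parameter $\theta'$ and the switching measure $\PP'$ already constructed ($\PP'$ agrees with $\PP$ through the end of interval $i-1$ and uses $\theta'$ from $\calI_i$ onward). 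By \pref{lemma: KL lower bound}, $\KL(\PP,\PP')\leq 64V=\gamma\log T/4$, so $\exp(-\KL(\PP,\PP'))\geq T^{-\gamma/4}$.

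Second, I consider the event $A=\{T_i(x^*)\geq |\calI_i|/2\}$, which is measurable with respect to the plays in $\calI_i$. If $A^c$ holds under $\PP$, then at least $|\calI_i|/2$ plays in $\calI_i$ are suboptimal, each contributing at least $\Delta_{\min}$ to the pseudo-regret, so combining with the hypothesis $\E_{\PP}[\Reg(T)]\leq Uc(\calX,\theta)$ and Markov's inequality gives $\PP(A^c)\leq \frac{2Uc(\calX,\theta)}{\Delta_{\min}|\calI_i|}\leq \frac{1}{4}T^{-\gamma/4}$ for all sufficiently large $T$, since $|\calI_i|\geq T^\gamma$ and $U$ is only poly-logarithmic. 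The Bretagnolle-Huber inequality applied to the observation trajectory through $\calI_i$ then yields
\begin{equation*}
  \PP'(A)\geq \tfrac{1}{2}\exp(-\KL(\PP,\PP'))-\PP(A^c)\geq \tfrac{1}{4}T^{-\gamma/4}.
\end{equation*}

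Third, I lower-bound the adversarial regret under $\PP'$ on $A$ using the fixed comparator $x'_*=\argmin_{x\in\calX}\inner{x,\theta'}$. Writing $N_1=\sum_{j<i}|\calI_j|$,
\begin{equation*}
  \Reg(T)\geq \sum_{t<N_1}\inner{x_t-x'_*,\theta}+\sum_{t\geq N_1}\inner{x_t-x'_*,\theta'}.
\end{equation*}
The second sum is non-negative term by term by the minimizing definition of $x'_*$; restricting it to $\calI_i$ and using $\inner{x^*-x'_*,\theta'}\geq \inner{x^*-x',\theta'}=\Delta_{x'}\geq \Delta_{\min}$ together with $A$ bounds it below by $T_i(x^*)\Delta_{x'}\geq |\calI_i|\Delta_{\min}/2$. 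Each term of the first sum is at least $-\Delta_{x'_*}\geq -1/2$, so the first sum is at least $-N_1/2$. The crucial numerical observation is that $|\calI_j|/|\calI_i|=(\Delta_{\min}/4)^{i-j}$, so $N_1/|\calI_i|\leq \sum_{k\geq 1}(\Delta_{\min}/4)^k\leq \Delta_{\min}/3$, which bounds the first sum below by $-\Delta_{\min}|\calI_i|/6$. Adding, $\Reg(T)\geq |\calI_i|\Delta_{\min}/3\geq T^\gamma\Delta_{\min}/3$, which beats the target $T^\gamma\Delta_{\min}/6$. The main obstacle is precisely this last accounting: the naive estimate $N_1\leq |\calI_i|/3$ would leave a residual of $-|\calI_i|/6$ that overwhelms $|\calI_i|\Delta_{\min}/2$ for small $\Delta_{\min}$, so the argument genuinely depends on the $\Delta_{\min}$-dependent interval-length ratio built into the definition of the partition.
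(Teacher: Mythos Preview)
Your proof is correct and follows essentially the same change-of-measure route as the paper: identify an under-explored interval $\calI_i$ via \pref{lemma: global to local} and \pref{lemma: global to local 2}, construct the alternative $\theta'$, bound the KL by $64V=\tfrac{\gamma}{4}\log T$, apply Bretagnolle--Huber to the event that $x^*$ is played at least half the time in $\calI_i$, and then do the interval-length accounting using $N_1\le \tfrac{\Delta_{\min}}{3}|\calI_i|$.

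The one substantive difference is your choice of comparator. The paper compares against $x'$ itself and only tracks regret through the end of $\calI_i$; you instead compare against $x'_*=\argmin_x\inner{x,\theta'}$, which makes every term of the second sum (over intervals $i,\ldots,S$) non-negative. This is a genuine improvement: it lets you run the adversarial environment all the way to horizon $T$ without worrying about negative regret against the comparator in later intervals, and together with your sharper per-round bound $|\inner{x_t-x'_*,\theta}|\le \tfrac12$ it yields the constant $\tfrac{1}{3}$ rather than $\tfrac{1}{6}$. The paper's version implicitly treats the adversarial instance as having length $\sum_{j\le i}|\calI_j|$; your variant removes that ambiguity.
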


\begin{proof}
    Let $A$ be the event: $\left\{T_i(x^*)\leq \frac{|\calI_i|}{2}\right\}$. By Lemma 5 in \citep{lattimore2017end} and choosing $\beta = 0$, we have
    \begin{align}
        \PP(A) + \PP'(A^c) &\geq \frac{1}{2}\exp(-\KL(\PP, \PP'))   \nonumber \\
        &\geq \frac{1}{2}\exp(-64V) \tag{by \pref{lemma: KL lower bound}}\\
        &= \frac{1}{2}\exp\left(-64c_{\reg}\log T\right)  \tag{by definition of $V$}\nonumber \\
        &= \frac{1}{2}\left(\frac{1}{T}\right)^{64c_{\reg}}.   \label{eq: combined event}
    \end{align}
    
    Notice that when event $A$ happens under the first environment, the regret is at least $\frac{|\calI_i|\Delta_{\min}}{2}$. By the assumption on $\calA$, we have 
    \begin{align*}
        \PP(A)\times \frac{|\calI_i|\Delta_{\min}}{2} \leq \Reg_T \leq c_{\reg}\cdot c(\calX,\theta)\frac{1-\gamma}{\log \frac{4}{\Delta_{\min}}}(\log T)^2, 
    \end{align*}
    implying that 
    \begin{align*}
        \PP(A)\leq \frac{c_{\reg}\cdot c(\calX,\theta)\frac{1-\gamma}{\log \frac{4}{\Delta_{\min}}}(\log T)^2}{\left(\frac{4}{\Delta_{\min}}\right)^{i-1}T^{\gamma}\Delta_{\min}} \leq \frac{c_{\reg}\cdot c(\calX,\theta)(\log T)^2}{T^{\gamma}}.
    \end{align*}
    
    Combining this with \eqref{eq: combined event}, we get 
    \begin{align}
        \PP'(A^c) \geq 0.5\cdot T^{-64c_{\reg}} - c_{\reg}\cdot c(\calX,\theta)(\log T)^2\cdot T^{-\gamma}.
    \end{align}
    Choose $c_{\reg}=\frac{\gamma}{256}$, we have $\PP'(A^c)\geq 0.25\cdot T^{-64c_{\reg}}$ for large enough $T$.
    Then notice that when $A^c$ happens under the second environment, the regret within interval $\calI_i$ (against comparator $x'$) is at least $\frac{|\calI_i|\Delta_{\min}}{2}$. Since under the second environment, the learner may have negative regret against $x'$ in interval $1, \ldots, i-1$, in the best case the regret against $x'$ in interval $1,\ldots, i$ is at least 
    \begin{align*}
        \frac{|\calI_i|\Delta_{\min}}{2} - \left(|\calI_1|+\cdots+|\calI_{i-1}|\right)\geq \frac{|\calI_i|\Delta_{\min}}{6}\geq \frac{T^{\gamma} \Delta_{\min}}{6}. 
    \end{align*}
    
    In conclusion, in the second environment, algorithm $\calA$ suffers at least $\frac{T^{\gamma}\Delta_{\min}}{6}$ regret in the first $i$ intervals with probability at least 
$
        0.25\cdot T^{-\frac{1}{4}\gamma} 
$. 
\end{proof}


\section{Adversarial Linear Bandit Algorithms with High-probability Guarantees}\label{app: example algorithm}
In this section, we show that the algorithms of \citep{bartlett2008high} and \citep{lee2020bias} both satisfy \pref{assum:  adversarial alg}.

\subsection{\ghp}
\begin{algorithm}\caption{\ghp}\label{alg: ghp}
    \textbf{Input}: $\calX,\gamma,\eta,\delta'$, and John's exploration distribution $q \in \calP_\calX$.\\
    Set $\forall x \in\calX$, $w_1(x)=1$, and $W_1=|\calX|$.\\
    \For{$t=1$ to $T$}{
        Set $p_t(x)=(1-\gamma)\frac{w_t(x)}{W_t}+\gamma q(x),~\forall x\in\calX$.\\
        Sample $x_t$ according to distribution $p_t$.\\
        Observe loss $y_t=\ell_{t,x_t}+\epsilon_t(x_t)$, where $\ell_{t,x_t}=\inner{x_t,\ell_t}$.\\
        Compute $S(p_t)=\sum_{x\in\calX}p_t(x)xx^\top$ and $\hatell_t=S(p_t)^{-1}x_t\cdot y_t$.\\
        $\forall x\in\calX$, compute
        \[
        \hatell_{t,x}=\inner{x,\hatell_t},\qquad \widetilde{\ell}_{t,x}=\hatell_{t,x}-2\|x\|_{S(p_t)^{-1}}^2\sqrt{\frac{\log(1/\delta')}{dT}},\qquad w_{t+1}=w_t(x)\exp\left(-\eta\widetilde{\ell}_{t,x}\right).
        \]
        Compute $W_{t+1}=\sum_{x\in\calX}w_{t+1}(x)$.
    }
\end{algorithm}


We first show \ghp in \pref{alg: ghp} for completeness.
We remark the differences between the original version and one shown here. 
First, we consider the noisy feedback $y_t$ instead of the zero-noise feedback $\ell_{t,x_t}$.
However, most analysis in \citep{bartlett2008high} still holds.
Second, instead of using the barycentric spanner exploration (known to be suboptimal),
we use John's exploration shown to be optimal in \citet{bubeck2012towards}.
With this replacement, Lemma 3 in \citet{bartlett2008high} can be improved to $|\hatell_{t,x}|\le d/\gamma$ and $\|x\|_{S(p_t)^{-1}}^2\le d/\gamma$.

Now, consider martingale difference sequence $M_t(x)=\hatell_{t,x}-\ell_{t,x}$. 
We have $|M_t(x)|\le \frac{d}{\gamma}+1\triangleq b$ and 
$$
\sigma=\sqrt{\sum^T_{t=1}\text{Var}_t(M_t)}\le\sqrt{\sum_{t=1}^{T}  \|x\|^2_{S(p_t)^{-1}}}.
$$
Using Lemma 2 in \citep{bartlett2008high}, we have that with probability at least $1-2\delta' \log_2T$ (set $\delta'=\delta/(|\calX|\log_2(T))$), 
\begin{align}
\left|\sum_{t=1}^T(\ellhat_{t,x} - \ell_{t,x}) \right|&\le 2\max\left\{2\sigma,b\sqrt{\log(1/\delta')}\right\}\sqrt{\log(1/\delta')}\nonumber\\
&\le4\sigma\sqrt{\log(1/\delta')} + 2b\cdot{\log(1/\delta')}\nonumber\\
&\le4\sqrt{\sum_{t=1}^{T}  \|x\|^2_{S(p_t)^{-1}}}\sqrt{\log(1/\delta')} +2\left(\frac{d}{\gamma}+1\right){\log(1/\delta')}\nonumber\\
&\le\underbrace{\frac{1}{C_2}\left(\sum_{t=1}^{T} \|x\|^2_{S(p_t)^{-1}}\sqrt{\frac{\log(1/\delta')}{dT}}\right)+4C_2\sqrt{dT\log(1/\delta')}+2\left(\frac{d}{\gamma}+1\right){\log(1/\delta')}}_{\triangleq \dev_{T,x}}, \label{eq: devtx}
\end{align}
where the last inequality is by AM-GM inequality. Note that $\hatell_{t,x}=\widetilde{\ell}_{t,x}+2\|x\|^2_{S(p_t)^{-1}}\sqrt{\frac{\log(1/\delta')}{dT}}$. 
Plugging this into \pref{eq: devtx}, we have with probability at least $1-2\delta' \log_2T$,
\begin{align}\label{eq: ghp lemma5}
    \sum_{t=1}^T\widetilde{\ell}_{t,x}\le\sum_{t=1}^T\ell_{t,x}-{\left(\sum_{t=1}^{T} \|x\|^2_{S(p_t)^{-1}}\sqrt{\frac{\log(1/\delta')}{dT}}\right)+4C_2\sqrt{dT\log(1/\delta')}+2\left(\frac{d}{\gamma}+1\right){\log(1/\delta')}}.
\end{align}
The counterpart of Lemma 6 in \citet{bartlett2008high} shows that with probability at least $1-\delta$,
\begin{align}
    \sum^T_{t=1}\ell_{t,x_t}-\sum^T_{t=1}\sum_{x\in\calX}p_t(x)\hatell_{t,x}\le (\sqrt{d}+1)\sqrt{2T\log(1/\delta)}+\frac{4}{3}\log(1/\delta)\left(\frac{d}{\gamma}+1\right).\label{eq: ghp lemma6}
\end{align}
Using \pref{eq: ghp lemma5}, we have the counterpart of Lemma 7 in \citet{bartlett2008high} as follows: with probability at least $1-2\delta$,
\begin{align}
    \gamma\sum_{t=1}^T\sum_{x\in \calX}q(x)\widetilde{\ell}_{t,x}&\le\gamma\sum_{t=1}^T\sum_{x\in \calX}q(x)\ell_{t,x}+{4C_2\gamma\sqrt{dT\log(1/\delta')}+2\gamma\left(\frac{d}{\gamma}+1\right){\log(1/\delta')}}\nonumber\\
    &\le \gamma T+{4C_2\gamma\sqrt{dT\log(1/\delta')}+2\left({d}+\gamma\right){\log(1/\delta')}}. \label{eq: ghp lemma7}
\end{align}
The counterpart of Lemma 8 in \citet{bartlett2008high} is: with probability at least $1-\delta$,
\begin{align}
    \sum^T_{t=1}\sum_{x\in\calX}p_t(x){\hatell_{t,x}}^2\le dT+\frac{d}{\gamma} \sqrt{2T\log(1/\delta)}.\label{eq: ghp lemma8}
\end{align}
Plugging \pref{eq: ghp lemma6}, \pref{eq: ghp lemma7}, and \pref{eq: ghp lemma8}, into Equation (2) in \citep{bartlett2008high}, with have with probability at least $1-3\delta$,
\begin{align}
    \log\frac{W_{T+1}}{W_1}&\le\frac{\eta}{1-\gamma}\left(-\sum^T_{t=1}\ell_{t,x_t}+2\sqrt{dT\log(1/\delta')}+ (\sqrt{d}+1)\sqrt{2T\log(1/\delta)}+\frac{4}{3}\log(1/\delta)\left(\frac{d}{\gamma}+1\right)+\gamma T+\right.\nonumber\\
    &\quad\left.{4C_2\gamma\sqrt{dT\log(1/\delta')}+2\left({d}+\gamma\right){\log(1/\delta')}}+ 2\eta dT+\frac{2\eta d}{\gamma} \sqrt{2T\log(1/\delta)}+8\eta\log(1/\delta')\sqrt{dT}\right).\label{eq: ghp eq 3}
\end{align}
Again using \pref{eq: ghp lemma5} and Equation (4) in \citep{bartlett2008high}, we have with probability at least $1-\delta$, for all $x\in\calX$,
\begin{align*}
    \log\frac{W_{T+1}}{W_1}&\ge-\eta\left(\sum^T_{t=1}\widetilde{\ell}_{t,x}\right)-\log|\calX|\\
    &\geq -\eta\sum_{t=1}^T\ell_{t,x}+{\eta\left(\sum_{t=1}^{T} \|x\|^2_{S(p_t)^{-1}}\sqrt{\frac{\log(1/\delta')}{dT}}\right)-4\eta C_2\sqrt{dT\log(1/\delta')}-2\eta\left(\frac{d}{\gamma}+1\right){\log(1/\delta')}}-\log|\calX|.
\end{align*}
Combining this with \pref{eq: ghp eq 3} and assuming $\gamma\le\frac{1}{2}$, we have that with probability at least $1-5\delta$, for every $x\in\calX$,
\begin{align}
    \sum^T_{t=1}\ell_{t,x_t}&\le\sum^T_{t=1}\ell_{t,x}-{\frac{1}{2}\left(\sum_{t=1}^{T} \|x\|^2_{S(p_t)^{-1}}\sqrt{\frac{\log(1/\delta')}{dT}}\right)+4C_2\sqrt{dT\log(1/\delta')}+2\left(\frac{d}{\gamma}+1\right){\log(1/\delta')}}+\frac{\log|\calX|}{\eta}+\nonumber\\
    &\qquad 2\sqrt{dT\log(1/\delta')}+(\sqrt{d}+1)\sqrt{2T\log(1/\delta)}+\frac{4}{3}\log(1/\delta)\left(\frac{d}{\gamma}+1\right)+\gamma T+2C_2\sqrt{dT\log(1/\delta')}+\nonumber\\
    &\qquad 2\left({d}+\gamma\right){\log(1/\delta')}+2\eta dT+\frac{2\eta d}{\gamma} \sqrt{2T\log(1/\delta)}+8\eta\log(1/\delta')\sqrt{dT}.\nonumber
\end{align}
Recalling the definition of $\dev_{T,x}$ in \pref{eq: devtx} and combining terms, we have
\begin{align}
    \sum^T_{t=1}\ell_{t,x_t}&\le\sum^T_{t=1}\ell_{t,x}-C_2\cdot\dev_{T,x}+\order\left(\sqrt{dT\log(1/\delta')}+\frac{d}{\gamma}\log(1/\delta')\right)+\frac{\log|\calX|}{\eta}+\gamma T+2\eta dT+\nonumber\\
    &\quad \frac{2\eta d}{\gamma} \sqrt{2T\log(1/\delta)}+8\eta\log(1/\delta')\sqrt{dT}.
\end{align}
It remains to decide $\eta$ and $\gamma$.
Note that the analysis of~\citep{bartlett2008high} requires $|\eta\widetilde{\ell}_{t,x}|\le 1$.
From the proof of Lemma 4 in \citep{bartlett2008high}, we know that $|\eta\widetilde{\ell}_{t,x}|\le \frac{\eta d}{\gamma}\left(1+2\sqrt{\frac{\log(1/\delta')}{dT}}\right)$.
Thus, we set $\eta=\gamma/\left(d+2d\sqrt{\frac{\log(1/\delta')}{dT}}\right)$ so that $|\eta\widetilde{\ell}_{t,x}|\le 1$ always holds.
Therefore,
\begin{align*}
    \sum^T_{t=1}\ell_{t,x_t}&\le\sum^T_{t=1}\ell_{t,x}-C_2\cdot\dev_{T,x}+\order\left(\sqrt{dT\log(1/\delta')}+\frac{d}{\gamma}\log(|\calX|/\delta')\right)+\frac{2}{\gamma}\sqrt{\frac{d\log^3(|\calX|/\delta')}{T}}+3\gamma T+8\gamma\log(1/\delta')\sqrt{\frac{T}{d}}.
\end{align*}
Choosing $\gamma=\min\left\{\frac{1}{2},\sqrt{\frac{d\log({|\calX|}/{\delta'})}{T}}\right\}$, we have with probability at least $1-7\delta$, for all $x\in \calX$,
\begin{align*}
    \sum^T_{t=1}\ell_{t,x_t}&\le\sum^T_{t=1}\ell_{t,x}-C_2\cdot\dev_{T,x}+\order\left(\sqrt{dT\log(|\calX|/\delta')}+d\log^{\tfrac{3}{2}}(|\calX|/\delta')\right)\\
    &\le\sum^T_{t=1}\ell_{t,x}-C_2\cdot\dev_{T,x}+\order\left(\sqrt{dT\log(|\calX|\log_2(T)/\delta)}+d\log^{\tfrac{3}{2}}(|\calX|\log_2(T)/\delta)\right)\tag{$\delta'=\delta/(|\calX|\log_2(T))$} \\
    &\leq \sum^T_{t=1}\ell_{t,x}-C_2 \left|\sum_{t=1}^T(\ell_{t,x} - \ellhat_{t,x}) \right|
    +\order\left(\sqrt{dT\log(|\calX|\log_2(T)/\delta)}+d\log^{\tfrac{3}{2}}(|\calX|\log_2(T)/\delta)\right), \tag{\pref{eq: devtx}}
\end{align*}
which proves \pref{eqn: eqL guarantee 1}.

\subsection{The algorithm of \citep{lee2020bias}}
Now we introduce another high-probability adversarial linear bandit algorithm from \citep{lee2020bias}.
The regret bound of this algorithm is slightly worse than \citep{bartlett2008high}.
However, the algorithm is efficient when there are infinite or exponentially many actions.
For the concrete pseudocode of the algorithm, we refer the readers to Algorithm 2 of \citep{lee2020bias}.
Here, we focus on showing that it satisfies \pref{eqn: eqL guarantee 1}.

We first restate Lemma B.15 in \citep{lee2020bias} with explicit logarithmic factors:
Algorithm 2 of~\citep{lee2020bias} with $\eta\le \frac{C_3}{d^2\logdt^3\log(\logdt/\delta)}$ for some universal constant $C_3> 0$ guarantees that with probability at least $1-\delta$,
\begin{align}
    \sum^T_{t=1}\inner{x_t-x,\ell_t}\le\order\left(\frac{d\log T}{\eta}+\eta d^2 T+\logdt^2\sqrt{T\log(\logdt/\delta)}\right)+ \dev_{T,x} \cdot\left(C_4-\frac{1}{C_5\eta d^2\logdt^3\sqrt{T\log(\logdt/\delta)}}\right),\label{eq: lemma B.15}
\end{align}
where $\logdt=\log(dT)$,  $\dev_{T,x}$ is an upper bound on $\left|\sum_{t=1}^T(\ell_{t,x} - \ellhat_{t,x}) \right|$ with probability $1-\delta$, $C_4,C_5 > 0$ are two universal constants, and we replace the self-concordant parameter in their bound by a trivial upper bound $d$.

Therefore, choosing $\eta=\min\left\{\frac{C_3}{d^2\logdt^3\log(\logdt/\delta)},\frac{1}{2C_4C_5d^2\logdt^3\sqrt{T\log(\logdt/\delta)}},\frac{1}{2C_2C_5d^2\logdt^3\sqrt{T\log(\logdt/\delta)}}\right\}$ for some $C_2 \geq 20$, the coefficient of $\dev_{T,x}$ becomes at most $-C_2$, leading to
\begin{align*}
    \sum^T_{t=1}\inner{x_t-x,\ell_t}&\le\order\left(d^3\logdt^4\log(\logdt/\delta)+d^3\logdt^4\sqrt{T\log(\logdt/\delta)}\right)-C_2\cdot\dev_{T,x}\\
    &\le \order\left(d^3\logdt^4\log(\logdt/\delta)+d^3\logdt^4\sqrt{T\log(\logdt/\delta)}\right)-C_2\cdot\left|\sum_{t=1}^T(\ell_{t,x} - \ellhat_{t,x}) \right|.
\end{align*}
Finally, using a union bound over all $x$ similar to Theorem B.16 of \citep{lee2020bias}, we get with probability at least $1-2\delta$, for every $x\in\calX$,
\begin{align*}
    \sum^T_{t=1}\inner{x_t-x,\ell_t}&\le \order\left(d^3\logdt^4\log(\logdt/\delta'')+d^3\logdt^4\sqrt{T\log(\logdt/\delta'')}\right)-C_2\left|\sum_{t=1}^T(\ell_{t,x} - \ellhat_{t,x}) \right|
\end{align*}
where $\delta''=\delta/(|\calX| T)$.
Therefore, we conclude that this algorithm satisfies \pref{eqn: eqL guarantee 1} as well.

\section{Auxiliary Lemmas}
In this section, we provide several auxiliary lemmas that we have used in the analysis.
\begin{lemma}{(Lemma A.2 of \citep{shalev2014understanding})}\label{lem: tech-2}
    Let $a\geq 1$ and $b>0$. If $x\geq 4a\log(2a)+2b$, then we have $x\geq a\log(x)+b$.    
\end{lemma}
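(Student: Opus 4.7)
The plan is to peel off the dependence on $b$ first and then reduce the remaining statement to a one-variable monotonicity check. Since the hypothesis gives $x \geq 2b$, hence $b \leq x/2$, it suffices to prove $x/2 \geq a\log(x)$ (equivalently $x \geq 2a\log(x)$) under the weaker assumption $x \geq 4a\log(2a)$. So I would split the bound $x \geq 4a\log(2a) + 2b$ into these two independent pieces and handle only the logarithmic one from here on.

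Next I would introduce $g(x) = x - 2a\log(x)$ and note that $g'(x) = 1 - 2a/x \geq 0$ for $x \geq 2a$. Since $a \geq 1$ implies $4a\log(2a) \geq 4a\log 2 > 2a$, the function $g$ is nondecreasing on the interval $[4a\log(2a), \infty)$. Thus it suffices to check $g$ at the left endpoint, i.e., to verify
\[
4a\log(2a) \;\geq\; 2a\log\bigl(4a\log(2a)\bigr).
\]
Dividing by $2a$ and expanding the right-hand side reduces the claim to
\[
\log(2a) \;\geq\; \log 2 + \log\log(2a),
\]
or equivalently $u - \log u \geq \log 2$ for $u = \log(2a) \geq \log 2$.

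The main obstacle, such as it is, is verifying this last one-variable inequality cleanly near the boundary $a = 1$ (where $u = \log 2$ is small and $\log u$ is negative, so naive monotonicity arguments do not immediately suffice). I would handle it by analyzing $h(u) = u - \log u$: its derivative $1 - 1/u$ shows $h$ is decreasing on $(0,1]$ and increasing on $[1,\infty)$, so $h$ attains its minimum over $[\log 2,\infty)$ at $u = 1$, where $h(1) = 1 \geq \log 2$. Combining this with the monotonicity step and the initial reduction yields $x \geq 2a\log(x) + 2b \geq a\log(x) + b$, completing the proof.
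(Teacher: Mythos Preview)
Your argument is correct. The paper does not supply its own proof of this lemma; it simply cites it as Lemma~A.2 of \citep{shalev2014understanding}. Your splitting into $x/2 \geq b$ and $x/2 \geq a\log(x)$, followed by the monotonicity of $g(x)=x-2a\log(x)$ on $[4a\log(2a),\infty)$ and the endpoint check via $h(u)=u-\log u$, is a clean self-contained proof.

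One small slip in your concluding sentence: you write ``yields $x \geq 2a\log(x) + 2b$,'' but that stronger inequality was not established (from $x\geq 2a\log(x)$ and $x\geq 2b$ one cannot add to get $x\geq 2a\log(x)+2b$). What you actually proved is $x/2 \geq a\log(x)$ and $x/2 \geq b$, and adding \emph{these} gives $x \geq a\log(x)+b$ directly. Fixing that one line makes the write-up airtight.
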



\begin{lemma} [Concentration inequality for Catoni's estimator \citep{wei2020taking}]
\label{lem: catoni concentration}
Let $\mathcal{F}_0 \subset \cdots \subset \mathcal{F}_n$ be a filtration, and $X_1, \ldots, X_n$ be real random variables such that $X_i$ is $\mathcal{F}_i$-measurable, $\E[X_i|\mathcal{F}_{i-1}]=\mu_i$ for some fixed $\mu_i$, and $ \sum_{i=1}^n \E[(X_i-\mu_i)^2|\mathcal{F}_{i-1}] \leq V$ for some fixed $V$. 
Denote $\mu\triangleq \frac{1}{n}\sum_{i=1}^n \mu_i$ and
let $\widehat{\mu}_{n,\alpha}$ be the Catoni's robust mean estimator of $X_1, \ldots, X_n$ with a fixed parameter $\alpha > 0$, that is,
$\widehat{\mu}_{n,\alpha}$ is the unique root of the function
\begin{align*}
      f(z) = \sum_{i=1}^n \psi(\alpha(X_i-z))
\end{align*}
where
\[
\psi(y) = \begin{cases}
\ln(1+y+y^2/2), &\text{if $y\geq0$,} \\
-\ln(1-y+y^2/2), &\text{else.}
\end{cases}
\]
Then for any $\delta\in (0,1)$,
as long as $n$ is large enough such that $n \geq \alpha^2(V + \sum_{i=1}^n (\mu_i-\mu)^2) + 2\log(1/\delta)$, 
we have with probability at least $1-2\delta$,
\begin{align*}
     |\widehat{\mu}_{n,\alpha} - \mu| \leq \frac{\alpha (V+\sum_{i=1}^n (\mu_i - \mu)^2)}{n} + \frac{2\log(1/\delta)}{\alpha n}.
\end{align*}
Choosing $\alpha$ optimally, we have
\begin{align*}
    |\widehat{\mu}_{n,\alpha} - \mu| \leq \frac{2}{n}\sqrt{2\left(V+\sum_{i=1}^n(\mu_i-\mu)^2\right)\log(1/\delta)}.
\end{align*}
In particular, if $\mu_1 = \cdots = \mu_n = \mu$, we have
\begin{align*}
     |\widehat{\mu}_{n,\alpha} - \mu| \leq \frac{2}{n}\sqrt{2V\log(1/\delta)}.
\end{align*}
\end{lemma}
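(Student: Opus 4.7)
My plan is to follow Catoni's original $M$-estimator argument, adapted to the martingale setting so that the conditional variance assumption on $\{X_i\}$ suffices in place of i.i.d. The two analytic facts I would lean on are $e^{\psi(y)}\le 1+y+y^2/2$ for all real $y$, and $\psi(-y)=-\psi(y)$; both follow directly from the piecewise definition of $\psi$. The first step is to build an exponential supermartingale: for any fixed $z\in\mathbb{R}$, set
\[
Z_i(z)\;\triangleq\;\exp\!\Bigl(\sum_{j\le i}\psi(\alpha(X_j-z))-\alpha\sum_{j\le i}(\mu_j-z)-\tfrac{\alpha^2}{2}\sum_{j\le i}\E\!\left[(X_j-z)^2\mid\mathcal{F}_{j-1}\right]\Bigr).
\]
Combining the $\psi$ bound with $1+t\le e^t$ and conditioning yields $\E[Z_i(z)\mid\mathcal{F}_{i-1}]\le Z_{i-1}(z)$, so $(Z_i(z))$ is a nonnegative supermartingale with $Z_0(z)=1$. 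Markov's inequality at level $1/\delta$, together with the identity $\E[(X_j-z)^2\mid\mathcal{F}_{j-1}]=\E[(X_j-\mu_j)^2\mid\mathcal{F}_{j-1}]+(\mu_j-z)^2$ that routes the conditional variance through $V$, then gives with probability at least $1-\delta$:
\[
f(z)\;\le\;\alpha\sum_i(\mu_i-z)+\tfrac{\alpha^2}{2}\bigl(V+\sum_i(\mu_i-z)^2\bigr)+\log(1/\delta)\;\triangleq\; U(z),
\]
and the odd symmetry of $\psi$, applied to the $-X_i$ variables, produces the matching lower tail $f(z)\ge\alpha\sum_i(\mu_i-z)-\tfrac{\alpha^2}{2}(V+\sum_i(\mu_i-z)^2)-\log(1/\delta)$ with probability at least $1-\delta$.

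The second step is to instantiate the two tails at $z_\pm=\mu\pm b$ with $b=\alpha(V+\sum_i(\mu_i-\mu)^2)/n+2\log(1/\delta)/(\alpha n)$. Using $\sum_i(\mu_i-\mu)=0$, I expand $\sum_i(\mu_i-z_+)^2=\sum_i(\mu_i-\mu)^2+nb^2$, and straightforward algebra collapses $U(z_+)$ to $\tfrac{\alpha^2 nb^2}{2}-\tfrac{\alpha^2}{2}(V+\sum_i(\mu_i-\mu)^2)-\log(1/\delta)$. The stated hypothesis $n\ge\alpha^2(V+\sum_i(\mu_i-\mu)^2)+2\log(1/\delta)$ is tailored precisely so that $\alpha b\le 1$, whence $\alpha^2 nb^2=n(\alpha b)^2\le n\alpha b=\alpha^2(V+\sum_i(\mu_i-\mu)^2)+2\log(1/\delta)$, making $U(z_+)\le 0$. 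Since $f$ is strictly decreasing in $z$ (as $\psi$ is increasing), $f(z_+)\le 0$ forces $\widehat{\mu}_{n,\alpha}\le z_+$. The symmetric instantiation at $z_-$ using the lower tail gives $\widehat{\mu}_{n,\alpha}\ge z_-$, and a union bound over the two events produces the main inequality. The optimally-tuned bound then follows by AM-GM applied to $b$, and the identical-means case is immediate from $\sum_i(\mu_i-\mu)^2=0$.

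The one delicate point is the bookkeeping in the previous paragraph: verifying that the quadratic-in-$b$ correction $\alpha^2 nb^2/2$ appearing in $U(z_+)$, which arises because the variance proxy is centered at $z_+$ rather than at $\mu$, is exactly offset by the other terms under the threshold on $n$. The apparently cryptic ``$+2\log(1/\delta)$'' in the hypothesis is present precisely to compensate for this quadratic slack; dropping it would force an extra lower-order term into $b$. Outside this calculation the argument is routine, and crucially the non-i.i.d.\ setting poses no obstacle, because the supermartingale is constructed one conditional step at a time and never invokes independence across~$i$.
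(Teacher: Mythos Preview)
The paper does not supply its own proof of this lemma: it is stated as an auxiliary result cited from \citep{wei2020taking}, with no argument given in the present paper. So there is no in-paper proof to compare against.

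Your proposal is correct and is exactly the standard Catoni argument adapted to the martingale setting, which is what the cited source establishes. The supermartingale construction, the use of $e^{\psi(y)}\le 1+y+y^2/2$ followed by $1+t\le e^t$ one conditional step at a time, the decomposition $\E[(X_j-z)^2\mid\mathcal F_{j-1}]=\E[(X_j-\mu_j)^2\mid\mathcal F_{j-1}]+(\mu_j-z)^2$, and the instantiation at $z_\pm=\mu\pm b$ are all sound. Your bookkeeping for $U(z_+)$ checks out: writing $W=V+\sum_i(\mu_i-\mu)^2$, one gets $\alpha nb=\alpha^2W+2\log(1/\delta)$, the hypothesis $n\ge \alpha^2W+2\log(1/\delta)$ gives $\alpha b\le 1$, hence $\alpha^2 nb^2=(\alpha b)(\alpha nb)\le \alpha^2W+2\log(1/\delta)$, and $U(z_+)\le 0$ follows. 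The symmetric bound via $\psi(-y)=-\psi(y)$ and the monotonicity of $f$ then pin $\widehat\mu_{n,\alpha}\in[z_-,z_+]$.
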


\end{document}